\def\thm@space@setup{%
\thm@preskip=12pt%
\thm@postskip=12pt}
\newtheoremstyle{thmstyleone}
{18pt plus2pt minus1pt}
{18pt plus2pt minus1pt}
{\itshape}
{0pt}
{\bfseries}
{.}
{.5em}
{\thmname{#1} \thmnumber{#2}%
  \thmnote{ {\the\thm@notefont(#3)}}}
\newtheoremstyle{thmstyletwo}
{18pt plus2pt minus1pt}
{18pt plus2pt minus1pt}
{\normalfont}
{0pt}
{\itshape}
{.}
{.5em}
{\thmname{#1} \thmnumber{#2}%
  \thmnote{ {\the\thm@notefont(#3)}}}
\newtheoremstyle{thmstylethree}
{18pt plus2pt minus1pt}
{18pt plus2pt minus1pt}
{\normalfont}
{0pt}
{\bfseries}
{.}
{.5em}
{\thmname{#1} \thmnumber{#2}%
  \thmnote{ {\the\thm@notefont(#3)}}}
\newtheoremstyle{thmstylefour}
{18pt plus2pt minus1pt}
{18pt plus2pt minus1pt}
{\normalfont}
{0pt}
{\itshape}
{.}
{.5em}
{\thmname{#1} \thmnumber{#2}%
  \thmnote{ {\the\thm@notefont(#3)}}}%
\theoremstyle{thmstyleone}%
\newtheorem{theorem}{Theorem}
\newtheorem{proposition}{Proposition}
\newtheorem{lemma}{Lemma}
\newtheorem{corollary}{Corollary}
\theoremstyle{thmstyletwo}%
\theoremstyle{thmstylethree}%
\newtheorem{definition}{Definition}%
\theoremstyle{thmstylefour}
\newtheorem{remark}{Remark}%
\DeclareMathOperator*{\argmin}{\arg\!min}
\begin{document}

\title[On the latent dimension of DL-ROMs for PDEs parametrized by random fields]{On the latent dimension of deep autoencoders for reduced order modeling of PDEs parametrized by random fields}


\author*[1]{\fnm{Nicola Rares}\sur{Franco}}\email{nicolarares.franco@polimi.it}

\author[1]{\fnm{Daniel}\sur{Fraulin}}\email{daniel.fraulin@mail.polimi.it}

\author[1]{\fnm{Andrea} \sur{Manzoni}}\email{andrea1.manzoni@polimi.it}

\author[1]{\fnm{Paolo} \sur{Zunino}}\email{paolo.zunino@polimi.it}

\affil[1]{\orgdiv{MOX, Department of Mathematics}, \orgname{Politecnico di Milano}, \orgaddress{\street{Piazza Leonardo da Vinci, 32}, \city{Milan}, \postcode{20133}, \state{Italy}}\vspace{-0.5em}}


\abstract{
Deep Learning is having a remarkable impact on the design of Reduced Order Models (ROMs) for Partial Differential Equations (PDEs), where it is exploited as a powerful tool for tackling complex problems for which classical methods might fail. In this respect, deep autoencoders play a fundamental role, as they provide an extremely flexible tool for reducing the dimensionality of a given problem by leveraging on the nonlinear capabilities of neural networks. Indeed, starting from this paradigm, several successful approaches have already been developed, which are here referred to as Deep Learning-based ROMs (DL-ROMs). Nevertheless, when it comes to stochastic problems parameterized by random fields, the current understanding of DL-ROMs is mostly based on empirical evidence: in fact, their theoretical analysis is currently limited to the case of PDEs depending on a finite number of (deterministic) parameters. The purpose of this work is to extend the existing literature by providing some theoretical insights about the use of DL-ROMs in the presence of stochasticity generated by random fields. In particular, we derive explicit error bounds that can guide domain practitioners when choosing the latent dimension of deep autoencoders. We evaluate the practical usefulness of our theory by means of numerical experiments, showing how our analysis can significantly impact the performance of DL-ROMs.}

\keywords{Deep Learning, Reduced Order Modeling, parametrized PDEs, random fields, autoencoders}


\pacs[MSC Classification]{ 35A35, 35B30, 65N30, 68T07}

\newcommand{\mup}{\boldsymbol{\mu}}
\newcommand{\mupi}{\mu_{n}}
\newcommand{\sigmapi}{\sigma_{p}}
\newcommand{\nup}{\boldsymbol{\nu}}
\newcommand{\operator}{\mathcal{G}}
\newcommand{\x}{\mathbf{x}}
\newcommand{\y}{\mathbf{y}}
\newcommand{\expe}{\mathbb{E}}
\newcommand{\cov}{\textnormal{Cov}}
\newcommand{\mis}{\mathcal{M}}
\newcommand{\encoders}{\mathcal{E}}
\newcommand{\decoders}{\mathcal{D}}
\newcommand{\wip}{\begin{center}\textit{... Work in progress ...}\end{center}}

\maketitle

\section{Introduction}
\label{sec:intro}
Aside from its striking impact on areas such as data science, language processing, and computer vision, Deep Learning is now becoming ubiquitous in all branches of science, with applications ranging from economics \cite{nelson2017stock, long2019deep, horvath2021deep} to medicine \cite{tandel2019review, massi2020deep, badre2021deep}, biology \cite{angermueller2016deep, wei2018deep, sato2021rna}, chemistry \cite{ziletti2018insightful, ryan2018crystal, schutt2018schnet}, physics \cite{hashimoto2018deep, dalda2019recovering, breen2020newton}, mathematics \cite{daubechies2022nonlinear, gribonval2022approximation, bartolucci2023understanding} and engineering \cite{lahivaara2018deep, rosafalco2021online, pichi2023artificial}. 
In most cases, researchers exploit Deep Learning to make up for our limited understanding and limited computational resources, typically trying to find a suitable compromise between domain knowledge and data-driven approaches. In this contribution, we shall focus our attention on a specific research area, typical of engineering applications, which concerns the development of Reduced Order Models (ROMs).

Simply put, ROMs are model surrogates that aim at replacing expensive numerical simulations with accurate approximations that are obtained at a reduced computational cost. From a practical point of view, ROMs can be remarkably helpful whenever dealing with real-time many-query applications, such as those characterizing digital twins \cite{kapteyn2021probabilistic}, optimal control \cite{ravindran2000reduced} and uncertainty quantification \cite{heinkenschloss2018conditional, cohen2022nonlinear}. As of today, the literature is filled with plenty of different ROM approaches, and whether to choose one or another is typically problem dependent. Here, we shall focus on Deep-Learning based ROMs (DL-ROMs), specifically addressing the framework proposed in \cite{fresca2021comprehensive,franco2022deep}, which leverages on the use of deep autoencoders and has already reported numerous successful applications, see, e.g., \cite{fresca2020deep, fresca2022deep, gobat2023reduced, cicci2023reduced}.
\\\\ 
The driving idea behind the DL-ROM approach is to exploit the nonlinear approximation capabilities of deep autoencoders to perform a suitable dimensionality reduction, allowing the representation of complex high-fidelity solutions as small vectors in some latent space of dimension $n$. In general, the choice of this latent dimension is problem specific and its value reflects the intrinsic properties of the so-called \textit{solution manifold}. In the case of PDEs depending on a finite number of deterministic parameters, this fact has already been thoroughly studied in \cite{franco2022deep}. There, the authors characterize the latent dimension of the DL-ROM by investigating the behavior of the so-called \textit{manifold n-width} \cite{devore1989optimal}. Given a compact parameter space $\Theta\subseteq\mathbb{R}^{p}$, a Hilbert state space $(V,\|\cdot\|)$ and a parameter-to-solution operator $\Theta\ni\mup\mapsto u_{\mup}\in V$, the latter can be written as
\begin{equation}
\label{eq:manifoldwidth}
\inf_{\substack{\Psi'\in\encoders(V,\mathbb{R}^{n})\\\Psi\in\decoders(\mathbb{R}^{n},V)}}\;\sup_{\mup\in\Theta}\|u_{\mup}-\Psi(\Psi'(u_{\mup}))\|
\end{equation}
where $\Psi':V\to\mathbb{R}^{n}$ and $\Psi:\mathbb{R}^{n}\to V$ are the encoder and decoder networks, respectively, each varying in a suitable class of admissible architectures correspondingly named $\encoders(V,\mathbb{R}^{n})$ and $\decoders(\mathbb{R}^{n},V)$.

The purpose of this work is to extend the analysis proposed in \cite{franco2022deep} to address the case of stochastic PDEs, where the deterministic parameters are replaced by some random field $\mu\sim\mathbb{P}$. In general, this extension presents two major challenges. First of all, the realizations of the input field $\mu$ might be arbitrarily large in norm, which removes any form of compactness and makes the arguments in \cite{franco2022deep} inapplicable.
On top of that, the stochasticity introduced by the random field makes the PDE formally depend on an infinite number of parameters, i.e. $p=+\infty$. Then, the bounds provided in \cite{franco2022deep} become meaningless.

To address these difficulties, we shall first replace the optimization problem in \eqref{eq:manifoldwidth} with its probabilistic counterpart, that is
\begin{equation}
\label{eq:probabilisticwidth}
\inf_{\substack{\Psi'\in\encoders(V,\mathbb{R}^{n})\\\Psi\in\decoders(\mathbb{R}^{n},V)}}\;\expe_{\mu\sim\mathbb{P}}\|u_{\mu}-\Psi(\Psi'(u_{\mu}))\|
\end{equation}
where $\expe$ is the expectation operator. Then, \eqref{eq:probabilisticwidth} measures the extent to which an autoencoder with latent dimension $n$ can approximate the high-fidelity solutions arising from the PDE. Our purpose is to characterize the quantitative behavior of \eqref{eq:probabilisticwidth} and thus provide practical insights that can guide domain practitioners in the complex design of DL-ROM architectures.

More precisely, we shall prove the following facts. If the PDE depends on a finite number of stochastic parameters, $p\in\mathbb{N}$, then the autoencoders can achieve arbitrary accuracy in any latent dimension $n\ge p$. In general, this result is much stronger than the one obtained in the deterministic setting, as the optimal bound in that case is $n\ge2p+1$: see Theorem 3 in \cite{franco2022deep} for further details. On the contrary, if the PDE depends on a general random field (informally, $p=+\infty$), then we can characterize the decay of the reconstruction error \eqref{eq:probabilisticwidth} in terms of the eigenvalues of the covariance operator of the input random field, $\mu$, and of the output field, $u_{\mu}$, respectively. In particular, our analysis shows how deep autoencoders can outperform the performance of linear methods by benefiting from the intrinsic regularities available in both fields. \\

The paper is organized as follows. First, in Section \ref{sec:setup}, we formally introduce the problem of reduced order modeling for stochastic PDEs, with a brief overview about the DL-ROM approach, while, in Section \ref{sec:preliminaries}, we present the mathematical tools that are needed for our construction. Our main contribution can be found in Section \ref{sec:theory}, where we derive several results about the latent dimension of deep autoencoders for PDEs parameterized by random fields. Finally, we devote Section \ref{sec:experiments} to the numerical experiments.

\section{Problem setup}
\label{sec:setup}
Let $\Omega\subset\mathbb{R}^{d}$ be a bounded domain. We are given a parametrized boundary value problem with random coefficients, e.g.
\begin{equation}
\label{eq:pde}
    \begin{cases}
\mathcal{A}_{\mu}u_{\mu}=f_{\mu}&\text{in}\;\Omega\\
        \mathcal{B}_{\mu}u_{\mu}=g_{\mu} &\text{on}\;\partial\Omega,
\end{cases}\end{equation}
with $\mathcal{A}_{\mu},\mathcal{B}_{\mu},f_{\mu},g_{\mu}$ parameter dependent operators and problem data, respectively. Here, $\mu$ is a suitable random field that parametrizes the PDE and introduces a corresponding form of stochasticity. For simplicity, we assume that $\mu$ is defined over $\Omega$, although such a restriction is not necessary. Notice also that we limit our attention to steady PDEs: for a deeper discussion about this aspect, we refer the reader to the remark at the end of this Section.

We assume to have at our disposal a trusted high-fidelity numerical solver -- possibly expensive -- that can approximate the solution of \eqref{eq:pde} for each fixed realization of the random field $\mu$. We further assume that all numerical simulations produced by such solver live in a common high-fidelity state space $V_{h}\subset L^{2}(\Omega)$ of dimension $\dim(V_{h})=N_{h}$. 
Let $\{\xi_{i}\}_{i=1}^{N_{h}}$ be a basis of $V_{h}$. Then, for each realization of $\mu$, the numerical solver provides a way to compute a set of basis coefficients 
$$\mathbf{u}_{\mu}^{h}:=[c_{\mu}^{(1)},\dots,c_{\mu}^{(N_{h})}]
\quad\text{such that}\quad
u_{\mu}\approx \sum_{i=1}^{N_{h}}c_{\mu}^{(i)}\xi_{i}.$$
In this sense, the numerical solver, also known as the Full Order Model (FOM), defines a solution operator
$$\operator_{h}:\mu\mapsto \mathbf{u}_{\mu}^{h}\in\mathbb{R}^{N_{h}}$$
in a very natural way. The efficient approximation of such an operator can be of great interest in many-query and real-time applications, where multiple calls to the FOM become computationally unbearable. Thus, the main interest becomes finding a cheaper surrogate $\mu\mapsto\Phi(\mu)$, called the reduced order model (ROM), such that $\Phi(\mu)\approx \mathbf{u}_{\mu}^{h}$. 
More precisely, let $\mathbb{P}$ denote the probability law of $\mu$, and let $\|\cdot\|_{V_{h}}$ be the norm over $\mathbb{R}^{N_{h}}$ induced by the $L^{2}$-norm over the state space $V_{h}$. Then, giving some tollerance $\varepsilon>0$, one seeks to construct a suitable ROM for which
$$\expe_{\mu\sim\mathbb{P}}\|\mathbf{u}_{\mu}^{h}-\Phi(\mu)\|_{V_{h}}<\varepsilon.$$

\subsection{Reduced order modeling and the DL-ROM approach}
\label{subsec:roms}
The development and construction of accurate ROMs is an extremely active area of research: as of today, the literature features a very broad spectrum of approaches to model order reduction, such as the Reduced Basis method \cite{negri2013reduced,quarteroni2015reduced,hesthaven2016certified,taumhas2023reduced} and its non-intrusive data-driven variations, e.g. \cite{hesthaven2018non, guo2019data}, adaptive ROMs \cite{amsallem2011online, carlberg2015adaptive, pagliantini2021dynamical, kazashi2023dynamically}, hybrid techniques based on closure modeling \cite{ivagnes2023hybrid, wang2020recurrent}, deep learning-based ROMs \cite{franco2022deep, fresca2021comprehensive, fresca2022pod, brivio2023error, pichi2023graph}, and many others, each with their own benefits and guarantees.

Here, we shall focus primarily on the case of Deep Learning based ROMs (DL-ROMs), following the framework introduced in \cite{fresca2021comprehensive,franco2022deep}. These can be of particular interest whenever: i) intrusive approaches are not available, either because they would entail expensive subroutines, such as hyperreduction, or because the FOM is not accessible; ii) linear methods based on, e.g., Principal Orthogonal Decomposition (POD), fail because of intrinsic complexities entailed by the solution operator (see, e.g., the well-known phenomenon of \textit{slow-decay in the Kolmogorov $n$-width} \cite{franco2022deep, ohlberger2015reduced, romor2023non}).

For simplicity, let us assume that the random field at input, $\mu$, has been discretized at the same level as PDE solutions, so that each realization $\mu\in L^{2}(\Omega)$ is formally replaced by some $\mup^{h}\in\mathbb{R}^{N_{h}}.$ In their original formulation, DL-ROMs are characterized by three deep neural network architectures, $\Psi',\Psi,\phi$, which operate as
\begin{equation*}
\Psi':\mathbb{R}^{N_h}\to\mathbb{R}^n,\quad\quad\Psi:\mathbb{R}^{n}\to\mathbb{R}^{N_{h}},\quad\quad\phi:\mathbb{R}^{N_h}\to\mathbb{R}^n, 
\end{equation*}
where $n\in\mathbb{N}$ is the so-called latent dimension of the model. The idea is that the first two networks are responsible for learning the fundamental characteristics that characterize PDE solutions; conversely, the third map, $\phi$, is left to learn the way in which the input $\mup^{h}$ affects the output $\mathbf{u}^{h}_{\mup^{h}}$. This is achieved by constructing the three networks so that
\begin{equation}
    \label{eq:dlromapprox1}
\Psi(\Psi'(\mathbf{u}^{h}_{\mup^{h}}))\approx\mathbf{u}^{h}_{\mup^{h}}\quad\text{and}\quad\phi(\mup^{h})\approx\Psi'(\mathbf{u}^{h}_{\mup^{h}}).
\end{equation}
Then, the parameter-to-solution operator is approximated as
\begin{equation}
    \label{eq:dlromapprox2}
\Phi(\mup^{h}):=\Psi(\phi(\mup^{h}))\approx \mathbf{u}_{\mup^{h}}^{h}.
\end{equation}
In this sense, the accuracy of the DL-ROM is ultimately determined by that of the combined network $\Phi:=\Psi\circ\phi$, while the role of $\Psi'$ is only auxiliary. However, such a splitting can be beneficial (see, e.g., Figure 22 in \cite{fresca2021comprehensive}) as it makes the two blocks, $\Psi$ and $\phi$, tackle the different complexities that characterize model order reduction: on the one hand, the spatial complexity of the solutions; on the other hand, the intricate dependency of the solutions with respect to the input parameters. In this sense, the presence of $\Psi'$ is fundamental, as it allows us to decouple the problem.

Following the conventions of the Deep Learning literature, we refer to the composition $\Psi'\circ\Psi$ as an \textit{autoencoder} architecture of latent dimension $n$, while the two maps, $\Psi'$ and $\Psi$, are referred to as encoder and decoder, respectively. The terminology comes from the fact that the two maps ultimately provide a way of representing high-fidelity solutions as small vectors in $\mathbb{R}^{n}$, in fact, in $\Psi(\Psi'(\mathbf{u}^{h}_{\mup^{h}}))\approx\mathbf{u}^{h}_{\mup^{h}}$ and $\Psi'(\mathbf{u}^{h}_{\mup^{h}})\in\mathbb{R}^{n}$. Then it is clear that the choice of the appropriate value of $n$ becomes fundamentally important. The purpose of this work is to provide additional information on this aspect, with a particular focus on the case of PDEs parametrized by random fields. In the next subsection, we shall further motivate this fact with a practical example. \\

For the sake of completeness, before coming to the main objective of this work, let us conclude this overview with a few words about the training and implementation of DL-ROMs. 
The first step consists of exploiting the FOM to generate a collection of trusted samples 
$$\{\mup_{i}^{h},\mathbf{u}_{i}^{h}\}_{i=1}^{N_{\text{train}}},$$
where $\mathbf{u}_{i}^{h}=\operator_{h}(\mup^{h})$, which serve as training data for the three networks in the DL-ROM pipeline. More precisely, once the architectures have been designed, the three modules are trained by minimizing a suitable loss function such as the one below,
\begin{align}
    \label{eq:loss}
    \mathcal{L}(\Psi,\Psi',\phi)= \frac{1}{N_{\text{train}}}\sum_{i=1}^{N_{\text{train}}}
    \Big(&\alpha_{1}\|\mathbf{u}_{i}^{h}-\Psi(\phi(\mup_{i}^{h}))\|_{V_{h}}^{2}
    + \\\nonumber&\alpha_{2}\|\mathbf{u}_{i}^{h}-\Psi(\Psi'(\mathbf{u}^{h}_{i}))\|_{V_{h}}^{2}
    + \\\nonumber&\alpha_{3}\|\Psi'(\mathbf{u}_{i}^{h})-\phi(\mup_{i}^{h})\|_{\mathbb{R}^{n}}^{2}\Big).
\end{align}
Here, $\alpha_{1},\alpha_{2},\alpha_{3}\ge0$ are suitable weights that are used to define the loss function. The idea is that by minimizing \eqref{eq:loss}, one would automatically enforce both \eqref{eq:dlromapprox1} and \eqref{eq:dlromapprox2}. It should be noted that since the term multiplied by $\alpha_{1}$ contains information about the actual accuracy of the ROM, the other two can be seen as regularizers. In general, training of the three networks can be achieved simultaneously or in multiple steps. For example, in \cite{fresca2021comprehensive}, the authors set $\alpha_{2}=0$ and proceed with a single training; conversely, in \cite{franco2022deep}, the authors propose a two-stage training phase: first with $\alpha_{1}=\alpha_{3}=0$ and then with $\alpha_{1}=\alpha_{2}=0$. Here, we shall keep all the weights active $\alpha_{1},\alpha_{2},\alpha_{3}>0$, and opt for a one-shot training routine.

After the training phase, the DL-ROM can efficiently approximate high-fidelity solutions in milliseconds. The quality of such an approximation is typically assessed by relying on a suitable \textit{test set}, that is, by comparing the outputs of the two models, the DL-ROM and the FOM, for new independent realizations of the input field.

\subsection{Choosing the latent dimension: a motivating example}
To further motivate our analysis and anticipate the practical impact of our results, let us look at a simple problem featuring a PDE with finitely many random parameters. As in \cite{franco2022deep}, we consider the following boundary value problem defined over the unit square $\Omega=(0,1)^{2}$
\begin{equation}
\label{eq:cookie}
-\nabla\cdot\left(\sigma_{\mup}\nabla u\right)=f_{\mup}  \quad\text{in}\;\Omega\end{equation}
complemented with a constant Dirichlet boundary condition, $u\equiv0.1$ on $\partial\Omega$. Here, the PDE depends on three random parameters, $\mup=[\mu_{1},\mu_{2},\mu_{3}]$, which affect the permeability field $\sigma=\sigma_{\mup}$ and the right-hand-side $f=f_{\mup}$. Those are defined as
$$\sigma_{\mup}(\x):=\frac{1}{2}+\mu_{1}\mathbbm{1}_{\Omega_{0}}(\x),\quad\quad f_{\mup}(\x):=\frac{1}{2\pi\epsilon^{2}}\exp\left(-\frac{1}{2\epsilon^{2}}|\x-[\mu_{2},\mu_{3}]^{T}|^{2}\right),$$
where $\epsilon:=0.01$ and $\Omega_{0}\subset\Omega$ is as in Figure \ref{fig:cookie}a. 
\begin{figure}
    \centering
    \includegraphics[width=\textwidth]{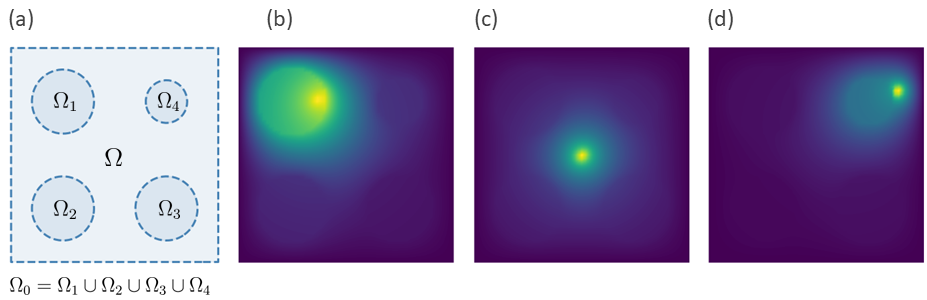}
    \caption{Domain of definition (a) and randomly generated PDE solutions (b-d) for Problem \eqref{eq:cookie}.}
    \label{fig:cookie}
\end{figure}
The first parameter, $\mu_{1}$, can attain any random value in $[1,4]$, and has the effect of modifying the permeability of the subdomains; conversely, $\mu_{2}$ and $\mu_{3}$ are responsible for the random location of the concentrated source $f_{\mup}$, and they are allowed to range from $0.1$ to $0.9$. More precisely, we endow the overall parameter space with a uniform probability distribution supported over $[1,4]\times[0.1,0.9]^{2}\subset \mathbb{R}^{3}$.

To construct our trusted high-fidelity solver, we rely on a Finite Element discretization of \eqref{eq:cookie} using continuous piecewise linear elements defined over a structured triangular grid of stepsize $h\approx0.0236$, which results in a FOM dimension of $N_{h}=3721$. We run the FOM multiple times to generate a total of 2000 random solutions (1500 for training and 500 for testing): we refer to Figure \ref{fig:cookie}b-e for a few examples. Our purpose is to exploit these data in order to explore the behavior of two different dimensionality reduction techniques. POD, which here stands as a representative of linear ROMs, and deep autoencoders, which, instead, are at the core of DL-ROMs.

Precisely, we compute the reconstruction error (estimated via classical Monte Carlo over the test set)
$$\mathbb{E}_{\mup\sim\mathbb{P}}\|\mathbf{u}_{\mup}^{h}-\Psi(\Psi'(\mathbf{u}_{\mup}^{h}))\|_{V_{h}},$$
obtained by employing POD (i.e., by letting $\Psi'\equiv\mathbf{V}^{T}$ and $\Psi\equiv\mathbf{V}$ be nothing but the POD projectors) or deep autoencoders (in which case both $\Psi$ and $\Psi'$ are DNNs). We repeat this computation for different choices of the latent dimension, namely $n=1,2,\dots,40$, to investigate the impact of the ROM dimension on the quality of the reconstruction. We mention that, while the POD matrix is uniquely determined by $n$, this is not the case for deep autoencoders, as users can play with several components of the architecture (number of layers, intermediate neurons, layer type, etc.). To highlight this fact, we train three different autoencoder architectures for each value of $n$: for technical details on their design, we refer the reader to the appendix, Section \ref{sec:architectures}. The results are shown in Figure \ref{fig:motivating}. 

\begin{figure}
    \centering
    \includegraphics[width=0.8\textwidth]{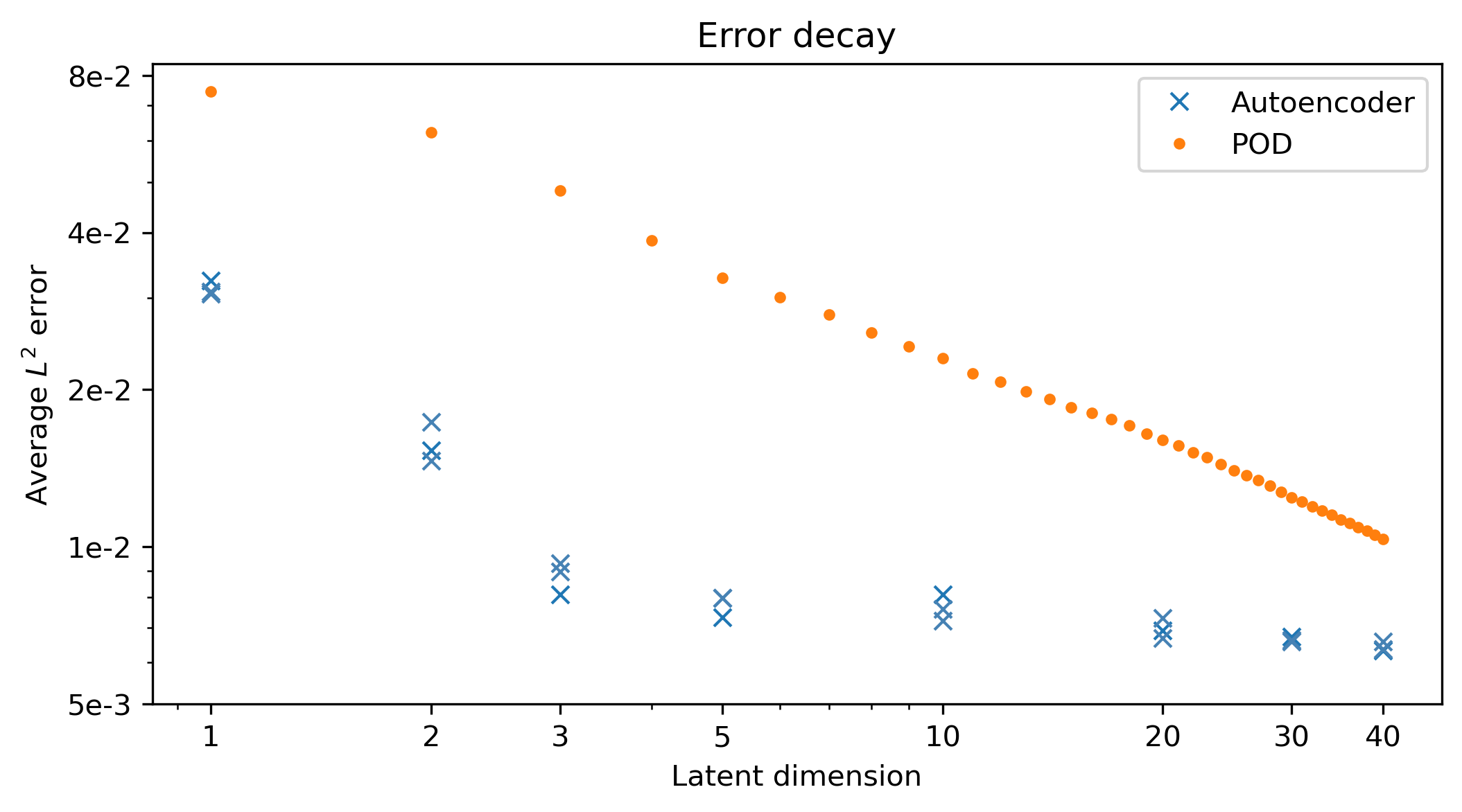}
    \caption{Error decay for Problem \eqref{eq:cookie} as a function of the ROM dimension: linear reduction through POD vs nonlinear reduction through autoencoders.}
    \label{fig:motivating}
\end{figure}

In general, autoencoders quickly outperform POD: in fact, their performances at $n=3$ remain unmatched even when the POD considers as much as $n=40$ latent variables. However, here we are not really interested in the actual values of the error, but rather in \textit{how} the error decays.

In the linear case, we can observe a very clear and stable trend, where the error decays as some power of the latent dimension $n$, namely $n^{-\gamma}$ for some $\gamma>0$. However, things change quite a bit when moving to deep autoencoders. Here, in fact, although the error always decreases as a function of $n$, the rate of such decay is not constant. Indeed, we can spot at least two different trends: first, we have a rapid decay from $n=1$ to $n=3$, which is then followed by a much slower one from $n=3$ on. In this sense, the latent dimension $n=3$ appears to be a turning point: after that, it is not really worth increasing the ROM dimension; rather, the performances could be improved by considering more complex architectures (see, e.g., the discussion in \cite{franco2022deep}, especially Section 4.1 and Figure 7).

Since for the case at hand the PDE depends on $p=3$ random parameters, it is natural to ask whether this is a simple coincidence. As we shall prove in Theorem \ref{theorem:finite}, this is not the case. It is shown here that when $n=p$, there is an ideal balance between reduction and precision. However, we must mention that several authors had already conjectured this fact and applied it as a rule of thumb; see, e.g., \cite{lee2020model,fresca2021comprehensive}; furthermore, in the case of finitely supported probability measures, some first insights were already provided in \cite{franco2022deep}.

Here, we extend the analysis proposed in \cite{franco2022deep}, showing that this optimality is preserved even if the parameter space is endowed with a probability measure with unbounded support. Furthermore, we shall generalize the idea to the case of random fields, where the PDE formally depends on the parameters $p=+\infty$. From a practical point of view, this will allow domain practitioners to choose the latent dimension of deep autoencoders beforehand, thus avoiding the tedious procedures based on trial and error. For instance, when facing a problem such as \eqref{eq:cookie}, one can safely let $n=3$ without having to repeat the analysis in Figure \ref{fig:motivating}.

\section{Preliminaries}
\label{sec:preliminaries}
In the current Section, the reader can find all the key properties and the mathematical concepts that are required for properly following the theoretical analysis proposed in Section \ref{sec:theory}. Precisely: in Section \ref{subsec:dnns}, we shall introduce the main ingredient of our recipe, i.e., DNN architectures; then, in Section \ref{subsec:local}, we familiarize ourselves with the notion of \textit{local variation}, a mathematical concept that we later use to characterize locally Lipschitz operators; finally, in Section \ref{subsec:gaussian}, we summarize several properties about Gaussian processes that are relevant to our analysis in Section
\ref{sec:theory}. 

In order to keep the paper self-contained, most of the proofs related to this Section have been postponed to the Appendix, Section \ref{sec:appendix:A}.

\subsection{Expressivity of Deep Neural Networks}
\label{subsec:dnns}
DNNs are computational units based on the composition of affine transformations and nonlinear activations, with the latter being applied componentwise on all vector entries. In the context of Deep Learning, in fact, it is very common to encounter the following notation
    \begin{equation}
        \label{eq:componentwise}
        \rho(\x):=[\rho(x_{1}),\dots,\rho(x_{n})],
    \end{equation}
where $\x\in\mathbb{R}^{n}$ and $\rho:\mathbb{R}\to\mathbb{R}$. Here, with little abuse of notation, we agree to adopt the same convention.

In particular, DNNs are obtained via the composition of several maps, called \textit{layers}, of the form $\x\to\rho(\mathbf{W}\x+\mathbf{b})$. Each layer is characterized by its own weight matrix $\mathbf{W}$, and its bias vector $\mathbf{b}$, two learnable parameters that are usually optimized during the training phase. All layers except the terminal layer (which usually comes without an activation function) are called \textit{ hidden layers}, and their total number defines the \textit{depth} of the architecture. In mathematical terms, we may synthesize these notions as follows.

\begin{definition}
    Let $\rho:\mathbb{R}\to\mathbb{R}$ and $m,n\in\mathbb{N}$. We define the family of layers from $\mathbb{R}^{m}$ to $\mathbb{R}^{n}$ with activation function $\rho$ as
    \begin{multline}
        \mathscr{L}_{\rho}(\mathbb{R}^{m},\mathbb{R}^{n})=\{f:\mathbb{R}^{m}\to\mathbb{R}^{n}\text{ s.t.}\;\;f(\x)=\rho(\mathbf{W}\x+\mathbf{b}),\;\mathbf{W}\in\mathbb{R}^{m\times n},\;\mathbf{b}\in\mathbb{R}^{n}\}.
    \end{multline}
    Similarly, we define the family of Deep Neural Networks (DNNs) from $\mathbb{R}^{m}\to\mathbb{R}^{n}$ with activation function $\rho$ as
    \begin{equation}
        \begin{aligned}
        \mathscr{N}_{\rho}(\mathbb{R}^{m},\mathbb{R}^{n})=\{f_{l+1}\circ\dots\circ f_{1},\text{ with }\;&f_{l+1}\in\mathscr{L}_{\mathbf{id}}(\mathbb{R}^{n_{l}},\mathbb{R}^{n}),\;l\ge1,\\&f_{i}\in\mathscr{L}_{\rho}(\mathbb{R}^{n_{i-1}}\mathbb{R}^{n_{i}}),\;i=1,\dots,l,\\&n_{i}\in\mathbb{N},\;\text{and}\;n_{0}=m\},
    \end{aligned}
    \end{equation}
    where $\mathbf{id}:\mathbb{R}\to\mathbb{R}$ is the identity map, $\mathbf{id}(x)=x.$
    \end{definition}

When embedded in classical functional spaces, DNNs can provide remarkable expressivity. In fact, with very few hypothesis on their activation function, DNNs become able to approximate continuous maps over compact sets, as well as integrable maps over finite measure spaces. These kinds of results are known as Universal Approximation Theorems. Here, since we are dealing with stochastic quantities and probability measures, we are interested in the Universal Approximation Theorem proved by Hornik in 1991 \cite{hornik1991approximation}. In particular, we report below a slightly different result, which is a direct consequence of Hornik's Theorem.

\begin{lemma}
    \label{lemma:density}
    Let $\mathbb{P}$ be a probability distribution on $\mathbb{R}^{p}$. Let $\rho:\mathbb{R}\to\mathbb{R}$ be a continuous map. Assume that either one of the following holds:
    \begin{itemize}
        \item [i)] $\rho$ is bounded and nonconstant;
        \item [ii)] $\rho$ is bounded from below and  $\rho(x)\to+\infty$ as $x\to+\infty$;
        \item [iii)] there exists some $a,b\in\mathbb{R}$ such that $x\to a\rho(x)+b\rho(-x)$ satisfies (ii).
    \end{itemize}
    Then, for every $\varepsilon>0$ and every measurable map $f:\mathbb{R}^{p}\to\mathbb{R}^{n}$
    with $\mathbb{E}_{\x\sim\mathbb{P}}|f(\x)|<+\infty$, there exists $\Phi\in\mathscr{N}_{\rho}(\mathbb{R}^{p},\mathbb{R}^{n})$ such that
    \begin{equation*}
        \mathbb{E}_{\x\sim\mathbb{P}}|f(\x)-\Phi(\x)|<\varepsilon.
    \end{equation*}
\end{lemma}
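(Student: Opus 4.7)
The plan is to deduce the lemma as a direct vector-valued corollary of Hornik's 1991 universal approximation theorem \cite{hornik1991approximation}. Under any of the hypotheses (i)--(iii), Hornik's theorem asserts that the scalar class $\mathscr{N}_{\rho}(\mathbb{R}^{p},\mathbb{R})$ is dense in $L^{1}(\mathbb{R}^{p},\mathbb{P})$ for every probability measure $\mathbb{P}$. Since the only difference with the statement above is that the target space is $\mathbb{R}^{n}$ rather than $\mathbb{R}$, the natural approach is to reduce to coordinates.

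First, I would decompose $f=(f_{1},\dots,f_{n})$. The componentwise inequality $|f_{i}(\x)|\le|f(\x)|$ and the integrability assumption imply $f_{i}\in L^{1}(\mathbb{R}^{p},\mathbb{P})$, so Hornik's theorem provides scalar DNNs $\Phi_{i}\in\mathscr{N}_{\rho}(\mathbb{R}^{p},\mathbb{R})$ with $\mathbb{E}_{\x\sim\mathbb{P}}|f_{i}(\x)-\Phi_{i}(\x)|<\varepsilon/n$ for every $i=1,\dots,n$. Setting $\Phi:=(\Phi_{1},\dots,\Phi_{n})$ and using the crude bound $|f(\x)-\Phi(\x)|\le\sum_{i=1}^{n}|f_{i}(\x)-\Phi_{i}(\x)|$, taking expectations yields the desired inequality $\mathbb{E}_{\x\sim\mathbb{P}}|f(\x)-\Phi(\x)|<\varepsilon$.

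The only nontrivial point is to verify that $\Phi$, as defined above, genuinely belongs to $\mathscr{N}_{\rho}(\mathbb{R}^{p},\mathbb{R}^{n})$ in the sense of the definition in Section \ref{subsec:dnns}. Since the individual $\Phi_{i}$ may have different depths $L_{i}$, I would first align them to a common depth $L:=\max_{i}L_{i}$ and then merge them into a single architecture by placing their weight matrices and biases on the block diagonal of each layer, all fed from the common input $\x$. Aligning depths requires padding the shallower networks with ``near-identity'' $\rho$-layers: thanks to the non-affine, continuous nature of $\rho$ enforced by (i)--(iii), on any compact set $K$ and for any $\delta>0$ one can choose scalars $(\alpha,\beta,c,d)$ so that $\x\mapsto\alpha\rho(\beta\x+c)+d$ approximates the identity to within $\delta$ on $K$. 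After truncating both $f$ and the $\Phi_{i}$ outside a sufficiently large ball (justified by Markov's inequality applied to the integrable functions at hand), this padding distorts the $L^{1}$ error by an arbitrarily small amount and can be absorbed into the $\varepsilon/n$ budget. I expect this depth-alignment step to be the main technicality; however, it is a standard trick in the DNN approximation literature, so the overall proof remains short once Hornik's theorem is cited.
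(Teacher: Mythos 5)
The core premise of your proposal is incorrect: Hornik's 1991 theorem does \emph{not} assert $L^{1}(\mathbb{P})$-density under hypotheses (ii) or (iii). Hornik's $L^{p}(\mu)$-density result is stated and proved only for activation functions that are \emph{bounded} and nonconstant, i.e., case (i). Unbounded activations such as the ReLU, leaky ReLU, SELU, or swish --- precisely the maps motivating cases (ii) and (iii) --- lie outside its scope, and establishing density for them is exactly the content of the lemma; your proof never addresses it. The paper's proof reduces (ii) and (iii) to (i) by algebraic constructions that stay inside the class $\mathscr{N}_{\rho}$: under (ii) one checks that $\sigma(x):=\rho(-\rho(x)+\alpha)$ is, for a suitably chosen shift $\alpha$, a \emph{bounded nonconstant} continuous activation (boundedness because $-\rho(x)+\alpha$ takes values in a half-line $(-\infty,\alpha-A]$ on which $\rho$ is bounded; nonconstancy by choosing $\alpha$ so that $\sigma(0)$ exceeds $\limsup_{z\to-\infty}\rho(z)$), and a $\sigma$-layer is two $\rho$-layers, so $\mathscr{N}_{\sigma}\subseteq\mathscr{N}_{\rho}$ and case (i) applies. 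Under (iii) one sets $\tilde{\rho}(x):=a\rho(x)+b\rho(-x)$, which satisfies (ii) by assumption, and a $\tilde{\rho}$-layer is realized by a $\rho$-layer of doubled width followed by a linear map absorbed into the next weight matrix, so again $\mathscr{N}_{\tilde{\rho}}\subseteq\mathscr{N}_{\rho}$. Without some argument of this kind, your proof covers only one of the three hypotheses.

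On a secondary point, the depth-alignment step you flag as the ``main technicality'' is a non-issue. Hornik's theorem produces \emph{one-hidden-layer} networks, so the componentwise approximants $\Phi_{1},\dots,\Phi_{n}$ all already have depth $l=1$: one simply concatenates their hidden layers, forms a block-diagonal output weight matrix, stacks the output biases, and immediately obtains a member of $\mathscr{N}_{\rho}(\mathbb{R}^{p},\mathbb{R}^{n})$. No near-identity padding, truncation, or Markov-inequality step is needed, and introducing them gratuitously complicates the argument. The componentwise reduction itself is a sound way to make explicit the passage from scalar-valued to vector-valued approximation that the paper leaves implicit.
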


\begin{proof} See Appendix \ref{sec:appendix:A}.   
\end{proof}

\begin{remark}
    The result in Lemma \ref{lemma:density} applies to most activation functions used in deep learning applications. For instance, the statement holds for maps such as the sigmoid or the tanh activation, as they satisfy (i), but it also holds for other popular maps, such as the ReLU, the SELU and the swish activation, which are lower-bounded. Finally, the Lemma can also be applied to maps such as the $\alpha$-leaky ReLU activation,
    \begin{equation}
        \label{eq:leakyReLU}
        \rho(x)=\begin{cases}
        x & x\ge0 \\
        \alpha x & x<0
    \end{cases},\end{equation}
    where $|\alpha| < 1.$ In fact, the latter satisfies (iii) with $a=1$ and $b=-\alpha$. 
\end{remark}

\subsection{Local variation of (nonlinear) operators}
\label{subsec:local}
As we mentioned in the Introduction, the manifold $n$-width, Eq. \eqref{eq:manifoldwidth}, provides a suitable way to analyze the reconstruction error of autoencoders in a (deterministic) parametric setting, provided that the parameter space is bounded. In that case, previous work has shown that the continuity of the operator is not enough to guarantee a reasonable behavior of the manifold $n$-width, while the stronger assumption of Lipschitz continuity suffices (see Theorem 3 and the corresponding remark in \cite{franco2022deep}). 

However, when considering a stochastic scenario, the realizations of the input field may be arbitrarily large in norm. For example, one may have $\text{supp}(\mu)=L^{q}(\Omega)$ when $\mu$ is a random field, or $\text{supp}(\mup)=\mathbb{R}^{p}$ when $\mup$ is a random vector. Then, requiring the operator $\operator$ to be (globally) Lipschitz continuous becomes too restrictive if one wants to derive any meaningful result on the matter. For instance, even in the finite rank case, an extremely regular operator such as $\operator(\mup)=|\mup|^{2}$ would not fit the assumptions. For this reason, it becomes convenient to consider alternative properties such as \textit{local} Lipschitz continuity, where the Lipschitz constant of $\mathcal{G}$ is allowed to change from point to point (clearly, the rate of this change should also be taken into account). All these considerations bring us to the definition below.

\begin{definition} 
\textnormal{\textbf{(Local variation)}}
\label{def:locvar}
    Let $\operator:(W,\|\cdot\|_{W})\to(V,\|\cdot\|_{V})$ be an operator between normed spaces. We define its local variation as the map $\|\partial\operator\|_{V}:W\to[0,+\infty]$ given by
    \begin{multline}
        \|\partial\operator\|_{V}(w):=\limsup_{h\to0}\;\frac{\|\operator(w+h) - \operator(w) \|_{V}}{\|h\|_{W}}=\\
        =\lim_{r\to0^{+}}\;\sup_{0< \|h\|_{W} \le r} \frac{\|\operator(w+h) - \operator(w) \|_{V}}{\|h\|_{W}}.
    \end{multline}
\end{definition}

Ultimately, the \textit{local variation} provides a way to bound the Lipschitz constant 
near every point of the input space, and it can be computed for any (nonlinear) operator. In fact, our definition can be traced back to the notion of \textit{absolute condition number}, which is a concept commonly encountered in error and numerical analysis; see, e.g., \cite{quarteroni2010numerical}. Here, however, we insist on using a different notation to better reflect our context and avoid any sort of ambiguity.


The concept of local variation can be further understood by considering its relationship to other well-established mathematical concepts, such as Lipschitz continuity and Frechét differentiability. These facts are briefly summarized by the Proposition below.

\begin{proposition}
    \label{prop:lip}
    Let $\operator:(W,\|\cdot\|_{W})\to(V,\|\cdot\|_{V})$ be an operator between two normed spaces. Then, the following properties hold: 
    \begin{itemize}
        \item [i)] $\|\partial\operator\|_{V}(w)<+\infty$ for all $w\in K\subset W$ $\iff$ $\operator$ is locally Lipschitz over $K$;\\
        
        \item [ii)] if $K\subset W$ is compact and $\|\partial\operator\|_{V}(w)<+\infty$ for all $w\in K$, then 
        $\mathcal{G}$ is Lipschitz over $K$;\\

        \item [iii)] if $C\subseteq W$ is convex, then 
        $$L_{C}:=\sup_{w\in C}\|\partial\operator\|_{V}(w)<+\infty$$
        if and only if $\operator$ is $L_{C}$-Lipschitz over $C$.\\

        \item [iv)] if $\operator$ is Fréchet differentiable at $w\in W$, then $\|\partial\operator\|_{V}(w)$ coincides with the operator norm of the Fréchet derivative of $\operator$ at $w.$\\

        \item [v)] given any $\mathcal{F}:(V,\|\cdot\|)_{V}\to(Y,\|\cdot\|_{Y})$, one has the chain-rule inequality
        $$\|\partial(\mathcal{F}\circ \operator)\|_{Y}(w)\le\|\partial \mathcal{F}\|_{Y}(\operator(w))\cdot\|\partial\operator\|_{V}(w),$$
        for all $w\in W.$
    \end{itemize}
\end{proposition}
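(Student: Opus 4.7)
The approach is to unpack the $\limsup$ in the definition of the local variation and chain elementary estimates; the substantive analytic content is concentrated in claim (iii), while (i), (iv), (v) are essentially one-line consequences of the definitions, and (ii) combines (iii) with a compactness argument. Specifically, (i) amounts to rewriting the limsup bound as the existence of a radius $r>0$ and a constant $C$ such that $\|\operator(w+h)-\operator(w)\|_V\le C\|h\|_W$ for $\|h\|_W\le r$, which is precisely the pointwise local Lipschitz condition; (iv) follows by substituting the Fréchet expansion $\operator(w+h)=\operator(w)+D\operator(w)h+o(\|h\|_W)$ into the defining ratio, after which the remainder drops out and the limsup reduces to the operator norm $\sup_{\|h\|_W=1}\|D\operator(w)h\|_V$; and (v) follows from the bound
\begin{equation*}
\|\mathcal{F}(\operator(w+h))-\mathcal{F}(\operator(w))\|_Y\le (\|\partial\mathcal{F}\|_Y(\operator(w))+\varepsilon_1)\|\operator(w+h)-\operator(w)\|_V
\end{equation*}
for $h$ small, chained with $\|\operator(w+h)-\operator(w)\|_V\le (\|\partial\operator\|_V(w)+\varepsilon_2)\|h\|_W$, and then letting $\varepsilon_1,\varepsilon_2\to 0^+$.

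The substantive step is (iii). The reverse implication is immediate from (i). For the forward direction, assume $L_C<\infty$, fix $w_1,w_2\in C$, and parametrize the segment $\gamma(t):=(1-t)w_1+tw_2$, which stays in $C$ by convexity. I would study the scalar function $\phi(t):=\|\operator(\gamma(t))-\operator(w_1)\|_V$ on $[0,1]$, which is continuous since finite local variation forces pointwise continuity of $\operator$. The reverse triangle inequality combined with the local variation bound at $\gamma(t_0)$ yields, for $\delta>0$ sufficiently small,
\begin{equation*}
\phi(t_0+\delta)-\phi(t_0)\le\|\operator(\gamma(t_0)+\delta(w_2-w_1))-\operator(\gamma(t_0))\|_V\le(L_C+\varepsilon)\,\delta\,\|w_2-w_1\|_W,
\end{equation*}
so the upper right Dini derivative of $\phi$ satisfies $D^+\phi(t_0)\le L_C\|w_2-w_1\|_W$ for every $t_0\in[0,1)$. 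The classical Dini-derivative version of the fundamental theorem of calculus for continuous functions on a compact interval then gives $\phi(1)\le L_C\|w_2-w_1\|_W$, which is exactly the sought Lipschitz estimate.

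Claim (ii) is obtained by chaining (iii) with compactness: around each $w\in K$ the limsup definition provides a radius on which the pointwise local variation remains uniformly bounded, (iii) applied on the resulting convex ball yields a local Lipschitz constant, a finite subcover of $K$ gives a single uniform bound, and a piecewise-linear chaining argument converts these local Lipschitz bounds into a global one on $K$. The main obstacle throughout lies in step (iii): the $\varepsilon$-term in the Dini-derivative estimate depends a priori on $t_0$, and one must argue that on the compact segment image $\gamma([0,1])$ this dependence can be rendered uniform, via a covering of $[0,1]$ by intervals on which the limsup bound has been realized within a prescribed tolerance. Once that uniformization is in place, the remaining claims chain together in a routine way.
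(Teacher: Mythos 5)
Your arguments for (i), (iv), and (v) match the paper's in spirit and are correct. For (iii) you take a genuinely different route: the paper first establishes (ii) by a compactness-and-subsequences contradiction and then proves (iii) by covering the segment with finitely many balls on which the limsup bound has been realized and chaining through the overlaps; you instead study the scalar function $\phi(t)=\|\operator(\gamma(t))-\operator(w_1)\|_V$, bound its upper Dini derivative pointwise by $L_C\|w_2-w_1\|_W$, and invoke the Dini mean-value inequality for continuous functions. This is cleaner, and it sidesteps a subtlety in the chaining route, namely that the limsup at a center $w$ controls only centered differences $\|\operator(w+h)-\operator(w)\|_V$ and not differences between two off-center points in the same ball. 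Your closing worry about making the $\varepsilon$-term uniform along the segment is unfounded: $D^+\phi(t_0)$ is itself a pointwise limsup, so $\varepsilon$ is dispensed with at each $t_0$ separately, and the Dini mean-value inequality needs no uniformity at all. Your proof of (iii) is complete as stated.

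The genuine gap is in (ii). You claim that ``around each $w\in K$ the limsup definition provides a radius on which the pointwise local variation remains uniformly bounded,'' and then apply (iii) on that ball. This is false: finiteness of $\|\partial\operator\|_V(w)$ controls only the centered quotients $\|\operator(w+h)-\operator(w)\|_V/\|h\|_W$ at $w$ itself and says nothing about $\|\partial\operator\|_V(v)$ for nearby $v$. Take $\operator(x)=x^2\sin(1/x^2)$ for $x\neq0$, $\operator(0)=0$, $K=[0,1]$: then $\|\partial\operator\|(0)=0$, whereas $\|\partial\operator\|(x)=|2x\sin(1/x^2)-(2/x)\cos(1/x^2)|$ is finite at each $x>0$ but blows up along $x_k=(2k\pi)^{-1/2}\to0^+$. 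There is no ball around $0$ on which $\|\partial\operator\|$ is bounded, so the hypothesis of (iii) fails on every such ball and your chaining never gets started. The paper handles (ii) with a different device entirely, a subsequence contradiction that does not invoke (iii), so this step needs its own idea; and the same scalar example shows that the step is genuinely delicate, since pointwise finiteness of the local variation on a compact set gives no control whatsoever on its behaviour in a neighbourhood of any given point, and any correct argument has to confront this head-on.
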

\begin{proof}
See Appendix \ref{sec:appendix:A}.
\end{proof}

As a straightforward consequence of Proposition \ref{prop:lip}, we also have the following Corollary, which can be thought of as a form of Taylor-Lagrange inequality.

\begin{corollary}
    \label{cor:lipgen}
    Let $\operator:(W,\|\cdot\|_{W})\to(V,\|\cdot\|_{V})$ be any operator between two normed spaces. Then, for all $w,w'\in W$ we have
    \begin{equation}
        \label{eq:lipgen}
        \|\operator(w)-\operator(w')\|_{V}\le
        \left(\sup_{0\le t\le 1}\|\partial\operator\|_{V}(tw+(1-t)w')\right)\|w-w'\|_{W}.
    \end{equation}
\end{corollary}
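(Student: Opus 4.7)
The plan is to reduce the statement to a direct invocation of Proposition \ref{prop:lip}(iii), by restricting attention to the straight line segment joining $w$ and $w'$. The key observation is that this segment is a convex subset of $W$, hence fits exactly the hypothesis of the only part of Proposition \ref{prop:lip} that equates a pointwise bound on the local variation to a global Lipschitz estimate.

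First, I would dispose of the trivial case in which the supremum on the right-hand side of \eqref{eq:lipgen} is $+\infty$: then the bound holds vacuously (with the usual convention in the degenerate case $w=w'$, for which both sides are zero). Assume therefore that
$$L := \sup_{0 \le t \le 1}\|\partial\operator\|_{V}\bigl(tw+(1-t)w'\bigr) < +\infty.$$
Next, I would introduce the segment $C := \{tw + (1-t)w' : t \in [0,1]\} \subseteq W$, which is convex as the affine image of $[0,1]$. By definition of $C$, one has $\sup_{z\in C}\|\partial\operator\|_{V}(z) = L < +\infty$, so the hypotheses of Proposition \ref{prop:lip}(iii) are met.

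Applying that result, $\operator$ is $L$-Lipschitz on $C$. Since $w,w' \in C$, the chain of implications closes and yields
$$\|\operator(w) - \operator(w')\|_{V} \le L\,\|w - w'\|_{W},$$
which is \eqref{eq:lipgen}. There is essentially no obstacle to overcome: the Corollary is a packaging of Proposition \ref{prop:lip}(iii) for the particular convex set given by a line segment, so the entire substance of the proof has already been absorbed in that earlier result. The only delicate bookkeeping concerns the edge case of an infinite supremum and the trivial case $w = w'$, both of which are handled without effort.
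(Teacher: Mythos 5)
Your proof is correct and follows exactly the same route as the paper's: identify the segment joining $w$ and $w'$ as a convex set, dispose of the trivial $L=+\infty$ case, and invoke Proposition \ref{prop:lip}(iii) to obtain $L$-Lipschitz continuity on that segment. Nothing to add.
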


\begin{proof}
Given $w,w'\in W$, let $K$ be the segment between the two points and define $L:=\sup_{v\in K}\|\partial\operator\|_{V}.$ If $L=+\infty$, then \eqref{eq:lipgen} is obvious; conversely, if $L<+\infty$, then the conclusion follows by (iii) in Proposition \ref{prop:lip}.
\end{proof}

Equation \eqref{eq:lipgen} provides a way to control the discrepancy between two different outputs of the operator, but it may also be applied to derive growth conditions. For example, it can be shown that if the local variation grows at most exponentially, then so does the operator. We formalize this consideration below.

\begin{corollary}
    \label{cor:exp}
    Let $\operator:(W,\|\cdot\|_{W})\to(V,\|\cdot\|_{V})$ be an operator between two normed spaces. The following holds true,
    \begin{multline*}
        \exists M,\beta>0\textnormal{\;\;s.t.\;\;}
        \|\partial\operator\|_{V}(w)\le Me^{\beta\|w\|_{W}}\;\;\forall w\in W,\\
        \implies
        \exists \tilde{M},\gamma>0\textnormal{\;\;s.t.\;\;}
        \|\operator(w)\|_{V}\le \tilde{M}e^{\gamma\|w\|_{W}}\;\;\forall w\in W.
    \end{multline*}
\end{corollary}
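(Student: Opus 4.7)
The plan is to reduce the growth estimate for $\operator$ itself to the growth estimate for $\|\partial\operator\|_V$ by invoking the mean value type inequality of Corollary \ref{cor:lipgen}. First I would fix a reference point (the natural choice is $w'=0$) and apply Corollary \ref{cor:lipgen} to obtain
$$\|\operator(w)-\operator(0)\|_V\le\Bigl(\sup_{0\le t\le1}\|\partial\operator\|_V(tw)\Bigr)\|w\|_W.$$
This immediately gives $\|\operator(w)\|_V\le\|\operator(0)\|_V+\|w\|_W\sup_{0\le t\le 1}\|\partial\operator\|_V(tw)$, so the whole problem reduces to bounding the supremum along the segment $[0,w]$.

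Next I would insert the hypothesis. Since $\|tw\|_W=t\|w\|_W\le\|w\|_W$ for $t\in[0,1]$, the assumption $\|\partial\operator\|_V(v)\le Me^{\beta\|v\|_W}$ yields
$$\sup_{0\le t\le1}\|\partial\operator\|_V(tw)\le Me^{\beta\|w\|_W}.$$
Plugging this into the previous inequality gives the preliminary bound
$$\|\operator(w)\|_V\le\|\operator(0)\|_V+M\|w\|_W e^{\beta\|w\|_W}.$$

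The only remaining point is to absorb the polynomial factor $\|w\|_W$ and the constant $\|\operator(0)\|_V$ into a purely exponential expression. For this I would pick any $\gamma>\beta$; then the elementary bound $xe^{\beta x}\le C e^{\gamma x}$ for $x\ge0$ (with $C:=\sup_{x\ge 0}xe^{-(\gamma-\beta)x}=1/[e(\gamma-\beta)]$) gives $M\|w\|_W e^{\beta\|w\|_W}\le MC e^{\gamma\|w\|_W}$, and the constant term $\|\operator(0)\|_V$ is trivially dominated by $\|\operator(0)\|_V e^{\gamma\|w\|_W}$. Setting $\tilde{M}:=\|\operator(0)\|_V+MC$ concludes the proof.

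I do not foresee a real obstacle here: the argument is essentially a one-line application of Corollary \ref{cor:lipgen} followed by elementary calculus. The only mild subtlety to flag in writing is that one cannot keep the same exponent $\beta$ in the conclusion (because of the extra factor $\|w\|_W$), which is why the statement allows $\gamma$ to differ from $\beta$.
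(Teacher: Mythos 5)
Your proof is correct and follows essentially the same route as the paper: apply Corollary \ref{cor:lipgen} on the segment $[0,w]$, plug in the exponential bound on $\|\partial\operator\|_V$, and absorb the extra factor $\|w\|_W$ (and the constant $\|\operator(0)\|_V$) into a larger exponential. The only cosmetic difference is that the paper fixes $\gamma=\beta+1$ via the crude inequality $a<e^{a}$, whereas you allow any $\gamma>\beta$ with the sharper constant $C=1/[e(\gamma-\beta)]$; the substance is identical.
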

\begin{proof}
    Let $c:=\operator(0)$ and fix any $w\in W$. By Corollary \ref{cor:lipgen}, we have
    \begin{multline*}
        \|\operator(w)\|_{V}\le c + \|\operator(w)-\operator(0)\|_{V}
        \le c +\left(\sup_{0\le t\le 1}\|\partial\operator\|_{V}(tw)\right)\|w\|_{W}\le\\
        \le c +\left(\sup_{0\le t\le 1}Me^{\beta\|tw\|_{W}}\right)\|w\|_{W} = c+Me^{\beta\|w\|_{W}}\|w\|_{W}.
    \end{multline*}
Since $a < e^{a}$ for all $a\in\mathbb{R}$, and $c\le ce^{b}$ for all $b\ge0$, we have
    \begin{equation*}
        \|\operator(w)\|_{V}\le\dots\le (c+M)e^{(\beta+1)\|w\|_{W}}. \qedhere
    \end{equation*}
\end{proof}

\noindent To conclude, we present a practical example of an operator whose local variation grows at most exponentially. Despite its simplicity, we believe this example to be of high interest, as it describes the case of a stochastic Darcy flow, which, in particular, has direct implications in the study of porous media.

\begin{proposition}
    \label{prop:darcy}
    Let $\Omega\subset\mathbb{R}^{d}$ be a bounded domain with Lipschitz boundary, and let $f\in H^{-1}(\Omega)$ be given. For any $\sigma\in L^{\infty}(\Omega)$, let $u=u_{\sigma}$ be the solution to the following boundary value problem,
    \begin{equation}
    \begin{cases}
    -\nabla\cdot(e^{\sigma}\nabla u)=f &  \textnormal{in}\;\Omega\\
            u=0 & \textnormal{on}\;\partial\Omega.
    \end{cases}\end{equation}
    Let $\operator: L^{\infty}(\Omega)\to L^{2}(\Omega)$ be the operator that maps $\sigma\mapsto u$. Then, for all $\sigma,\sigma'\in L^{\infty}(\Omega)$
  \begin{equation}
    \label{eq:darcybound}
      \|\operator(\sigma)-\operator(\sigma')\|_{L^{2}(\Omega)}\le C\|f\|_{H^{-1}(\Omega)} e^{3\|\sigma\|_{L^{\infty}(\Omega)}+3\|\sigma'\|_{L^{\infty}(\Omega)}}\|\sigma-\sigma'\|_{L^{\infty}(\Omega)}
  \end{equation}
    and, in particular, \begin{equation*}
    \|\partial\operator(\sigma)\|_{L^{2}(\Omega)}\le C\|f\|_{H^{-1}(\Omega)}e^{6\|\sigma\|_{L^{\infty}(\Omega)}},\end{equation*}
    where $C=C(\Omega)$ is some positive constant.
\end{proposition}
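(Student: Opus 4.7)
The plan is to use the standard energy technique for elliptic problems with variable coefficients, combined with the elementary estimate $|e^{a}-e^{b}|\le e^{\max(|a|,|b|)}|a-b|$ for the nonlinearity in $\sigma$, and then extract the local variation by letting $\sigma'\to\sigma$.

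First, I would set $u=\operator(\sigma)$, $u'=\operator(\sigma')$, and $w:=u-u'\in H^{1}_{0}(\Omega)$. Subtracting the two PDEs satisfied by $u$ and $u'$ and regrouping, one sees that $w$ is a weak solution of
\begin{equation*}
-\nabla\cdot(e^{\sigma}\nabla w)=\nabla\cdot\big((e^{\sigma}-e^{\sigma'})\nabla u'\big)\quad\text{in }\Omega,\qquad w=0\;\text{ on }\partial\Omega.
\end{equation*}
Testing against $w$ itself and using the lower bound $e^{\sigma}\ge e^{-\|\sigma\|_{L^{\infty}}}$ gives
\begin{equation*}
e^{-\|\sigma\|_{L^{\infty}}}\|\nabla w\|_{L^{2}}^{2}\le \|e^{\sigma}-e^{\sigma'}\|_{L^{\infty}}\,\|\nabla u'\|_{L^{2}}\,\|\nabla w\|_{L^{2}}.
\end{equation*}

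Second, I would control each factor separately. The mean-value inequality $|e^{a}-e^{b}|\le e^{\max(|a|,|b|)}|a-b|\le e^{\|\sigma\|_{L^{\infty}}+\|\sigma'\|_{L^{\infty}}}|a-b|$ immediately yields $\|e^{\sigma}-e^{\sigma'}\|_{L^{\infty}}\le e^{\|\sigma\|_{L^{\infty}}+\|\sigma'\|_{L^{\infty}}}\|\sigma-\sigma'\|_{L^{\infty}}$. For $\|\nabla u'\|_{L^{2}}$, a standard Lax--Milgram / energy estimate on the PDE for $u'$ together with the duality $|\langle f,u'\rangle|\le\|f\|_{H^{-1}}\|\nabla u'\|_{L^{2}}$ (with the $H^{-1}$ norm being the one dual to $H^{1}_{0}$ equipped with $\|\nabla\cdot\|_{L^{2}}$, so that the Poincaré constant is absorbed into the dependence on $\Omega$) gives $\|\nabla u'\|_{L^{2}}\le e^{\|\sigma'\|_{L^{\infty}}}\|f\|_{H^{-1}}$. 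Combining these three ingredients produces
\begin{equation*}
\|\nabla w\|_{L^{2}}\le e^{2\|\sigma\|_{L^{\infty}}+2\|\sigma'\|_{L^{\infty}}}\|f\|_{H^{-1}}\|\sigma-\sigma'\|_{L^{\infty}},
\end{equation*}
and a final application of the Poincaré inequality to pass from $\|\nabla w\|_{L^{2}}$ to $\|w\|_{L^{2}}$ yields \eqref{eq:darcybound} (with a constant $C=C(\Omega)$ and room to spare in the exponent, which comfortably fits the stated $3+3$).

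Finally, the estimate on the local variation follows directly from Definition \ref{def:locvar}: substituting $\sigma'=\sigma+h$ into \eqref{eq:darcybound}, dividing by $\|h\|_{L^{\infty}}$, and taking $\limsup_{h\to 0}$, the exponential prefactor converges to $e^{6\|\sigma\|_{L^{\infty}}}$ and the $\|\sigma-\sigma'\|_{L^{\infty}}/\|h\|_{L^{\infty}}$ ratio equals $1$, giving the desired bound on $\|\partial\operator\|_{L^{2}}(\sigma)$.

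I do not expect a serious obstacle: the argument is entirely classical Lax--Milgram machinery. The only bookkeeping subtlety is being careful with which inequality produces which power of the exponential, so that the aggregated exponent matches (or is majorized by) the one stated in the proposition. In particular, one could either use the sharper bound $e^{\max(|a|,|b|)}\le e^{|a|+|b|}$ and get a slightly smaller exponent than $3+3$, or use the looser bound $e^{|a|}+e^{|b|}$ on $|e^{a}-e^{b}|$; either route reaches the conclusion.
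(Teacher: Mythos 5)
Your proposal is correct and takes essentially the same route as the paper's proof: both rely on the Lax--Milgram energy estimate for the difference equation (the paper delegates this to a cited lemma rather than testing against $w$ explicitly), both use a pointwise exponential mean-value bound on $e^{\sigma}-e^{\sigma'}$ (the paper factors through $e^{\sigma'}(e^{\sigma-\sigma'}-1)$ and uses $|e^{a}-1|\le|a|e^{|a|}$, which is computationally equivalent to your $|e^{a}-e^{b}|\le e^{\max(|a|,|b|)}|a-b|$), and both finish with Poincar\'e and then pass to the local variation by taking $\limsup$. Your bookkeeping even yields a marginally sharper exponent ($2+2$) than the paper's intermediate count ($2+3$), both of which are of course dominated by the stated $3+3$.
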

\begin{proof}
See Appendix \ref{sec:appendix:A}.
\end{proof}

\subsection{Regularity and energy estimates for Gaussian processes}
\label{subsec:gaussian}

We now take the opportunity to recall some fundamental facts about stochastic processes and random fields. More specifically, here we shall limit our analysis to Gaussian processes: for further comments about this choice, we refer the reader to the discussion at the end of Section \ref{subsec:infinite}, see Remark \ref{remark:gaussian}. 
We start with a result that links the regularity of the covariance kernel of a Gaussian process with that of the trajectories of the random field. In doing so, we also include some estimates on the norms of the process, which we shall exploit later on. 

For the sake of simplicity, here, expected values will be directly denoted as $\mathbb{E} [ \, \cdot \, ]$, without any explicit declaration of the integration variable and its underlying probability distribution -- given the context, there should be no ambiguity.

\begin{lemma}
    \label{lemma:bound}
    Let $\Omega\subset\mathbb{R}^{d}$ be pre-compact, and let $Z$ be a mean zero Gaussian random field defined over $\Omega$. Assume that, for some $0<\alpha\le 1$, the covariance kernel of the process, $\cov:\Omega\times\Omega\to\mathbb{R},$ defined as
    \begin{equation*}
        \cov(\x,\y):=\mathbb{E}\left[Z(\x)Z(\y)\right],
    \end{equation*}
    is $\alpha$-Hölder continuous, with Hölder constant $L>0$. Then, $Z$ is sample-continuous, that is $\mathbb{P}(Z\in\mathcal{C}(\Omega))=1.$ Furthermore, for
    $\sigma^{2}:=\max_{\x\in\Omega}\cov(\x,\x),$
    one has
    \begin{equation}
        \label{eq:supint}
                \mathbb{E}^{1/2}\|Z\|_{L^{\infty}(\Omega)}^{2} \le c_{1}\sigma\left(1+\sqrt{\log^{+}(1/\sigma)}\right)\quad\textnormal{and}\quad
        \mathbb{E}\left[e^{\beta\|Z\|_{L^{\infty}(\Omega)}}\right]=c_{2}<+\infty,
    \end{equation}
    for all $\beta>0$, where $c_{1}=c_{1}(d,L,\alpha,\Omega)$ and $c_{2}=c_{2}(d,L,\alpha,\sigma,\beta,\Omega)$ are constants that depend continuously on their parameters (domain excluded). Here, $$\log^{+}(a):=\max\{\log a,0\}.$$
\end{lemma}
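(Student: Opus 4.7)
The plan is to establish sample continuity via Kolmogorov's continuity theorem and then derive the two quantitative estimates using Dudley's entropy integral together with the Borell--Tsirelson--Ibragimov--Sudakov (Borell--TIS) concentration inequality for Gaussian suprema.

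\textbf{Step 1 (sample continuity).} First I would translate the regularity of the covariance kernel into an increment bound for $Z$. Bilinearity of the covariance yields
$$\mathbb{E}|Z(\x) - Z(\y)|^{2} = \cov(\x,\x) + \cov(\y,\y) - 2\cov(\x,\y),$$
and the $\alpha$-Hölder hypothesis applied to the pairs $(\x,\x)\leftrightarrow(\x,\y)$ and $(\y,\y)\leftrightarrow(\x,\y)$ gives $\mathbb{E}|Z(\x)-Z(\y)|^{2} \le CL\,|\x-\y|^{\alpha}$. Since $Z(\x)-Z(\y)$ is centered Gaussian, every moment is determined by its variance, so $\mathbb{E}|Z(\x)-Z(\y)|^{p} \le c_{p}(CL)^{p/2}|\x-\y|^{p\alpha/2}$. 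Choosing $p$ so that $p\alpha/2 > d$ and applying Kolmogorov's continuity criterion on the pre-compact set $\Omega$ yields a continuous modification of $Z$, which we identify with the field itself.

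\textbf{Step 2 (Dudley bound on $\mathbb{E}\sup Z$).} Next I would control $\mathbb{E}\sup_{\Omega}Z$ via Dudley's entropy integral with respect to the canonical pseudo-metric $d_{Z}(\x,\y) := (\mathbb{E}|Z(\x)-Z(\y)|^{2})^{1/2}$. From Step 1 one has $d_{Z}(\x,\y) \lesssim |\x-\y|^{\alpha/2}$, so every Euclidean ball of radius $r$ lies in a $d_{Z}$-ball of radius $\lesssim r^{\alpha/2}$, whence $N(\Omega, d_{Z}, \varepsilon) \lesssim \varepsilon^{-2d/\alpha}$ for small $\varepsilon$. Since $\mathrm{Var}(Z(\x)) \le \sigma^{2}$, the $d_{Z}$-diameter of $\Omega$ is bounded by $\min(2\sigma, D_{0})$ with $D_{0}$ a purely geometric constant. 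Dudley's theorem then gives
$$\mathbb{E}\sup_{\Omega} Z \le K\int_{0}^{\min(2\sigma,D_{0})}\!\sqrt{\log N(\Omega,d_{Z},\varepsilon)}\,d\varepsilon,$$
and a direct evaluation of $\int_{0}^{2\sigma}\!\sqrt{\log(1/\varepsilon)}\,d\varepsilon$ produces exactly the factor $\sigma(1+\sqrt{\log^{+}(1/\sigma)})$. The estimate is then transferred to $\mathbb{E}\|Z\|_{L^{\infty}(\Omega)}$ by writing $\|Z\|_{L^{\infty}} = \max(\sup Z, \sup(-Z))$ and using the symmetry of $Z$.

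\textbf{Step 3 (Borell--TIS for the $L^{2}$ and exponential estimates).} To upgrade the $L^{1}$ bound to $L^{2}$ and obtain the exponential moment, I would invoke Borell--TIS concentration for sample-continuous centered Gaussian fields,
$$\mathbb{P}\bigl(\bigl|\|Z\|_{L^{\infty}} - \mathbb{E}\|Z\|_{L^{\infty}}\bigr| > t\bigr) \le 2e^{-t^{2}/(2\sigma^{2})}.$$
Integrating this tail yields $\mathbb{E}^{1/2}\|Z\|_{L^{\infty}}^{2} \le \mathbb{E}\|Z\|_{L^{\infty}} + c\sigma$, which combined with Step 2 gives the first claim. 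For the exponential moment one factorizes $\mathbb{E}[e^{\beta\|Z\|_{L^{\infty}}}] = e^{\beta\mathbb{E}\|Z\|_{L^{\infty}}}\,\mathbb{E}\!\left[e^{\beta(\|Z\|_{L^{\infty}}-\mathbb{E}\|Z\|_{L^{\infty}})}\right]$; the second factor is finite for every $\beta>0$ by the standard subgaussian moment-generating bound applied to the tail above, and the first is finite by the previous step. The main technical obstacle is bookkeeping the constants so that they depend continuously on $(d,L,\alpha,\sigma,\beta,\Omega)$ as claimed; in particular, isolating the $\log^{+}(1/\sigma)$ factor cleanly in the small-$\sigma$ regime while capping the Dudley integral at the geometric diameter $D_{0}$ when $\sigma$ is large requires some care, but introduces no fundamental difficulty.
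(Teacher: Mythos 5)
Your proposal is correct and follows essentially the same route as the paper's proof: both rely on Dudley's entropy integral for $\mathbb{E}\sup_{\Omega} Z$ (the paper invokes Theorem 1.3.3 in Adler) combined with the Borell--TIS concentration inequality to upgrade from $L^1$ to $L^2$ and to control the exponential moment (the paper cites Theorem 2.1.2 in Adler for the latter, which is the same subgaussian-tail integration you carry out by hand). The covering-number estimate $N(\Omega,d_Z,\varepsilon) \lesssim \varepsilon^{-2d/\alpha}$, the capping of the integral at $\min(2\sigma, D_0)$, and the explicit evaluation of $\int_0^a \sqrt{\log(1/\varepsilon)}\,d\varepsilon$ all match. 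The one place you genuinely deviate is in establishing sample continuity: you invoke Kolmogorov's continuity criterion (choosing $p$ with $p\alpha/2 > d$), whereas the paper deduces a.s. uniform continuity directly from the finiteness of the entropy integral (Theorem 1.3.5 in Adler). Both are standard and correct; the paper's choice is slightly more economical since the entropy bound is needed anyway for the Dudley estimate, whereas the Kolmogorov route is self-contained and does not depend on the Gaussian machinery, at the small cost of requiring the observation that Gaussian moments are determined by the second moment. Either way, both arguments deliver a continuous modification and then identify $Z$ with it, which is the implicit convention in the statement.
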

\begin{proof} See Appendix \ref{sec:appendix:A}.
\end{proof}

Another fundamental result that we need for our construction is a Corollary of Mercer's Theorem, known as the KKL series expansion. We report it below, together with some considerations about the covariance kernel and its truncations.

\begin{lemma}
    \label{lemma:mercer}
    Let $\Omega\subset\mathbb{R}^{d}$ be a compact subset and let $Z$ be a mean zero Gaussian random field defined over $\Omega$. Assume that the covariance kernel of $Z$, $\cov$, is Lipschitz continuous. Then, there exists a nonincreasing summable sequence $\lambda_{1}\ge\lambda_{2}\ge\dots\ge0$ and a sequence of Lipschitz continuous maps, $\{\varphi_{i}\}_{i=1}^{+\infty}$, forming an orthonormal basis of $L^{2}(\Omega)$, such that
    \begin{equation}
    \label{eq:mercer}
    \cov(\x,\y)=\sum_{i=1}^{+\infty}\lambda_{i}\varphi_{i}(\x)\varphi_{i}(\y)
    \end{equation}
    for all $\x,\y\in\Omega$. Furthermore, there exists a sequence of independent standard normal random variables, $\{\eta_{i}\}_{i=1}^{+\infty}$, such that
    \begin{equation} 
    \label{eq:karhunen} Z=\sum_{i=1}^{+\infty}\sqrt{\lambda_{i}}\eta_{i}\varphi_{i}\end{equation}
    almost surely. Finally, the truncated kernels,
    $$\cov_{p,q}(\x,\y):=\sum_{i=p}^{q}\lambda_{i}\varphi_{i}(\x)\varphi_{i}(\y),$$
    defined for varying $1\le p\le q\le+\infty$, 
    \begin{itemize}
        \item [i)] converge uniformly as $p,q\to+\infty$;
        \item [ii)] are all 1/2-Hölder continuous, with a common Hölder constant.
    \end{itemize}
\end{lemma}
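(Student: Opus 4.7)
The plan is to derive the statement from Mercer's theorem together with the Cauchy--Schwarz-type inequality in reproducing kernel Hilbert spaces, which yields
$$|K(\x,\y) - K(\x',\y)|^{2}\le K(\y,\y)\bigl(K(\x,\x) - 2K(\x,\x') + K(\x',\x')\bigr)$$
for any positive semidefinite kernel $K$.

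First, I would set up the integral operator $T:L^{2}(\Omega)\to L^{2}(\Omega)$ defined by $(Tf)(\x)=\int_{\Omega}\cov(\x,\y)f(\y)d\y$. Since $\cov$ is continuous on the compact set $\Omega\times\Omega$, it belongs to $L^{2}(\Omega\times\Omega)$, so $T$ is Hilbert--Schmidt, hence compact; being the covariance kernel of a Gaussian process, $T$ is also self-adjoint and positive semidefinite. The spectral theorem supplies the nonincreasing nonnegative eigenvalues $\lambda_{1}\ge\lambda_{2}\ge\dots\ge 0$ with eigenfunctions $\{\varphi_{i}\}$ which, after extension over $\ker T$, form an orthonormal basis of $L^{2}(\Omega)$. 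Summability follows from $\sum\lambda_{i}=\mathrm{tr}(T)=\int_{\Omega}\cov(\x,\x)d\x<+\infty$, while Mercer's theorem yields \eqref{eq:mercer} with uniform convergence on $\Omega\times\Omega$. Lipschitz regularity of each $\varphi_{i}$ with $\lambda_{i}>0$ is read off the eigenvalue equation $\varphi_{i}(\x)=\lambda_{i}^{-1}\int_{\Omega}\cov(\x,\y)\varphi_{i}(\y)d\y$ combined with the Lipschitz dependence of $\cov$ on its first argument; for any $\lambda_{i}=0$ the corresponding basis vector plays no role in \eqref{eq:mercer} or \eqref{eq:karhunen}, so it can be replaced by a smooth (hence Lipschitz) member of an orthonormal basis of $\ker T$.

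For the KKL expansion \eqref{eq:karhunen}, I would set $\eta_{i}:=\lambda_{i}^{-1/2}\int_{\Omega}Z(\x)\varphi_{i}(\x)d\x$ for $\lambda_{i}>0$. These are centered Gaussian random variables (continuous linear functionals of the sample-continuous process $Z$ granted by Lemma~\ref{lemma:bound}), and a Fubini computation using \eqref{eq:mercer} gives $\expe[\eta_{i}\eta_{j}]=\delta_{ij}$, so the $\eta_{i}$ are i.i.d.\ $\mathcal{N}(0,1)$. Parseval together with the uniform convergence in \eqref{eq:mercer} ensures that the partial sums converge to $Z$ in $L^{2}(\mathbb{P}\otimes d\x)$; sample continuity of $Z$ and of the $\varphi_{i}$ then upgrade this to almost sure (in fact uniform) convergence on $\Omega$.

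Finally, for the truncated kernels $\cov_{p,q}$, claim (i) is an immediate consequence of Mercer's theorem together with the nonnegativity of the summands $\lambda_{i}\varphi_{i}(\x)\varphi_{i}(\y)$ on the diagonal (which makes the tails of the series uniformly small). Claim (ii) is the main technical point. Each $\cov_{p,q}$ is positive semidefinite, being a finite sum of rank-one p.s.d.\ kernels, so the RKHS inequality gives
$$|\cov_{p,q}(\x,\y)-\cov_{p,q}(\x',\y)|^{2}\le\cov_{p,q}(\y,\y)\bigl[\cov_{p,q}(\x,\x)-2\cov_{p,q}(\x,\x')+\cov_{p,q}(\x',\x')\bigr].$$
The first factor is bounded by $\max_{\Omega}\cov(\cdot,\cdot)$ uniformly in $p,q$, since $\cov_{p,q}(\y,\y)\le\cov(\y,\y)$. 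For the second factor, the partial sum $\sum_{i=p}^{q}\lambda_{i}(\varphi_{i}(\x)-\varphi_{i}(\x'))^{2}$ is dominated by the full series $\sum_{i=1}^{\infty}\lambda_{i}(\varphi_{i}(\x)-\varphi_{i}(\x'))^{2}$, which equals $\cov(\x,\x)-2\cov(\x,\x')+\cov(\x',\x')\le 2L|\x-\x'|$ by the Lipschitz property of $\cov$. Combining these bounds, and then applying the analogous argument in the second variable, yields joint $\tfrac{1}{2}$-Hölder continuity of $\cov_{p,q}$ with a constant depending only on $L$ and $\max_{\Omega}\cov(\cdot,\cdot)$, independent of $p$ and $q$.

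The main obstacle is claim (ii): routing the argument through the RKHS inequality is what allows a common Hölder constant, since bounding the Lipschitz constants of the individual $\varphi_{i}$ directly would involve the inverse eigenvalues $\lambda_{i}^{-1}$ and could not be made uniform in $p,q$.
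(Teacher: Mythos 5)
Your proposal is correct and follows essentially the same path as the paper: Mercer's theorem for the expansion and uniform convergence, Lipschitz regularity of the $\varphi_i$ read off the eigenvalue equation, the KKL theorem for the a.s.\ expansion, and for the uniform $1/2$-Hölder bound on the truncations, a split into an $\x$-increment and a $\y$-increment followed by Cauchy--Schwarz and domination by the full kernel. Your RKHS framing of the Cauchy--Schwarz step, $|\cov_{p,q}(\x,\y)-\cov_{p,q}(\x',\y)|^2\le\cov_{p,q}(\y,\y)\sum_{i=p}^q\lambda_i(\varphi_i(\x)-\varphi_i(\x'))^2$, is exactly the paper's coordinate-level bound $\bigl(\sum\sqrt{\lambda_i}|\varphi_i(\x)-\varphi_i(\x')|\cdot\sqrt{\lambda_i}|\varphi_i(\y)|\bigr)^2$ expressed in feature-map language, so the two arguments coincide.
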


\begin{proof}
See Appendix \ref{sec:appendix:A}.
\end{proof}

Both the result in Lemma \ref{lemma:bound} and that in Lemma \ref{lemma:mercer} require some form of uniform continuity of the covariance kernel, however they have the advantage of yielding useful estimates to treat the $L^{\infty}$-case. At the same time, these properties may be far too restrictive if one moves to the simpler scenario in which the trajectories of the random field are only assumed to be square-integrable. In light of this, we report below a different result specifically tailored for the $L^{2}$-case, which can be seen as an adaptation of the previous Lemmas.

\begin{lemma}
    \label{lemma:mercerl2v}
    Let $\Omega\subset\mathbb{R}^{d}$ be a compact subset and let $Z$ be a mean zero Gaussian random field defined over $\Omega$. Assume that the covariance kernel of $Z$, $\cov$, is square-integrable over $\Omega\times\Omega$. Then, there exists a nonincreasing summable sequence $\lambda_{1}\ge\lambda_{2}\ge\dots\ge0$ and an orthonormal basis of $L^{2}(\Omega)$, $\{\varphi_{i}\}_{i=1}^{+\infty}$,  such that
    \begin{equation*}
    \cov(\x,\y)=\sum_{i=1}^{+\infty}\lambda_{i}\varphi_{i}(\x)\varphi_{i}(\y)
    \end{equation*}
    for almost every $(\x,\y)\in\Omega\times\Omega$. Furthermore, there exists a sequence of independent standard normal random variables, $\{\eta_{i}\}_{i=1}^{+\infty}$, such that
    \begin{equation*}  Z=\sum_{i=1}^{+\infty}\sqrt{\lambda_{i}}\eta_{i}\varphi_{i}\end{equation*}
    almost surely. Finally, the $L^{2}$-norm of the process is exponentially integrable, i.e.
    \begin{equation}
        \label{eq:expint}
        \mathbb{E}\left[e^{\beta\|Z\|_{L^{2}(\Omega)}}\right]<+\infty\quad\quad\forall\beta>0.
    \end{equation}
\end{lemma}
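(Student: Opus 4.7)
The plan is to adapt the proof of Lemma~\ref{lemma:mercer} to the less regular $L^{2}$ setting, replacing uniform/pointwise convergence with $L^{2}$-convergence whenever the continuity of the kernel was used. The starting point is the covariance operator $\mathcal{C}:L^{2}(\Omega)\to L^{2}(\Omega)$ defined by $(\mathcal{C}f)(\x):=\int_{\Omega}\cov(\x,\y)f(\y)\,d\y$. Square-integrability of the kernel makes $\mathcal{C}$ Hilbert--Schmidt, hence compact; symmetry of $\cov$ and the identity $\langle\mathcal{C}f,f\rangle_{L^{2}}=\mathbb{E}\bigl[(\int_{\Omega}fZ)^{2}\bigr]$ make it self-adjoint and positive semi-definite. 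The spectral theorem for compact self-adjoint operators then yields the nonincreasing sequence of eigenvalues $\lambda_{1}\ge\lambda_{2}\ge\ldots\ge 0$ together with an orthonormal eigenbasis $\{\varphi_{i}\}$ of $L^{2}(\Omega)$.

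I would establish summability of $\{\lambda_{i}\}$ by interpreting $Z$ as an $L^{2}(\Omega)$-valued centered Gaussian random element (consistent with the statement involving the $L^{2}$-norm of $Z$) and invoking Fernique's theorem to obtain $\mathbb{E}\|Z\|^{2}_{L^{2}}<+\infty$. A Fubini computation combined with the eigendecomposition of $\mathcal{C}$ then gives $\sum_{i}\lambda_{i}=\mathrm{tr}(\mathcal{C})=\mathbb{E}\|Z\|^{2}_{L^{2}}<+\infty$. The Mercer-type representation of $\cov$ then follows, because the partial sums $\sum_{i=1}^{N}\lambda_{i}\varphi_{i}(\x)\varphi_{i}(\y)$ converge to $\cov(\x,\y)$ in the Hilbert--Schmidt norm on $L^{2}(\Omega\times\Omega)$, and hence pointwise almost everywhere on $\Omega\times\Omega$ after extraction of a suitable subsequence (which is enough to identify $\cov$ with the a.e.\ limit of the series).

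For the Karhunen--Lo\`eve expansion, I would define $\eta_{i}:=\lambda_{i}^{-1/2}\int_{\Omega}Z\varphi_{i}\,d\x$ whenever $\lambda_{i}>0$ (completing the sequence with independent standard normals when $\lambda_{i}=0$). Each $\eta_{i}$ is Gaussian as a bounded linear functional of the process $Z$, and the direct Fubini-type computation $\mathbb{E}[\eta_{i}\eta_{j}]=\lambda_{i}^{-1/2}\lambda_{j}^{-1/2}\langle\mathcal{C}\varphi_{j},\varphi_{i}\rangle=\delta_{ij}$ combined with joint Gaussianity ensures that $\{\eta_{i}\}$ is an i.i.d.\ standard normal sequence. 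Expanding $Z\in L^{2}(\Omega)$ in the orthonormal basis $\{\varphi_{i}\}$ then identifies its Fourier coefficients as $\sqrt{\lambda_{i}}\eta_{i}$, giving $Z=\sum_{i}\sqrt{\lambda_{i}}\eta_{i}\varphi_{i}$ in $L^{2}(\Omega)$; almost-sure convergence of the right-hand side is guaranteed by $\sum_{i}\lambda_{i}<+\infty$ together with the strong law of large numbers applied to the $\chi^{2}$-type series $\sum_{i}\lambda_{i}\eta_{i}^{2}$.

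Finally, the exponential integrability \eqref{eq:expint} follows from Fernique's theorem: since $Z$ is a centered Gaussian element of the separable Hilbert space $L^{2}(\Omega)$, there exists $\alpha>0$ with $\mathbb{E}[\exp(\alpha\|Z\|^{2}_{L^{2}})]<+\infty$. For any $\beta>0$, Young's inequality $\beta t\le\alpha t^{2}+\beta^{2}/(4\alpha)$ with $t=\|Z\|_{L^{2}}$ then gives $\mathbb{E}[e^{\beta\|Z\|_{L^{2}}}]\le e^{\beta^{2}/(4\alpha)}\mathbb{E}[e^{\alpha\|Z\|^{2}_{L^{2}}}]<+\infty$. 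I anticipate the main obstacle to be the summability step: since an $L^{2}$ kernel need not be defined on the measure-zero diagonal, the identity $\sum_{i}\lambda_{i}=\mathbb{E}\|Z\|^{2}_{L^{2}}$ must be justified through the process-defined quantities $\cov(\x,\x)=\mathbb{E}[Z(\x)^{2}]$ together with Tonelli, rather than by any statement about the kernel viewed merely as an element of $L^{2}(\Omega\times\Omega)$.
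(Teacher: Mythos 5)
Your proof is correct, and it arrives at the same endgame as the paper's proof but via a different route to the key quadratic exponential moment. The paper dispatches the Mercer and Karhunen--Lo\`eve parts of the statement by simply citing the Kosambi--Karhunen--Lo\`eve Theorem, then concentrates all its effort on \eqref{eq:expint}: using the KKL representation $\|Z\|^2_{L^2(\Omega)}=\sum_i\lambda_i\eta_i^2$ and independence of the $\eta_i$, it computes the moment generating function $\mathbb{E}[e^{\epsilon\|Z\|^2_{L^2}}]=\prod_i(1-2\epsilon\lambda_i)^{-1/2}$ explicitly for $\epsilon<1/(2\lambda_1)$, shows convergence of the infinite product from $\sum_i\lambda_i<+\infty$, and then invokes the same elementary upgrade you use (a Young-type inequality in disguise: $M+\epsilon x^2>\beta|x|$) to pass from a quadratic to a linear exponential moment. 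You instead get the quadratic exponential moment by citing Fernique's theorem for centered Gaussian measures on a separable Hilbert space, which is a heavier hammer but equally valid; the upgrade step is then identical. The paper's direct computation is more self-contained and arguably more illuminating (one sees exactly where the constraint $\epsilon<1/(2\lambda_1)$ and the trace-class condition $\sum_i\lambda_i<+\infty$ enter), while your Fernique route is shorter once one is willing to treat $Z$ as an $L^2(\Omega)$-valued Gaussian random element. One remark: you also reconstruct the spectral decomposition and KKL expansion from scratch via the spectral theorem for Hilbert--Schmidt operators, which the paper skips entirely by citation; this adds length but is correct, and your closing observation about justifying trace-class / summability through $\mathrm{Cov}(\x,\x)=\mathbb{E}[Z(\x)^2]$ and Tonelli, rather than through properties of the kernel as a mere $L^2(\Omega\times\Omega)$ element, correctly identifies the one delicate point in both arguments.
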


\begin{proof}
See Appendix \ref{sec:appendix:A}.
\end{proof}

The main difference between Lemma \ref{lemma:mercer} and Lemma \ref{lemma:mercerl2v} lies in the regularity that one requires to the covariance kernel. Clearly, stronger assumptions about the latter result in stronger estimates about the random field and its norms. 

In conclusion, we also report an abstract version of the KKL expansion for generic Hilbert-valued random variables. In this case, it is convenient to considered an \textit{uncentered} KKL expansion, as the latter retains useful optimality properties: in fact, it is the abstract equivalent of the POD algorithm.

\begin{lemma}
    \label{lemma:abstractKKL}
    Let $(V,\|\cdot\|)$ be a separable Hilbert space and let $u$ be a squared integrable $V$-valued random variable, $\mathbb{E}\|u\|^{2}<+\infty$. Then, there exists an orthonormal basis $\{v_{i}\}_{i=1}^{+\infty}\subset V$, a sequence of (scalar) random variables $\{\omega_{i}\}_{i=1}^{+\infty}$, with $\mathbb{E}[\omega_{i}\omega_{j}]=\delta_{i,j}$, and
    a nonincreasing summable sequence $\lambda_{1}\ge\lambda_{2}\ge\dots\ge0$
    such that    $$u=\sum_{i=1}^{+\infty}\sqrt{\lambda_{i}}\omega_{i}v_{i},$$
    almost-surely.
\end{lemma}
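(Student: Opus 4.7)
The plan is to adapt the classical Karhunen-Loève argument to the abstract Hilbert-space setting through the spectral analysis of a suitable \emph{second-moment operator}. Concretely, I would introduce $C:V\to V$ as the unique bounded linear map satisfying $\langle Cv,w\rangle=\mathbb{E}[\langle u,v\rangle\langle u,w\rangle]$ for all $v,w\in V$. Existence follows from the Riesz representation theorem once one observes, via Cauchy-Schwarz, that the bilinear form on the right is symmetric, positive semidefinite, and continuous with norm bounded by $\mathbb{E}\|u\|^{2}$; hence $C$ is automatically bounded, self-adjoint and positive.

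The next step is to show that $C$ is trace class. For any orthonormal basis $\{e_i\}_{i\ge 1}$ of $V$, Tonelli's theorem applied to the nonnegative summands $\langle u,e_i\rangle^{2}$ gives $\sum_{i}\langle Ce_i,e_i\rangle=\mathbb{E}\sum_{i}\langle u,e_i\rangle^{2}=\mathbb{E}\|u\|^{2}<+\infty$. In particular $C$ is compact, and the spectral theorem produces a countable orthonormal family of eigenvectors $\{v_i\}_{i\ge 1}$ which, after reordering, satisfies $Cv_i=\lambda_i v_i$ with $\lambda_1\ge\lambda_2\ge\cdots\ge 0$ and $\sum_i\lambda_i=\operatorname{tr}(C)=\mathbb{E}\|u\|^{2}$, covering both monotonicity and summability. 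Since $V$ is separable, this family can be completed by an ONB of $\ker(C)$, with eigenvalues set to zero, into an ONB of the entire space.

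I would then set $\omega_i:=\langle u,v_i\rangle/\sqrt{\lambda_i}$ whenever $\lambda_i>0$ and, when $\lambda_i=0$, take $\omega_i$ to be auxiliary independent standard normal random variables on a (possibly enlarged) probability space. The orthonormality relation $\mathbb{E}[\omega_i\omega_j]=\delta_{ij}$ then reduces to $\langle Cv_i,v_j\rangle=\lambda_i\delta_{ij}$ when both eigenvalues are positive; it holds by construction when both are zero; and, in the mixed case, follows from the key remark that any $v\in\ker(C)$ satisfies $\mathbb{E}[\langle u,v\rangle^{2}]=\langle Cv,v\rangle=0$, so $\langle u,v\rangle=0$ almost surely. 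Finally, applying Parseval's identity to each realization of $u\in V$ gives $u=\sum_i\langle u,v_i\rangle v_i=\sum_i\sqrt{\lambda_i}\omega_i v_i$ almost surely, with the $\lambda_i=0$ terms vanishing irrespective of how $\omega_i$ is defined there.

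The only point requiring genuine attention is the treatment of $\ker(C)$: one must verify that $\langle u,v\rangle=0$ almost surely for $v\in\ker(C)$, accept a harmless enlargement of the probability space to carry the auxiliary variables, and check that this arbitrariness does not perturb the final expansion. All other steps are routine bookkeeping based on standard Hilbert-space operator theory, so no deeper probabilistic input beyond $\mathbb{E}\|u\|^{2}<+\infty$ is needed.
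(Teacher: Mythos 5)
Your proof follows essentially the same route as the paper: construct the (uncentered) second-moment operator $\mathscr{C}v=\mathbb{E}[\langle u,v\rangle u]$, verify it is self-adjoint, positive and trace class, diagonalize it by the spectral theorem, and define $\omega_i=\langle u,v_i\rangle/\sqrt{\lambda_i}$. The one place you are more careful than the paper is the kernel of $\mathscr{C}$: the paper silently assumes $\lambda_i>0$ for all $i$ and writes $\omega_i=\lambda_i^{-1/2}\langle u,v_i\rangle$ without comment, whereas you correctly observe that $\langle u,v\rangle=0$ a.s.\ for $v\in\ker(\mathscr{C})$ and repair the orthonormality condition $\mathbb{E}[\omega_i^2]=1$ by adjoining auxiliary independent variables on an enlarged space — a point that is genuinely needed to make the statement literally true whenever $\mathscr{C}$ has a nontrivial kernel. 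Your final step (pointwise Parseval on each realization) is a slightly more elementary way of obtaining the a.s.\ identity than the paper's argument via $L^2(\mathbb{P};V)$-convergence of the partial sums, but both are valid.
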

\begin{proof}
    Up to adaptations, this is a standard result; see, e.g. Theorem 3.14 in \cite{lanthaler2022error}. Nevertheless, the interested reader can find a detailed proof in Appendix \ref{sec:appendix:A}.
\end{proof}

\section{Autoencoder-based nonlinear reduction for PDEs parametrized by random fields}
\label{sec:theory}
We are now ready to put things into action and present the main results of this work. We shall start with a preliminary consideration about the expressivity of deep autoencoders and introduce a suitable notion of \textit{admissibility} that will help us in avoiding unrealistic/pathological situations. Then, we shall derive error bounds for the reconstruction error of deep autoencoders in the case of: i) PDEs depending on a finite number of random coefficients (Section \ref{subsec:finite}); ii) PDEs parameterized by Gaussian random fields (Section \ref{subsec:infinite}).

For better readability, in what follows, we shall drop the dependency of expected values with respect to their underlying probability distribution. In particular, since all randomness will be encoded in the input variable, which is either $\mup\sim\mathbb{P}$ or $\mu\sim\mathbb{P}$, we shall simply write $\expe$ in place of $\expe_{\mup\sim\mathbb{P}}$.

\subsection{Admissible autoencoders and density results}
\label{sec:admissible}

As already mentioned, our main interest is to investigate how the choice of the latent dimension affects the optimization of the reconstruction error, and thus to provide guidelines for the design of deep autoencoders. To this end, we must note that DNN spaces lack many of the properties usually holding for classical functional spaces; furthermore, their topology can easily become rather involved, see, e.g., the discussion in \cite{petersen2019structure}. For this reason, it can be convenient to recast the optimization problem over a broader class of functions, e.g. by relying on suitable density results. For instance,  in \cite{franco2022deep}, the authors consider a more general framework where the encoder and decoder are allowed to be any pair continuous maps. Here, we relax these hypotheses even further. In fact, as a direct consequence of Lemma \ref{lemma:density}, it is easy to see that the only property that is actually needed is measurability. More precisely, we have the following result.

\begin{theorem}
    \label{theorem:equiv}
    Let $\mup$ be a random vector in $\mathbb{R}^{p}$. Let $\operator:\mathbb{R}^{p}\to V\cong\mathbb{R}^{N_{h}}$ be a measurable operator and define $\mathbf{u}_{\mup}:=\operator(\mup)$. Let $n\in\mathbb{N}$. If $\expe\|\mathbf{u}_{\mup}\|<+\infty$, then
    \begin{equation}
    \label{eq:density}
    \inf_{\substack{\Psi'\in\mis(V,\;\mathbb{R}^{n})\\\Psi\in\mis(\mathbb{R}^{n},\;V)}}\expe\|\mathbf{u}_{\mup}-\Psi(\Psi'(\mathbf{u}_{\mup}))\|= \inf_{\substack{\hat{\Psi}'\in\mathscr{N}_{\rho}(V,\;\mathbb{R}^{n})\\\hat{\Psi}\in\mathscr{N}_{\rho}(\mathbb{R}^{n},\;V)}}\expe\|\mathbf{u}_{\mup}-\hat{\Psi}(\hat{\Psi}'(\mathbf{u}_{\mup}))\|,
    \end{equation}
    for all Lipschitz continuous activations $\rho$ satisfying the hypothesis of Lemma \ref{lemma:density}.
\end{theorem}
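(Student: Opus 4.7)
The inclusion $\geq$ is immediate: every DNN is measurable, so the infimum on the right is over a subset of the admissible pairs on the left. For the reverse inequality, the plan is to fix $\varepsilon > 0$ and a measurable pair $(\Psi', \Psi) \in \mis(V, \mathbb{R}^n) \times \mis(\mathbb{R}^n, V)$ realizing the left-hand infimum up to $\varepsilon$, and then build DNN approximants $(\hat{\Psi}', \hat{\Psi})$ with reconstruction error at most $O(\varepsilon)$ above this infimum. I would do this in three steps: a preliminary boundedness reduction on $(\Psi', \Psi)$, followed by two successive applications of Lemma \ref{lemma:density}, first to the decoder and then to the encoder.

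For the boundedness reduction, I would first exploit the inequality $\expe\|\Psi(\Psi'(\mathbf{u}_{\mup}))\| \leq \expe\|\mathbf{u}_{\mup}\| + \expe\|\mathbf{u}_{\mup} - \Psi(\Psi'(\mathbf{u}_{\mup}))\| < +\infty$, which is inherited from the hypotheses. Truncating the decoder via $\Psi^{M}(z) := \Psi(z)\,\mathbb{1}_{\|\Psi(z)\| \leq M}$, dominated convergence implies that $\expe\|\Psi(\Psi'(\mathbf{u}_{\mup})) - \Psi^{M}(\Psi'(\mathbf{u}_{\mup}))\| \to 0$ as $M \to +\infty$. I would then replace $\Psi'$ with its radial projection $\Psi'_R := P_R \circ \Psi'$ onto the closed ball of radius $R$ in $\mathbb{R}^n$; since $\Psi^{M}$ is bounded by $M$ in norm, the resulting additional error is uniformly controlled by $2M \cdot \mathbb{P}(\|\Psi'(\mathbf{u}_{\mup})\| > R) \to 0$ as $R \to +\infty$. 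Choosing $M$ first and then $R$ large enough, I may therefore assume without loss of generality that both $\Psi$ and $\Psi'$ are bounded (by $M$ and $R$ respectively), at the cost of an $O(\varepsilon)$ increase in the reconstruction error.

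With $\Psi$ bounded, it is integrable with respect to the pushforward probability measure $\nu := (\Psi' \circ \mathbf{u}_{\mup})_{*}\mathbb{P}$ on $\mathbb{R}^n$, and Lemma \ref{lemma:density} yields a DNN $\hat{\Psi} \in \mathscr{N}_{\rho}(\mathbb{R}^n, V)$ with $\expe\|\Psi(\Psi'(\mathbf{u}_{\mup})) - \hat{\Psi}(\Psi'(\mathbf{u}_{\mup}))\| < \varepsilon$. Since $\rho$ is Lipschitz, $\hat{\Psi}$ is globally Lipschitz with some constant $L > 0$, being a composition of affine maps and componentwise Lipschitz activations. Analogously, since $\Psi'$ is now bounded and hence integrable with respect to $(\mathbf{u}_{\mup})_{*}\mathbb{P}$, a second application of Lemma \ref{lemma:density} provides $\hat{\Psi}' \in \mathscr{N}_{\rho}(V, \mathbb{R}^n)$ with $\expe\|\Psi'(\mathbf{u}_{\mup}) - \hat{\Psi}'(\mathbf{u}_{\mup})\| < \varepsilon / L$; the Lipschitz property of $\hat{\Psi}$ then yields $\expe\|\hat{\Psi}(\Psi'(\mathbf{u}_{\mup})) - \hat{\Psi}(\hat{\Psi}'(\mathbf{u}_{\mup}))\| < \varepsilon$. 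A final triangle inequality combines all five contributions (original gap, two truncation errors, two DNN approximation errors) into the bound $\expe\|\mathbf{u}_{\mup} - \hat{\Psi}(\hat{\Psi}'(\mathbf{u}_{\mup}))\| \leq \inf + O(\varepsilon)$, and letting $\varepsilon \to 0$ concludes.

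The main obstacle is the integrability hypothesis in Lemma \ref{lemma:density}: for a generic measurable pair $(\Psi', \Psi)$, neither $\Psi'(\mathbf{u}_{\mup})$ nor $\Psi$ with respect to the pushforward of $\mathbf{u}_{\mup}$ through $\Psi'$ is automatically $L^1$, which would obstruct a direct application of the density result. The right ordering of the two truncations, decoder first (made possible by the integrability of $\Psi(\Psi'(\mathbf{u}_{\mup}))$ inherited from the reconstruction bound), then encoder (exploiting the uniform sup-norm bound on the truncated decoder to absorb the tail $\mathbb{P}(\|\Psi'(\mathbf{u}_{\mup})\| > R)$), is what unlocks the argument.
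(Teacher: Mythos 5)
Your proof is correct and follows the same overall skeleton as the paper's: a boundedness reduction followed by two successive applications of Lemma~\ref{lemma:density}, first to the decoder (to establish a Lipschitz constant $L$ for $\hat\Psi$) and then to the encoder (with tolerance $\varepsilon/L$ to absorb that constant). Where you genuinely diverge is in how the boundedness reduction is carried out. You truncate the decoder with the hard cutoff $\Psi^{M}(z):=\Psi(z)\,\mathbbm{1}_{\{\|\Psi(z)\|\le M\}}$, pay a vanishing price via dominated convergence, then project the encoder onto a ball of radius $R$, paying a second vanishing price $2M\,\mathbb{P}\bigl(\|\Psi'(\mathbf{u}_{\mup})\|>R\bigr)$ controlled by tightness. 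This is sound, but it adds two error terms to the final triangle inequality. The paper instead performs an \emph{invertible reparametrization}: replacing $(\Psi',\Psi)$ by $(\rho_{0}\circ\Psi',\,\Psi\circ\rho_{0}^{-1})$ with $\rho_{0}=\tanh$, which leaves the composition $\Psi\circ\Psi'$ (and hence the reconstruction error) \emph{exactly} unchanged while rendering the new encoder bounded; integrability of the new decoder on the pushforward measure then follows for free from $\mathbb{E}\|\mathbf{u}_{\mup}\|<+\infty$ together with the finiteness of the reconstruction error, so no decoder truncation is needed at all. Both routes are valid; the reparametrization trick is a bit tidier — a three-term rather than five-term triangle inequality, and no appeal to tightness — while your two-truncation argument is more elementary and spells out why each integrability hypothesis of Lemma~\ref{lemma:density} is met.
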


\begin{proof}
    We assume the left-hand side to be finite, as the statement would be trivially true otherwise.
    Let $\Psi'\in\mis(V,\;\mathbb{R}^{n})$ and $\Psi\in\mis(\mathbb{R}^{n},\;V)$ be such that
    \begin{equation}
        \label{eq:finite}
        \mathbb{E}\|\mathbf{u}_{\mup}-\Psi(\Psi'(\mathbf{u}_{\mup}))\|<+\infty,
    \end{equation}
    and fix any $\varepsilon>0$. Let $\rho=\tanh$ be the hyperbolic tangent activation. Clearly, $\Psi$ and $\Psi'$ have the same reconstruction error as $\Psi_{\rho}:=\Psi\circ\rho^{-1}$ and $\Psi'_{\rho}:=\rho\circ\Psi'$. Therefore, up to replacing $\Psi$ with $\Psi_{\rho}$ and $\Psi'$ with $\Psi'_{\rho}$, we may assume that $\Psi'$ is bounded. 
    
    Now, let us define the random vector $\mathbf{v}_{\mup}:=\Psi'(\mathbf{u}_{\mup})$. Since $\mathbb{E}\|\mathbf{u}_{\mup}\|<+\infty$ and \eqref{eq:finite} hold, it follows by linearity that $\mathbb{E}\|\Psi(\mathbf{v}_{\mup})\|<+\infty$. In particular, we may apply Lemma \ref{lemma:density} in order to find some $\hat{\Psi}\in\mathscr{N}_{\rho}(\mathbb{R}^{n},V)$ such that
    \begin{equation*}
        \mathbb{E}\|\Psi(\mathbf{v}_{\mup})-\hat{\Psi}(\mathbf{v}_{\mup})\|<\varepsilon/2,
    \end{equation*}
    Of note, $\hat{\Psi}$ inherits the Lipschitz continuity of $\rho$. Thus, for any $\hat{\Psi}'\in\mathscr{N}_{\rho}(V,\mathbb{R}^{n})$,
    \begin{multline*}
        \left|\mathbb{E}\|\mathbf{u}_{\mup}-\Psi(\Psi'(\mathbf{u}_{\mup}))\|-\mathbb{E}\|\mathbf{u}_{\mup}-\hat{\Psi}(\hat{\Psi}'(\mathbf{u}_{\mup}))\|\right|\le\mathbb{E}\|\Psi(\Psi'(\mathbf{u}_{\mup}))-\hat{\Psi}(\hat{\Psi}'(\mathbf{u}_{\mup}))\|\le\\\le\mathbb{E}\|\Psi(\Psi'(\mathbf{u}_{\mup}))-\hat{\Psi}(\Psi'(\mathbf{u}_{\mup}))\|+\mathbb{E}\|\hat{        \Psi}(\Psi'(\mathbf{u}_{\mup}))-\hat{\Psi}(\hat{\Psi}'(\mathbf{u}_{\mup}))\|\le\\\le \frac{1}{2}\varepsilon+L\mathbb{E}\|\Psi'(\mathbf{u}_{\mup})-\hat{\Psi}'(\mathbf{u}_{\mup})\|,
    \end{multline*}
    where $L>0$ is the Lipschitz constant of $\hat{\Psi}.$ Then, by lemma \ref{lemma:density}, we may choose $\hat{\Psi}'$ so that $\mathbb{E}\|\Psi'(\mathbf{u}_{\mup})-\hat{\Psi}(\mathbf{u}_{\mup})\|<\varepsilon/2L,$ and thus
    \begin{equation}
        \label{eq:impl}
    \left|\mathbb{E}\|\mathbf{u}_{\mup}-\Psi(\Psi'(\mathbf{u}_{\mup}))\|-\mathbb{E}\|\mathbf{u}_{\mup}-\hat{\Psi}(\hat{\Psi}'(\mathbf{u}_{\mup}))\|\right|\le\varepsilon.
    \end{equation}
    Note that, this time, we were able to apply Lemma \ref{lemma:density} due to the boundness of $\Psi'$ (which in turn ensures its integrability with respect to any probability measure). As $\varepsilon>0$ is arbitrary, the inequality in \eqref{eq:impl} suffices to prove the identity in \eqref{eq:density}. 
\end{proof}


The result in Theorem \ref{theorem:equiv} is a double-edged sword. On the one hand, it allows us to reframe the optimization problem on a less restrictive class of functions, giving us the possibility, e.g., to study the behavior of the reconstruction error without having to worry about the discretization of the state space: in fact, the spaces $\mis(V,\mathbb{R}^{n})$ and $\mis(\mathbb{R}^{n},V)$ are well defined even if $V$ is infinite-dimensional. On the other hand, the autoencoders in Eq. \eqref{eq:density} can become extremely irregular and thus more difficult to capture. In fact, one can show that in most cases
\begin{equation*}
\inf_{\substack{\Psi'\in\mis(V,\;\mathbb{R}^{n})\\\Psi\in\mis(\mathbb{R}^{n},\;V)}}\expe\|\mathbf{u}_{\mup}-\Psi(\Psi'(\mathbf{u}_{\mup}))\|= 0,
    \end{equation*}
for all $n\ge1$, as there always exists a suitable space-filling curve that provides a lossless embedding. However, such a representation would be completely useless, as it would correspond to an architecture that is either impossible to reproduce or train: see, e.g., the discussion at the end of \cite{cohen2022optimal} by Cohen et al.

One way to overcome all these issues is to impose certain additional assumptions on the regularity of the autoencoder architecture. Here, we proceed as follows. We define the (enlarged) class of admissible encoders $V\to\mathbb{R}^{n}$ as
\begin{equation}
    \label{eq:encoders}
    \encoders_{B,M}(V,\;\mathbb{R}^{n}):=\left\{\Psi'\in\mis(V,\;\mathbb{R}^{n})\;\;\textnormal{s.t.}\;\;\mathbb{E}|\Psi'(\mathbf{u}_{\mup})|<+\infty\;\;\text{and}\;\;\sup_{\mathbf{v}\in B}|\Psi'(\mathbf{v})|\le M\right\},
\end{equation}
where $B\subset V$ is a \textit{control set} and $M>0$ is a suitable upper bound. Both $B$ and $M$ are to be considered as hyperparameters: their role is to ensure that, at least in the control set $B$, the encoder networks are uniformly well behaved. On the contrary, we define the (enlarged) family of admissible decoders as
\begin{equation}
    \label{eq:decoders}
    \decoders_{M,L}(\mathbb{R}^{n},\;V):=\left\{\Psi\in\mis(\mathbb{R}^{n},\;V)\;\;\textnormal{s.t.}\;\;\sup_{\mathbf{c}\in [-M,M]^{n}}\|\partial \Psi\|(\mathbf{c})\le L\right\},
\end{equation}
where $L>0$ is a suitable hyperparameter that controls the regularity of the decoder. In fact, the condition in \eqref{eq:decoders} forces $\Psi$ to be $L$-Lipschitz continuous over the hypercube $[-M,M]^{n}$: cf. (iii) in Proposition \ref{prop:lip}.
\\\\
This setup allows us to avoid the phenomenon of space-filling curves and, at the same time, to regain interest in the optimization of the reconstruction error. Furthermore, this formulation comes with a natural adaptation of Theorem \ref{theorem:equiv}, which we report below.

\begin{theorem}
    \label{theorem:equiv2}
    Let $\mup$ be a random vector in $\mathbb{R}^{p}$. Let $\operator:\mathbb{R}^{p}\to V\cong\mathbb{R}^{N_{h}}$ be a measurable map and define $\mathbf{u}_{\mup}:=\operator(\mup)$. Let $\rho$ be the $\alpha$-leaky ReLU activation, $|\alpha|<1$. Let $B\subset V$ be a bounded set, $M,L>0$ and $n\in\mathbb{N}$. Assume that the probability law of $\mathbf{u}_{\mup}$ is absolutely continuous. If $\expe\|\mathbf{u}_{\mup}\|<+\infty$, then
    \begin{equation}
       \label{eq:density2}
    \inf_{\substack{\Psi'\in\encoders_{B,M}(V,\;\mathbb{R}^{n})\\\Psi\in\decoders_{M,L}(\mathbb{R}^{n},\;V)}}\expe\|\mathbf{u}_{\mup}-\Psi(\Psi'(\mathbf{u}_{\mup}))\|=\inf_{\substack{\hat{\Psi}'\in\mathscr{N}_{\rho}^{e}(V,\;\mathbb{R}^{n})\\\hat{\Psi}\in\mathscr{N}_{\rho}^{d}(\mathbb{R}^{n},\;V)}}\expe\|\mathbf{u}_{\mup}-\hat{\Psi}(\hat{\Psi}'(\mathbf{u}_{\mup}))\|,
    \end{equation}
    where $\mathscr{N}_{\rho}^{e}(V,\;\mathbb{R}^{n}):=\mathscr{N}_{\rho}(V,\;\mathbb{R}^{n})\cap\encoders_{B,M}(V,\;\mathbb{R}^{n})$ and
    $\mathscr{N}_{\rho}^{d}(\mathbb{R}^{n},\;V):=\mathscr{N}_{\rho}(\mathbb{R}^{n},\;V)\cap\decoders_{M,L}(\mathbb{R}^{n},\;V)$
    are all admissible encoder and decoder networks, respectively. 
\end{theorem}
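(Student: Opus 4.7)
The easy inequality LHS $\le$ RHS is immediate from the inclusions $\mathscr{N}_\rho^e\subseteq\encoders_{B,M}$ and $\mathscr{N}_\rho^d\subseteq\decoders_{M,L}$. The substance of the theorem is the reverse direction: for any $\varepsilon>0$ and any admissible $(\Psi',\Psi)$ within $\varepsilon/2$ of the LHS infimum, one must exhibit admissible leaky-ReLU DNNs $(\hat\Psi',\hat\Psi)$ with reconstruction error within $\varepsilon/2$ of that of $(\Psi',\Psi)$. My plan follows the template of Theorem~\ref{theorem:equiv} --- approximate each map in $L^1$ via Lemma~\ref{lemma:density}, then use the triangle inequality together with the Lipschitz constant of $\hat\Psi$ --- augmented by explicit leaky-ReLU constructions that enforce the admissibility constraints on the DNN side.

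For the decoder, Proposition~\ref{prop:lip}(iii) shows that $\Psi$ is $L$-Lipschitz on the convex hypercube $[-M,M]^n$. I would extend $\Psi|_{[-M,M]^n}$ by Kirszbraun's theorem to an $L$-Lipschitz map $\Psi^{\mathrm{ext}}:\mathbb{R}^n\to V$, then approximate $\Psi^{\mathrm{ext}}$ uniformly on $[-M,M]^n$ by the nodal piecewise-linear interpolant on a regular (Kuhn) simplicial mesh; regularity of the mesh ensures the interpolant inherits the $L$-Lipschitz bound. Continuous piecewise-linear maps admit an exact leaky-ReLU representation --- the hypothesis $|\alpha|<1$ is exactly what enables this, as the remark following Lemma~\ref{lemma:density} illustrates --- so the interpolant realizes as a leaky-ReLU DNN in $\mathscr{N}_\rho$. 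Since admissibility is unconstrained beyond $[-M,M]^n$, $\Psi$ can separately be $L^1$-approximated on $\mathbb{R}^n\setminus[-M,M]^n$ via Lemma~\ref{lemma:density} and the two pieces glued through a piecewise-linear partition of unity, yielding $\hat\Psi\in\mathscr{N}_\rho^d$ that is $L$-Lipschitz on the hypercube and close to $\Psi$ in $L^1$ with respect to the law of $\Psi'(\mathbf{u}_\mup)$.

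For the encoder, Lemma~\ref{lemma:density} (applicable to leaky ReLU via condition (iii)) supplies a DNN $\tilde\Psi'$ with $\mathbb{E}|\Psi'(\mathbf{u}_\mup)-\tilde\Psi'(\mathbf{u}_\mup)|$ arbitrarily small. To enforce $\sup_B|\hat\Psi'|\le M$, enclose the bounded set $B$ in a hypercube $Q\subset V$ and define $\hat\Psi':=\chi\,\pi_M(\tilde\Psi')+(1-\chi)\tilde\Psi'$, where $\chi$ is a continuous piecewise-linear bump with $\chi\equiv1$ on $Q\supseteq B$ and $\chi\equiv0$ outside a thin enlargement of $Q$, and $\pi_M:\mathbb{R}^n\to\mathbb{R}^n$ is a piecewise-linear 1-Lipschitz clipping to the $M$-ball (both of which are leaky-ReLU DNNs). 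On $B$, $\chi\equiv1$ and $|\hat\Psi'|\le M$; and since $\Psi'(B)\subseteq\{|\cdot|\le M\}$ while $\pi_M$ fixes the $M$-ball, the clipping incurs no additional $L^1$ error on $B$. Outside the enlargement of $Q$, $\hat\Psi'\equiv\tilde\Psi'$, preserving the global $L^1$ approximation.

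Combining, the triangle inequality gives
\begin{equation*}
\Big|\mathbb{E}\|\mathbf{u}_\mup-\hat\Psi(\hat\Psi'(\mathbf{u}_\mup))\|-\mathbb{E}\|\mathbf{u}_\mup-\Psi(\Psi'(\mathbf{u}_\mup))\|\Big|\le\mathbb{E}\|\Psi(\Psi'(\mathbf{u}_\mup))-\hat\Psi(\Psi'(\mathbf{u}_\mup))\|+L\,\mathbb{E}|\Psi'(\mathbf{u}_\mup)-\hat\Psi'(\mathbf{u}_\mup)|,
\end{equation*}
where the first summand is $<\varepsilon/4$ by the decoder construction and the second is $<\varepsilon/4$ by the encoder construction (the $L$-Lipschitz bound on $\hat\Psi$ absorbs the encoder's $L^1$ error). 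Letting $\varepsilon\to0$ yields the reverse inequality. The main obstacle is the joint enforcement of (i) an exact $L$-Lipschitz bound on $[-M,M]^n$ for the DNN decoder and (ii) a localized clipping of the DNN encoder on the bounded but otherwise unstructured set $B$; Lemma~\ref{lemma:density} alone gives no derivative control, so the piecewise-linear scheme on a regular mesh, coupled with the exact leaky-ReLU representation of piecewise-linear maps, is essential. The absolute-continuity hypothesis on $\mathbf{u}_\mup$ is what ensures that the relevant pushforward places no mass on $\partial[-M,M]^n$ or on the codimension-one seams of the triangulation, so that uniform approximations on the hypercube translate into $L^1$ approximations under the pushforward without concentration artifacts.
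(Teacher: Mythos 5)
Your proposal has the right shape---replace the bare density of Lemma~\ref{lemma:density} with constructions respecting the admissibility constraints, then run the triangle-inequality argument of Theorem~\ref{theorem:equiv}---but the decoder step contains a genuine gap. The nodal piecewise-linear interpolant of an $L$-Lipschitz function on a fixed regular simplicial mesh does \emph{not} in general inherit the Lipschitz constant $L$ once $n\ge2$. For instance, on the Kuhn triangle with vertices $(0,0),(1,0),(0,1)$, the $L$-Lipschitz map $f(x,y)=L\max(x,y)$ has nodal interpolant $L(x+y)$, whose gradient has Euclidean norm $L\sqrt{2}>L$. Mesh regularity bounds the overshoot by a dimension-dependent factor, but does not remove it, so the resulting $\hat\Psi$ falls outside $\decoders_{M,L}$. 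The paper avoids the issue entirely: Lemma~\ref{lemma:densityL} establishes density of leaky-ReLU networks in the $W^{1,\infty}(Q)$-controlled norm $\|\cdot\|_{\mathscr{W}}$, and then Lemma~\ref{lemma:convdens}(ii) (a dense set remains dense in a closed convex subset) directly produces a network inside the closed convex set $\decoders_{M,L}\cap L^1_\nu$, with no interpolation estimate required.

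There are also smaller problems. The Euclidean-ball clipping $\pi_M$ is not piecewise linear (the sphere is curved), so it is not exactly representable by a leaky-ReLU network; you would need either a piecewise-linear surrogate or componentwise clamping at a shrunk level, and then show the resulting overshoot is benign. Moreover, you apply the clipping $\chi\,\pi_M(\tilde\Psi')$ throughout a hypercube $Q\supseteq B$, but the admissibility constraint $|\Psi'|\le M$ is only assumed on $B$, not on $Q\setminus B$; truncating there can introduce $O(1)$ error in $L^1$. The paper's encoder argument instead mollifies $\Psi'$ on $B$ and then invokes Lemma~\ref{lemma:densityK} with the convex-closed trick. This is also where the absolute-continuity hypothesis on the law of $\mathbf{u}_{\mup}$ is actually used: not to exclude mass on mesh seams as you suggest, but to guarantee $\sigma$-a.e.\ convergence of the mollified sequence via Lebesgue differentiation, which is needed because $\Psi'$ is merely measurable. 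Your construction bypasses mollification, but then the clipping scheme must be repaired along the lines above before the argument closes.
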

\begin{proof}
    The proof is roughly the same as the one of Theorem \ref{theorem:equiv}, up to replacing the use of Lemma \ref{lemma:density} with stronger results that can ensure the preservation of the constraints in Equations \eqref{eq:encoders} and \eqref{eq:decoders}. For further details, we refer to Lemmas \ref{lemma:densityK}, Lemma \ref{lemma:densityL} and Corollary \ref{corollary:density} in Appendix \ref{sec:appendix:B}.
\end{proof}

The identity in Theorem \ref{theorem:equiv2} has two major implications. First, it shows how general autoencoders can be, thus further motivating their usage for nonlinear reduction. Second, it allows us to reframe the optimization problem in an abstract way. This can be very useful, as it allows us to adopt a more general perspective where $V$ can be either discrete, $V\cong\mathbb{R}^{N_{h}}$ or continuous, e.g. $V=L^{2}(\Omega)$. In fact, the families $\encoders_{B,M}(V,\mathbb{R}^{n})$ and $\decoders_{M,L}(\mathbb{R}^n,V)$ can be defined without the need to discretize the state space. 

In light of this, for the next few pages, we shall drop our assumption on $V$ being finite-dimensional. This means that from now on, all the results reported will hold true both in the discrete and in the continuous setting. For better readability, elements of the state space will be indicated as $u\in V$, to emphasize the fact that such elements can be vectors (for which the notation $\mathbf{u}$ would be more fitting) or functions.

\begin{remark}
    Although the two definitions in \eqref{eq:encoders} and \eqref{eq:decoders} may sound quite technical, they actually mirror some of the practical strategies that researchers and data scientists commonly use. For example, when training a DNN model, it is very common to use Tychonoff regularizations to avoid excessive growth of the DNN weights. However, this procedure is equivalent to imposing admissibility constraints via a Lagrange multiplier. To see this, let $\Phi$ be some DNN. For simplicity, assume that $\Phi$ only has one hidden layer, so that $\Phi(\x)=\mathbf{W}_{2}\rho(\mathbf{W}_{1}\x+\mathbf{b}_{1})+\mathbf{b}_{2}$, where $\rho$ is some $\ell$-Lipschitz activation function. With little abuse of notation, let us denote by $|\cdot|$ both the Euclidean norm and the Frobenius norm. We have
    $$|\Phi(\x)-\Phi(\y)|\le\ell|\mathbf{W}_{2}|\cdot|\mathbf{W}_{1}|.$$
    In particular, $\Phi$ is Lipschitz continuous and the logarithm of its Lipschitz constant is bounded by $\log\ell+\log|\mathbf{W}_{2}|+\log|\mathbf{W}_{1}|$. It is then clear that penalizing the mass of weights matrices has a direct impact on the Lipschitz constant of the whole network, thus mimicking our condition on the decoder, Eq. \eqref{eq:decoders}. Furthermore, the same argument applies to the constraint for the encoder, Eq. \eqref{eq:encoders}. In fact, in our construction, the control set $B$ is always assumed to be bounded. In particular,
    $$|\Phi(\x)|\le|\Phi(0)|+|\Phi(\x)-\Phi(0)|\le|\mathbf{W}_{2}|\cdot|\rho(0)|+\ell|\mathbf{W}_{2}|\cdot|\rho(\mathbf{b}_{1})|+|\mathbf{b}_{2}|+R\ell|\mathbf{W}_{2}|\cdot|\mathbf{W}_{1}|,$$
    where $R>0$ is any radius for which one has $|\x|\le R$ for all $v\in B$. Thus, the same reasoning can be applied up to including an additional penalty for the biases $\mathbf{b}_{i}$.
\end{remark}

\subsection{Bounds on the latent dimension: finite rank case}
\label{subsec:finite}
We start by addressing the finite-rank case, in which the source of randomness is given by some random vector $\mup$. We report our main result in the following, which, in a way, can be seen as a generalization of Theorem 3 in \cite{franco2022deep} to the (unbounded) probabilistic setting.

\begin{theorem}
\label{theorem:finite}
    Let $\operator:\mathbb{R}^{p}\to V$ be a locally Lipschitz operator, where $(V,\|\cdot\|)$ is a given Banach space. Let $\mup$ be a random vector in $\mathbb{R}^{p}$ and let $u_{\mup}:=\operator(\mup)$ be the $V$-valued random variable obtained by mapping $\mup$ through $\operator$.
    Assume that the latter is Bochner integrable, that is, $\expe\|u_{\mup}\|<+\infty$. Denote by $B:=\operator(\{|\mathbf{c}|\le1\})$ the image of the unit ball. Then, there exists $L_{0}>0$ such that for all $M\ge1$ and all $L\ge L_{0}$ one has
    \begin{equation}
    \label{eq:zero}
    \inf_{\substack{\Psi'\in\encoders_{B,M}(V,\mathbb{R}^{n})\\\Psi\in\decoders_{M,L}(\mathbb{R}^{n},V)}}\expe\|u_{\mup}-\Psi(\Psi'(u_{\mup}))\|=0,
    \end{equation}  
    for all $n\ge p$.
    
    \begin{proof}
        It is sufficient to prove the case $n=p$. Let $M\ge1$. 
        Since $\operator$ is continuous, the latter admits a measurable right-inverse $g':V\to\mathbb{R}^{n}$, that is, a map for which $$\operator(g'(\operator(\mup)))=\operator(\mup).$$ Furthermore, the latter can be constructed such that $g'(B)\subseteq \{|\mathbf{c}|\le1\}$: for a detailed proof we refer the reader to the Appendix, particularly to Corollary \ref{corollary:selection}. Let now $\rho:\mathbb{R}\to\mathbb{R}$ be the following activation function
        $$\rho(x)=\begin{cases}
            x&|x|\le M\\
            M\tanh(x)/\tanh(M)&|x|>M,
        \end{cases}$$
        and define $\Psi':=\rho\circ g'$, where the action of $\rho$ is aimed at components. Then,
        \begin{equation}
        \label{eq:psiprime}
        \expe|\Psi'(u_{\mup})|\le M/\tanh(M) <+\infty\quad\text{and}\quad\Psi'(B)=g'(B)\subseteq \{|\mathbf{c}|\le1\}\subset[-M,M]^{n},
        \end{equation}
        in fact, $g'(B)\subset[-M,M]^{n}$ and $\rho$ act as the identity over $[-M,M]^{n}$. In particular, it follows from \eqref{eq:psiprime} that $\Psi'\in\encoders_{B,M}(V,\mathbb{R}^{n}).$ Now, noting that $\rho$ is invertible, we let $\Psi:=\operator\circ\rho^{-1}$, with the convention that $\rho^{-1}\equiv+\infty$ outside of $\rho(\mathbb{R})$. It is straightforward to see that the pair $(\Psi',\Psi)$ produces a lossless compression: thus, it is sufficient to prove that $\Psi\in\decoders_{M,L}(\mathbb{R}^{n},V)$ for a suitable choice of $L$. To this end, we note that $\Psi$ is locally Lipschitz over $[-M,M]^{n}$. Then, since the latter is compact, we may simply set
        $$L_{0}:=\sup_{\mathbf{c}\in[-M,M]^{n}}\|\partial\Psi\|(\mathbf{c})<+\infty,$$
        see, e.g., (ii) in Proposition \ref{prop:lip}.
    \end{proof}
\end{theorem}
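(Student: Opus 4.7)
The strategy is to reduce the problem to the case $n=p$ (adding extra dummy coordinates is harmless) and to exhibit an explicit pair $(\Psi',\Psi)$ that produces a lossless encoding, i.e.\ $\Psi(\Psi'(u_{\mup})) = u_{\mup}$ almost surely, while simultaneously belonging to the admissible classes $\encoders_{B,M}$ and $\decoders_{M,L}$. The underlying idea is simple: since the randomness lives in a $p$-dimensional space and the state variable is just the image $\operator(\mup)$, a right inverse of $\operator$ should give us a perfect encoder, with $\operator$ itself playing the role of the decoder. The challenge is to do this while respecting the measurability, boundedness, and Lipschitz constraints built into the definitions \eqref{eq:encoders}--\eqref{eq:decoders}.

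First I would construct a measurable right inverse $g':V\to\mathbb{R}^{p}$ satisfying $\operator(g'(\operator(\mathbf{c})))=\operator(\mathbf{c})$ for every $\mathbf{c}\in\mathbb{R}^{p}$, together with the control $g'(B)\subseteq\{|\mathbf{c}|\le 1\}$. This is a measurable selection question: since $\operator$ is continuous and $\{|\mathbf{c}|\le 1\}$ is compact, the set-valued map $v\mapsto \operator^{-1}(v)\cap\{|\mathbf{c}|\le 1\}$ has compact (hence closed) values on $B$, and a Kuratowski--Ryll-Nardzewski style selection theorem furnishes a Borel selector. Outside $B$ one can extend $g'$ arbitrarily in a measurable way. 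I would expect this step to be the main technical obstacle, and would anticipate invoking an auxiliary result of the kind stated earlier in the paper (Corollary on measurable selections in the Appendix).

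Next, to enforce the admissibility of the encoder, I would compose $g'$ with a componentwise ``saturation'' $\rho:\mathbb{R}\to\mathbb{R}$ that acts as the identity on $[-M,M]$ (so that values in $g'(B)\subseteq[-1,1]^{p}\subseteq[-M,M]^{p}$ are preserved) and is globally bounded by, say, $M/\tanh(M)$ outside. Setting $\Psi':=\rho\circ g'$ yields a bounded measurable map with $\expe|\Psi'(u_{\mup})|<+\infty$ and $\sup_{B}|\Psi'|\le M$, so $\Psi'\in\encoders_{B,M}(V,\mathbb{R}^{p})$. Crucially, since $\rho$ is identity on $[-M,M]^{p}$, it is invertible there, and the composition $\Psi:=\operator\circ\rho^{-1}$ satisfies $\Psi\circ\Psi'=\operator\circ g'$ on the support of $u_{\mup}$, giving a zero reconstruction error.

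Finally, I would verify that $\Psi\in\decoders_{M,L}(\mathbb{R}^{p},V)$ for any sufficiently large $L$. Here the admissibility condition only constrains $\Psi$ on the compact hypercube $[-M,M]^{p}$, where $\rho^{-1}$ is the identity and $\operator$ is locally Lipschitz by hypothesis. By (ii) in Proposition~\ref{prop:lip}, local Lipschitzness on a compact set implies global Lipschitzness there, so
\[
L_{0}:=\sup_{\mathbf{c}\in[-M,M]^{p}}\|\partial\Psi\|(\mathbf{c})<+\infty,
\]
and any $L\ge L_{0}$ makes $\Psi$ admissible. Putting the three ingredients together yields a pair $(\Psi',\Psi)$ with identically zero reconstruction error, proving \eqref{eq:zero} for $n=p$; the case $n>p$ follows by padding with a zero coordinate in the encoder and dropping it in the decoder.
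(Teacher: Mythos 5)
Your proposal follows the paper's proof essentially step for step: reduce to $n=p$, construct a measurable right inverse $g'$ of $\operator$ with $g'(B)\subseteq\{|\mathbf{c}|\le 1\}$ via a Kuratowski--Ryll-Nardzewski selection argument (Corollary~\ref{corollary:selection} in the Appendix), saturate it to get an admissible encoder $\Psi'=\rho\circ g'$, and take $\Psi=\operator\circ\rho^{-1}$, which lies in $\decoders_{M,L}$ for $L\ge L_0=\sup_{[-M,M]^n}\|\partial\Psi\|$ by compactness and Proposition~\ref{prop:lip}(ii). The argument and all key ingredients coincide; only the brief remark on how to pad extra coordinates for $n>p$ is made slightly more explicit than in the paper.
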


The takeaway from Theorem \ref{theorem:finite} is that if the PDE depends on $p$ (scalar) random variables, then an autoencoder with latent dimension $n=p$ can compress and reconstruct solutions with arbitrary (average) accuracy. Clearly, since Equation \eqref{eq:zero} features an infimum, for this to work, the autoencoder must be sufficiently expressive in the remaining parts of the architecture, especially in the decoder. In this respect, domain practitioners can find a valuable help in recent works that address the approximation capabilities of DNNs in high-dimensional spaces; see, for instance \cite{franco2023approximation, schwab2019deep}.

\subsection{Bounds on the latent dimension: infinite dimensional case}
\label{subsec:infinite}
We are now set to discuss the infinite-dimensional case in which the input variable is given by a random field. In particular, we shall deal with random fields $\mu$ having trajectories in $L^{q}(\Omega)$ and operators of the form $\operator:L^{q}(\Omega)\to V$, for $V$ a Hilbert state space, $1\le q\le +\infty$, and $\Omega$ a bounded domain.
We shall focus on two different scenarios: one in which $q=2$ and one in which $q=+\infty$. 
To this end, we recall that for any $q\le\tilde{q}$ one has continuous embedding
\begin{equation*}
    L^{\tilde{q}}(\Omega)\hookrightarrow L^{q}(\Omega),
\end{equation*}
where each space is considered with its canonical norm. In particular, it follows that $\mathcal{C}(L^{2}(\Omega),V)\subset\mathcal{C}(L^{\infty}(\Omega), V),$ meaning that case $q=+\infty$ allows for a broader class of operators and is consequently much harder to handle. However, it is worth addressing both situations, as the two can lead to very different analyses and error bounds. We start with the simpler $L^{2}$-case. For the sake of readability, both proofs have been postponed to Section \ref{sec:proofs}.

\begin{theorem}
    \label{theorem:fieldsl2v}
    \textnormal{($L^{2}$-version)} Let $\Omega$ be a bounded domain and let $(V,\|\cdot\|)$ be a Hilbert space. Let $\mu$ be a Gaussian random field defined over $\Omega$, with a square integrable mean $m:\Omega\to\mathbb{R}$ and a square integrable covariance kernel $\cov:\Omega\times\Omega\to\mathbb{R}.$ Finally, let $\operator:L^{2}(\Omega)\to V$ be an operator satisfying the growth condition below,
        \begin{equation*}
            \|\partial\operator\|(\nu)\le Ae^{\beta\|\nu\|_{L^{2}(\Omega)}}\textnormal{ for all }\nu\in L^{2}(\Omega),
        \end{equation*}
    for some constants $A,\beta>0$.  According to the Lemmae \ref{lemma:mercer} and \ref{lemma:abstractKKL}, let
    \begin{equation}
        \label{eq:kll2}
        \mu = \expe[\mu]+\sum_{i=1}^{+\infty}\sqrt{\lambda_{i}^{\mu}}\eta_{i}\varphi_{i}
        \quad\textnormal{and}\quad
        u = \sum_{i=1}^{+\infty}\sqrt{\lambda_{i}^{u}}\omega_{i}v_{i},
    \end{equation}
    be the KKL expansions of $\mu$ and $u$, respectively. Consider the control set $$B:=\operator\left(\left\{m+\sum_{i=1}^{+\infty}\sqrt{\lambda_{i}^{\mu}}\nu_{i}\varphi_{i}\;\textnormal{s.t.}\;\sum_{i=1}^{+\infty}|\nu_{i}|^{2}\le1\right\}\right),$$
    and let $\mathbb{P}$ be the probability law of $\mu$. Then, there exist positive constants $C=C(d,\operator,\mathbb{P})$ and $L_{0}=L_{0}(n,\operator)$, such that for all $M\ge1$ and $L\ge L_{0}$ one has
    \begin{equation}
        \label{eq:field2}
     \inf_{\substack{\Psi'\in\encoders_{B,M}(V,\mathbb{R}^{n})\\\Psi\in\decoders_{M,L}(\mathbb{R}^{n},V)}}\mathbb{E}\|u_{\mu}-\Psi(\Psi'(u_{\mu}))\|\le C\min\left\{\sqrt{\sum_{i>n}\lambda_{i}^{\mu}},\;\sqrt{\sum_{i>n}\lambda_{i}^{u}}\right\},
    \end{equation}    
    for all latent dimensions $n\ge 1$.
\end{theorem}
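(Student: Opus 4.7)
The right-hand side of~\eqref{eq:field2} is a minimum of two quantities, so my strategy is to exhibit, for each of them, an admissible encoder--decoder pair whose average reconstruction error is bounded by that quantity up to a universal constant; the infimum over the full admissible class is then controlled by the smaller of the two.

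\emph{Output-side bound} $C\sqrt{\sum_{i>n}\lambda_i^u}$. I apply Lemma~\ref{lemma:abstractKKL} to the $V$-valued random variable $u_\mu=\sum_i\sqrt{\lambda_i^u}\omega_iv_i$ and use a scaled linear PCA
\[
\Psi'(v):=\kappa\bigl(\langle v,v_1\rangle,\dots,\langle v,v_n\rangle\bigr),\qquad \Psi(c):=\kappa^{-1}\sum_{i=1}^{n}c_iv_i,
\]
with $\kappa>0$ chosen so small that $|\Psi'(v)|\le 1\le M$ on the control set~$B$; Corollary~\ref{cor:exp} together with the exponential bound on $\|\partial\operator\|$ uniformly bounds $\|v\|$ on~$B$, making this possible. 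Both maps are linear and measurable, $\Psi$ has Lipschitz constant $\kappa^{-1}$ depending only on $\operator$, and $\Psi(\Psi'(u_\mu))$ is the linear PCA projection, so Parseval yields $\mathbb{E}\|u_\mu-\Psi(\Psi'(u_\mu))\|^2=\sum_{i>n}\lambda_i^u$ and Jensen closes the estimate.

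\emph{Input-side bound} $C\sqrt{\sum_{i>n}\lambda_i^\mu}$. I introduce the KKL truncation $\mu_n:=m+\sum_{i=1}^n\sqrt{\lambda_i^\mu}\eta_i\varphi_i$, the surrogate $u_n:=\operator(\mu_n)$, and the finite-rank map $\Phi_n(c):=\operator(m+\sum_{i=1}^n\sqrt{\lambda_i^\mu}c_i\varphi_i)$, so that $u_n=\Phi_n(\eta_1,\dots,\eta_n)$. Combining Corollary~\ref{cor:lipgen} on the segment $[\mu_n,\mu]$, the growth $\|\partial\operator\|\le Ae^{\beta\|\cdot\|_{L^2}}$, Cauchy--Schwarz, the Parseval identity $\mathbb{E}\|\mu-\mu_n\|_{L^2}^2=\sum_{i>n}\lambda_i^\mu$, and the exponential moments~\eqref{eq:expint} of $\|\mu\|_{L^2}$ and $\|\mu_n\|_{L^2}$ from Lemma~\ref{lemma:mercerl2v}, I arrive at the key inequality
\[
\mathbb{E}\|u_\mu-u_n\|\le A\sqrt{\mathbb{E}\bigl[e^{2\beta(\|\mu\|_{L^2}+\|\mu_n\|_{L^2})}\bigr]}\sqrt{\mathbb{E}\|\mu-\mu_n\|_{L^2}^2}\le C_1\sqrt{\sum_{i>n}\lambda_i^\mu}.
\]
I then set $\Psi:=\Phi_n$, which is locally Lipschitz on $[-M,M]^n$ by Proposition~\ref{prop:lip}(v) with a constant depending only on $n$ and $\operator$, and, via the measurable-selection argument of Corollary~\ref{corollary:selection}, take $\Psi'(v)$ to be the smallest-norm element of $\argmin_{c}\|v-\Phi_n(c)\|$ over a suitable compact set. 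Since $(\eta_1,\dots,\eta_n)$ is itself feasible for this minimization, the minimizing property gives the pointwise estimate $\|u_\mu-\Phi_n(\Psi'(u_\mu))\|\le\|u_\mu-\Phi_n(\eta_1,\dots,\eta_n)\|=\|u_\mu-u_n\|$, and taking expectations delivers the bound.

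\emph{Main obstacle.} The delicate point is the admissibility of the argmin-based encoder: on the tailored control set~$B$ one can force $|\Psi'|\le 1$ because, for $v=\operator(\nu)$ with $\nu\in A$, the first $n$ KKL coordinates of $\nu$ have Euclidean norm at most~$1$ and are natural competitors in the minimization, so the smallest-norm minimizer inherits this bound; off~$B$, however, it can in principle be unbounded and one has to preserve measurability, integrability of $\Psi'(u_\mu)$, and a Lipschitz constant~$L_0$ independent of $M$. I would handle this by restricting the argmin to a ball of radius $R_n=\Theta(\sqrt{n})$, so that Gaussian concentration renders the event $\{|(\eta_1,\dots,\eta_n)|>R_n\}$ exponentially small in~$n$; the corresponding exceptional contribution to the average error can then be absorbed into the $\mathbb{P}$-dependent prefactor~$C$ via the exponential moments~\eqref{eq:expint}, which is the only place in the argument where the Gaussianity of~$\mu$ is needed beyond the KKL expansion itself.
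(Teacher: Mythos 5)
Your overall architecture mirrors the paper's: an output-side bound via scaled linear PCA (Parseval plus Jensen) and an input-side bound by truncating the KKL expansion of $\mu$, composing with $\operator$, and choosing the encoder as a measurable selection of a nearest-point map. The output-side construction is sound. The input-side construction, however, has two genuine gaps precisely where you flag the "main obstacle."

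First, the claim that the smallest-norm minimizer of $c\mapsto\|v-\Phi_n(c)\|$ "inherits" the bound $|c|\le1$ on the control set is unjustified. For $v=\operator(\nu)$ with $\nu\in B_1$, the vector $(\nu_1,\dots,\nu_n)$ is a \emph{feasible competitor} with norm at most~$1$, but it need not be a \emph{minimizer}; the actual minimizer (even the smallest-norm one) can lie anywhere inside the radius-$R_n$ ball, so you do not obtain $|\Psi'|\le1$ (or any $n$-uniform bound) on~$B$ for free. The paper's fix is structural: for $v$ in the control set, it defines the encoder as the argmin over $Q^{\dagger}(B_{R_0})$, where $R_0$ is precisely the $L^2$-radius of the preimage set $B_1$; Cauchy--Schwarz then bounds every element of $Q^{\dagger}(B_{R_0})$ coordinatewise, so \emph{the whole feasible set} --- hence the minimizer --- is bounded, and a subsequent rescaling produces an admissible encoder. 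You would need a similar a-priori bound on the feasible set, not on a competitor.

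Second, absorbing the exceptional contribution from $\{|(\eta_1,\dots,\eta_n)|>R_n\}$ into the constant $C$ does not go through, because $C$ must be independent of~$n$ while the target quantity $\sqrt{\sum_{i>n}\lambda_i^\mu}$ can decay arbitrarily fast. With $R_n=\Theta(\sqrt{n})$ the tail probability is exponentially small in~$n$, but $\max_{|c|\le R_n}\|\Phi_n(c)\|$ can grow like $e^{\gamma\sqrt{\lambda_1^\mu}R_n}$ (Corollary~\ref{cor:exp}); the resulting exceptional term decays eventually exponentially in $n$, which is \emph{not} $O(\sqrt{\sum_{i>n}\lambda_i^\mu})$ when the eigenvalues decay super-exponentially (e.g.\ $\lambda_i^\mu=e^{-i^2}$). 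The paper avoids this entirely by treating the radius $R$ as a free parameter, keeping the decoder and its Lipschitz constant fixed as $R$ varies, and letting $R\to\infty$ with dominated convergence so that the exceptional term vanishes and one obtains exactly $\delta_{n,M,L}\le\mathbb{E}\|u_\mu-u_{\mu_n}\|$ without any remainder to absorb. Replacing your fixed-$R_n$ step with that limiting argument (after fixing the argmin-domain issue above) would close the proof.
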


We now report on the $L^{\infty}$-counterpart of Theorem \ref{theorem:fieldsl2v}. As we mentioned, this is a much more difficult case that requires additional care.

\begin{theorem}
    \label{theorem:fields}
    \textnormal{($L^{\infty}$-version)} Let $\Omega$ be a bounded domain and let $(V,\|\cdot\|)$ be a Hilbert space. Let $\mu$ be a Gaussian random field defined over $\Omega$, with a bounded mean $m:\Omega\to\mathbb{R}$ and a Lipschitz continuous covariance kernel $\cov:\Omega\times\Omega\to\mathbb{R}.$ Finally, let $\operator:L^{\infty}(\Omega)\to V$ be an operator satisfying the growth condition below,
        \begin{equation}
            \label{eq:thmgrowth}
            \|\partial\operator\|(\nu)\le Ae^{\gamma\|\nu\|_{L^{\infty}(\Omega)}}\textnormal{ for all }\nu\in L^{\infty}(\Omega),
        \end{equation}
    for some constants $A,\gamma>0$. According to the Lemmae \ref{lemma:mercer} and \ref{lemma:abstractKKL}, let
    \begin{equation}
        \label{eq:kl}
        \mu = \expe[\mu]+\sum_{i=1}^{+\infty}\sqrt{\lambda_{i}^{\mu}}\eta_{i}\varphi_{i}
        \quad\textnormal{and}\quad
        u = \sum_{i=1}^{+\infty}\sqrt{\lambda_{i}^{u}}\omega_{i}v_{i},
    \end{equation}
    be the KKL expansions of $\mu$ and $u$, respectively. Consider the control set $$B:=\operator\left(\left\{m+\sum_{i=1}^{+\infty}\sqrt{\lambda_{i}^{\mu}}\nu_{i}\varphi_{i}\;\textnormal{s.t.}\;\sum_{i=1}^{+\infty}|\nu_{i}|^{2}\le1\right\}\right),$$
    and let $\mathbb{P}$ be the probability law of $\mu$. Fix any small $0<\epsilon<1/2$.
    Then, there exist two positive constants, $C=C(d,\operator,\mathbb{P})$ and $L_{0}=L_{0}(n,\operator)$, such that for all $M\ge1$ and $L\ge L_{0}$ one has
    \begin{equation}
        \label{eq:field}
     \inf_{\substack{\Psi'\in\encoders_{B,M}(V,\mathbb{R}^{n})\\\Psi\in\decoders_{M,L}(\mathbb{R}^{n},V)}}\mathbb{E}\|u_{\mu}-\Psi(\Psi'(u_{\mu}))\|\le C\sqrt{\log(1/\epsilon)}\min\left\{\left\|\sum_{i>n}\lambda_{i}^{\mu}\varphi_{i}^{2}\right\|_{L^{\infty}(\Omega)}^{1/2},\;\sqrt{\sum_{i>n}\lambda_{i}^{u}}\right\},
    \end{equation}    
    for all latent dimensions $n\ge 1$ satisfying either $\sum_{i>n}\lambda_{i}^{\mu}\ge\epsilon$ or $\sum_{i>n}\lambda_{i}^{\mu}=0.$
\end{theorem}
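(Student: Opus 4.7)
The plan is to establish the two terms inside the minimum separately, by constructing two different admissible autoencoder pairs, and then taking the better of the two. The first branch exploits the KKL structure of the input field $\mu$, while the second is essentially a linear (POD-type) construction on the output random variable $u_\mu$. Throughout, to avoid distinguishing cases, I would work on the subset of $L^\infty(\Omega)$ on which $\operator$ is defined and rely on Theorem \ref{theorem:equiv2} to move between measurable and network-realized admissible encoder/decoder pairs.

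\textbf{Branch 1 (input side).} I would define the decoder $\Psi : \mathbb{R}^n \to V$ by $\Psi(c_1,\dots,c_n) := \operator\bigl(m + \sum_{i=1}^{n}\sqrt{\lambda_i^\mu}\,c_i\,\varphi_i\bigr)$, and the encoder $\Psi'$ by first taking a measurable right inverse of the KKL coordinate map $\mu \mapsto (\eta_1,\dots,\eta_n)$ through $\operator$ (this type of measurable selector is the same tool used in the proof of Theorem \ref{theorem:finite} via Corollary \ref{corollary:selection}) and then clipping each coordinate to $[-M,M]$. Writing $\mu_n := m + \sum_{i=1}^{n}\sqrt{\lambda_i^\mu}\,\eta_i\,\varphi_i$, this gives $\Psi(\Psi'(u_\mu)) = \operator(\mu_n)$ outside a controlled set. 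By Corollary \ref{cor:lipgen} together with \eqref{eq:thmgrowth},
\begin{equation*}
\|u_\mu - \operator(\mu_n)\| \le A\,e^{\gamma(\|\mu\|_{L^\infty}+\|\mu_n\|_{L^\infty})}\,\|\mu-\mu_n\|_{L^\infty(\Omega)}.
\end{equation*}
Taking expectation and applying Cauchy--Schwarz reduces matters to controlling $\mathbb{E}\|\mu-\mu_n\|_{L^\infty}^2$ and an exponential moment. The residual $\mu - \mu_n$ is a mean-zero Gaussian field whose covariance is the truncated kernel $\cov_{n+1,\infty}$; by Lemma \ref{lemma:mercer}(ii) this kernel is $1/2$-Hölder with a constant that does not depend on $n$, so Lemma \ref{lemma:bound} applies with $\sigma_n^2 = \|\sum_{i>n}\lambda_i^\mu \varphi_i^2\|_{L^\infty(\Omega)}$ and yields
\begin{equation*}
\mathbb{E}\|\mu-\mu_n\|_{L^\infty(\Omega)}^{2} \le c_1^{2}\,\sigma_n^{2}\bigl(1+\sqrt{\log^{+}(1/\sigma_n)}\bigr)^{2}.
\end{equation*}
The exponential moments of $\|\mu\|_{L^\infty}$ and $\|\mu_n\|_{L^\infty}$ are finite (and uniformly so in $n$) by the second estimate in \eqref{eq:supint}, because the covariances of $\mu$ and $\mu_n$ share a common Hölder constant.

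\textbf{Turning $\log^{+}(1/\sigma_n)$ into $\log(1/\epsilon)$.} Since $\{\varphi_i\}$ is orthonormal in $L^2(\Omega)$, integrating gives $|\Omega|\,\sigma_n^{2}\ge\int_\Omega \sum_{i>n}\lambda_i^\mu\varphi_i^{2}(x)\,dx = \sum_{i>n}\lambda_i^\mu$, so under the hypothesis $\sum_{i>n}\lambda_i^\mu \ge \epsilon$ we obtain $\sigma_n\ge\sqrt{\epsilon/|\Omega|}$ and hence $\log^{+}(1/\sigma_n)\le\tfrac12\log(|\Omega|/\epsilon)$. Combining the above bounds gives exactly the first entry of the minimum, multiplied by $C\sqrt{\log(1/\epsilon)}$. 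The trivial sub-case $\sum_{i>n}\lambda_i^\mu=0$ is immediate since then $\mu=\mu_n$ almost surely.

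\textbf{Branch 2 (output side).} Here I would mimic POD on the output KKL basis $\{v_i\}$. Define $\Psi'(v) := (\langle v,v_1\rangle,\dots,\langle v,v_n\rangle)$ (clipped to $[-M,M]^n$ to enforce admissibility on $B$, which is possible since $B$ is bounded in $V$ by the exponential growth of $\operator$ on the unit ball in the RKHS coordinates) and $\Psi(c):=\sum_{i=1}^n c_i v_i$, which is $1$-Lipschitz and thus admissible for any $L\ge 1$. Using the expansion \eqref{eq:kl}, orthonormality gives $\|u_\mu-\Psi(\Psi'(u_\mu))\|^2 = \sum_{i>n}\lambda_i^u \omega_i^2$, so Jensen gives $\mathbb{E}\|u_\mu-\Psi(\Psi'(u_\mu))\|\le\sqrt{\sum_{i>n}\lambda_i^u}$, which is the second entry of the minimum. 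Taking the better of the two constructions yields \eqref{eq:field}.

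\textbf{Main obstacles.} The delicate part is Branch 1: one must (i) produce a measurable right-inverse that lands in the ball where the KKL truncation makes sense and is controlled by $M$ on $B$; (ii) certify that the composed decoder $\operator\circ(\text{truncation})$ is $L_0$-Lipschitz on $[-M,M]^n$ for some $L_0$ depending only on $n$ and $\operator$, which uses the compactness of $[-M,M]^n$ and the locally-Lipschitz nature deduced from \eqref{eq:thmgrowth} via (iii) of Proposition \ref{prop:lip}; and (iii) show that the exponential moment $\mathbb{E}\,e^{2\gamma(\|\mu\|_{L^\infty}+\|\mu_n\|_{L^\infty})}$ is bounded uniformly in $n$, which rests on the uniform Hölder bound of the truncated kernels. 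Once these points are secured, the calibration $\sigma_n^2\ge \epsilon/|\Omega|$ from the hypothesis on $\sum_{i>n}\lambda_i^\mu$ closes the argument and exhibits the $\sqrt{\log(1/\epsilon)}$ factor.
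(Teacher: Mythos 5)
Your overall plan — two constructions, one for each term in the minimum, then taking the better — matches the paper's structure, and Branch~2 (the POD-type output-side bound), the Claim-4-style estimate via Corollary~\ref{cor:lipgen} and Cauchy--Schwarz, the uniform-in-$n$ exponential-moment control through the shared H\"older constant of the truncated kernels, and the calibration $|\Omega|\sigma_n^2\ge\sum_{i>n}\lambda_i^\mu$ are essentially what the paper does (your calibration is in fact cleaner than the paper's own argument for that step). The gap is in Branch~1: the encoder cannot be built as a ``measurable right inverse of the KKL coordinate map through $\operator$''. The right-inverse device of Theorem~\ref{theorem:finite} (via Corollary~\ref{corollary:selection}) succeeds there only because $n\ge p$ and the reconstruction is \emph{exact}: whatever preimage $g'$ returns, one still has $\operator(g'(\operator(\mup)))=\operator(\mup)$. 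In the present setting the encoder can return only $n<\infty$ coordinates, so the best you would get from a right inverse $g'$ of $\operator$ is $\Psi(\Psi'(u_\mu))=\operator(\nu_n)$ with $\nu=g'(u_\mu)$ some preimage of $u_\mu$; since $\operator$ is not assumed injective, $\nu$ is generally not $\mu$, there is no control on $\|\nu-\nu_n\|_{L^\infty}$, and you cannot close the loop back to the Gaussian-tail estimates on $\mathbb{E}\|\mu-\mu_n\|_{L^\infty}^2$. Equivalently, the map $u_\mu\mapsto(\eta_1(\mu),\dots,\eta_n(\mu))$ you would like is not even well defined as a function of $u_\mu$ alone.

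The paper avoids this identifiability issue entirely by defining the encoder as a measurable selection of a best-approximation argmin over the range of $\operator\circ Q$,
\begin{equation*}
\Psi'_R(v)\in\argmin_{\nup\in Q^{\dagger}(B_R)}\|v-\operator(Q\nup)\|,
\end{equation*}
which needs the argmin-selection result (Lemma~\ref{lemma:optselec}), not the right-inverse Corollary~\ref{corollary:selection}. By construction this gives $\|u_\mu-\operator(Q\Psi'_R(u_\mu))\|\le\|u_\mu-\operator(QQ^{\dagger}\mu)\|=\|u_\mu-u_{\mu_n}\|$, since $Q^{\dagger}\mu$ is a feasible competitor whenever $\mu\in B_R$, and no recovery of $\mu$ from $u_\mu$ is ever attempted; a radius-$R$ truncation together with dominated convergence then handles admissibility and the unbounded support of $\mu$. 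Replacing your right-inverse encoder with this argmin selection is the missing idea; once that is done, the remainder of your plan (items (ii) and (iii) in your list of obstacles, and the $\epsilon$-calibration) carries through as you describe.
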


The strength of the two Theorems lies in that they show how nonlinear autoencoders can simultaneously benefit from the regularity of both the input and the output fields, something that is clearly not possible with linear methods alone: see, e.g., the discussion by Lanthaler et al. in \cite{lanthaler2022error}, Section 3.4.1. 
Furthermore, the results in Theorem \ref{theorem:fieldsl2v} and \ref{theorem:fields} are fairly general, since they are both framed in a purely abstract fashion with mild assumptions on the regularity of the forward operator. For example, following our previous discussion in Section \ref{subsec:local}, we note that the result in Theorem \ref{theorem:fields} can be easily applied to the case of Darcy flows in porous media.

\begin{remark}
    \label{remark:gaussian}
    All the results reported within this Section are limited to the case of Gaussian processes. While this is an extremely broad class of stochastic processes, one may wonder whether similar results can be obtained for other probability distributions. In general, the $L^{2}$ case, namely Theorem \ref{theorem:fieldsl2v}, can be readily applied to any random field $\mu$ that satisfies
    \begin{equation}
        \label{eq:expintl2}
        \mathbb{E}\left[e^{\beta\|\mu\|_{L^{2}(\Omega)}}\right]<+\infty\text{ for all }\beta>0.
    \end{equation}
    In fact, with such an exponential integrability, it is straightforward to see that these processes admit a KKL expansion and that the proof of Theorem \ref{theorem:fieldsl2v} can be easily adapted. For the $L^{\infty}$-case, instead, stronger assumptions are required. In particular, these should be sufficiently demanding to ensure that the properties analogous to those of Lemmas \ref{lemma:bound} and \ref{lemma:mercer} hold.

    Clearly, one may also go the other way around, i.e. by restricting the analysis to more regular operators, with the advantage of allowing for a larger class of probability distributions. For example, Theorems \ref{theorem:fieldsl2v} and \ref{theorem:fields} impose an exponential bound on the local variation, $\|\partial\operator\|$. This condition is trivially satisfied by all Lipschitz continuous operators, as Lipschitz continuity is a far more stringent property (cf. Lemma \ref{prop:lip}). In particular, if one restricts the attention to such operators, the proof of Theorem \ref{theorem:fields} can be re-adapted with weaker assumptions on the random field. That is, one needs to have $\mathbb{E}\|\mu\|_{L^{\infty}(\Omega)}<+\infty$ and
    \begin{equation*}
    \mathbb{E}\|\mu-\mupi\|_{L^{\infty}(\Omega)} = O\left(\|\text{Cov}_{\mu-\mupi}\|_{L^{\infty}(\Omega)}\right),
    \end{equation*}
    where $\text{Cov}_{\mu-\mupi}$ is the covariance kernel of $\mu-\mupi.$. Conversely, the case $L^{2}$ becomes trivial, as one can replace the condition in \eqref{eq:expintl2} with $\mathbb{E}\|\mu\|_{L^{2}(\Omega)}<+\infty$.
\end{remark}

\begin{remark}
    \label{remark:boundness}
    Theorems \ref{theorem:fieldsl2v} and \ref{theorem:fields} provide different bounds for the reconstruction error. However, the two become very similar if the eigenfunctions of the input field, $\varphi_{i}$, are uniformly bounded. In fact, if $\sup_{i}\|\varphi_{i}\|_{L^{\infty}(\Omega)}\le D$ for some $D>0$, then
    $$\left\|\sum_{i>n}\lambda_{i}^{\mu}\varphi_{i}^{2}\right\|^{1/2}_{L^{\infty}(\Omega)}\le\sqrt{\sum_{i>n}\lambda_{i}^{\mu}\|\varphi_{i}^{2}\|_{L^{\infty}(\Omega)}}\le\sqrt{\sum_{i>n}\lambda_{i}^{\mu}D^{2}}= D\sqrt{\sum_{i>n}\lambda_{i}^{\mu}}.$$
    In particular, up to fixing the value of $\epsilon$ in Theorem \ref{theorem:fields}, and adjusting the value of the multiplicative constant $C>0$, one can replace the bounding expression in Theorem \ref{theorem:fields} with that in Theorem \ref{theorem:fieldsl2v}.
    However, whether such a uniform boundness holds or not depends on the problem itself. In fact, although erroneously stated by some authors, this property is not directly implied by the regularity of the covariance kernel: see, e.g., \cite{zhou2002covering} for an instructive counterexample.
\end{remark}

\subsection{Proofs of Theorems \ref{theorem:fieldsl2v} and \ref{theorem:fields}}
\label{sec:proofs}
The interested reader can find below the proofs of the two Theorems, which we have postponed here due to their lengths and technicalities. We start with the proof of Theorem \ref{theorem:fields}, as that is arguably the most difficult and most interesting one; the $L^{2}$ case will then follow quite easily.

\subsubsection*{Proof of Theorem \ref{theorem:fields}}

To begin, we note that the case $\sum_{i>n}\lambda_{i}^{\mu}=0$ is already covered by Theorem \ref{theorem:finite}, as it falls into the finite-rank context. Thus, from now on we shall focus on proving the error bound for the remaining case, that is, when $\sum_{i>n}\lambda_{i}^{\mu}\ge\epsilon$.

Without loss of generality, we assume that $\expe[\mu]\equiv0$. Before starting with the proof, we note that the existence of a KKL expansion for $u$ is guaranteed by the exponential growth condition for $\operator$. In fact, by \eqref{eq:thmgrowth} and Corollary \ref{cor:exp}, it follows that
$$\expe\|u_{\mu}\|^{2}\le \expe\left[\left(A'e^{\gamma'\|\mu\|_{L^{\infty}(\Omega)}}\right)^{2}\right] =  (A')^{2}\expe\left[e^{2\gamma'\|\mu\|_{L^{\infty}(\Omega)}}\right]<+\infty,$$
for some $A',\gamma'>0$, where the last inequality is a direct consequence of Lemma \ref{lemma:bound}. In particular, $u_{\mu}$ is a squared-integrable $V$-valued random variable, and thus admits a KKL expansion (cf. Lemma \ref{lemma:abstractKKL}).
\\\\
Let now $n\in\mathbb{N}$, with $n\ge1$, and let
$$\delta_{n,M,L}:=\inf_{\substack{\Psi'\in\encoders_{B,M}(V,\mathbb{R}^{n})\\\Psi\in\decoders_{M,L}(\mathbb{R}^{n},V)}}\mathbb{E}\|u_{\mu}-\Psi(\Psi'(u_{\mu}))\|.$$
We shall split the proof into several steps. More precisely, we shall prove the following.
\\\\
\textbf{Claim 1}. The definition of the control set, $B$, is well-posed.
\\\\
\textbf{Claim 2.} $\exists\ell_{0}=\ell_{0}(n,\operator,\mathbb{P})$ such that $\delta_{n,M,L}\le\sqrt{\sum_{i>n}\lambda_{i}^{u}}$ for all $M\ge1$ and $L\ge \ell_{0}$.
\\\\
\textbf{Claim 3.} $\exists\ell_{0}'=\ell_{0}'(n,\operator,\mathbb{P})$ such that $\delta_{n,M,L}\le\expe\|u-u_{\mupi}\|$ for all $M\ge1$ and $L\ge \ell_{0}'$. Here, $u_{\mupi}$ is the operator image where the input field $\mu$ has been replaced by its $n$ th KKL truncation, $\mu_{n}$.
\\\\
\textbf{Claim 4.} $\expe\|u-u_{\mupi}\|\le c\expe^{1/2}\|\mu-\mu_{n}\|_{L^{\infty}(\Omega)}^{2}$ for some $c=c(d,\operator,\mathbb{P})$.
\\\\
\textbf{Claim 5.} $\expe^{1/2}\|\mu-\mu_{n}\|_{L^{\infty}(\Omega)}^{2}\le c'\sqrt{\log(1/\epsilon)}\left\|\sum_{i>n}\lambda_{i}^{\mu}\varphi_{i}^{2}\right\|_{L^{\infty}(\Omega)}^{1/2}$ for some $c'=c'(d,\mathbb{P})$.
\\\\
Clearly, once all of the above have been proven, setting $C=\max\{cc',1\}$ and $L_{0}:=\max\{\ell_{0},\ell_{0}'\}$ quickly yields the conclusion. Thus, we now proceed to prove the five claims one by one.
\\\\
\textit{Proof of Claim 1.} Let $B_{1}:=\{\sum_{i=1}^{+\infty}\sqrt{\lambda_{i}^{\mu}}\nu_{i}\varphi_{i}\;|\;\sum_{i}|\nu_{i}|^{2}\le1\}$. Then, for every element of $B_{1}$ and every $\x\in\Omega$, we have
$$\left|\sum_{i=1}^{+\infty}\sqrt{\lambda_{i}^{\mu}}\nu_{i}\varphi_{i}(\x)\right|\le\sqrt{\sum_{i=1}^{+\infty}|\nu_{i}|^{2}}\sqrt{\sum_{i=1}^{+\infty}\lambda_{i}^{\mu}\varphi_{i}(\x)^{2}}\le\sqrt{\cov(\x,\x)}.$$
Since $\cov$ is bounded, this shows that $B_{1}$ is a bounded subset of $L^{\infty}(\Omega)$. In particular, since $B=\operator(B_{1})$, the definition of the control set is well-posed. Furthermore, the latter is $\|\cdot\|$ bounded, as $\operator$ maps bounded sets onto bounded sets (cf. Corollary \ref{cor:exp}).
\qed 
\\\\
\textit{Proof of Claim 2.} Let $P:\mathbb{R}^{n}\to V$ and $P^{\dagger}:V\to\mathbb{R}^{n}$ be the linear operators below
$$P:[c_{1},\dots,c_{n}]\mapsto\sum_{i=1}^{n}\sqrt{\lambda_{i}^{u}}c_{i}v_{i},\quad\quad P^{\dagger}:v\mapsto\left[\frac{1}{\sqrt{\lambda_{1}^{u}}}\langle v,v_{1}\rangle,\dots,\frac{1}{\sqrt{\lambda_{n}^{u}}}\langle v,v_{n}\rangle\right].$$
Since $P^{\dagger}$ is both linear and continuous, and $B$ is bounded, the image $P^{\dagger}(B)$ is also bounded. Thus, let $M_{0}:=\sup_{\mathbf{c}\in P^{\dagger}(B)}|\mathbf{c}|<+\infty$. Similarly, in light of the linearity and continuity of $P$, let $\text{Lip}(P)$ be the Lipschitz constant of $P$. Define the maps
$$\tilde{P}^{\dagger}:= M_{0}^{-1}P,\quad\quad \tilde{P}:=M_{0}P.$$
It is straightforward to see that $\tilde{P}^{\dagger}\in\encoders_{B,M}(V,\mathbb{R}^{n})$ and $\tilde{P}^{\dagger}\in\decoders_{M,L}(\mathbb{R}^{n},V)$, for all $M\ge1$ and all $L\ge \ell_{0}:=M_{0}\text{Lip}(P)$, where $\ell_{0}$ ultimately depends on $n,$ $\operator$ and $\mathbb{P}$. Since
$$\expe\|u-\tilde{P}\tilde{P}^{\dagger}u\|\le\sqrt\expe\|u-PP^{\dagger}u\|^{2}=\expe^{1/2}\left\|\sum_{i>n}\sqrt{\lambda_{i}^{u}}\omega_{i}v_{i}\right\|^{2}=\sqrt{\sum_{i>n}\lambda_{i}^{u}},$$
this proves Claim 2.\qed 
\\\\
\textit{Proof of Claim 3.} 
As before, it is useful to define the mappings 
$Q^{\dagger}:\mathbb{R}^{n}\to L^{\infty}(\Omega)$ and 
$Q^{\dagger}:L^{\infty}(\Omega)\to\mathbb{R}^{n}$ as
$$Q:[c_{1},\dots,c_{n}]\mapsto\sum_{i=1}^{n}\sqrt{\lambda_{i}^{\mu}}c_{i}\varphi_{i},
\quad\quad Q^{\dagger}:\nu\to\left[\frac{1}{\sqrt{\lambda_{1}^{\mu}}}\langle \nu,\varphi_{1}\rangle,\dots,\frac{1}{\sqrt{\lambda_{n}^{\mu}}}\langle \nu,\varphi_{n}\rangle\right],$$
which are both linear and continuous. Following our previous notation, for any $R>0$, let $B_{R}=\{\|\nu\|_{L^{\infty}(\Omega)}\le r\}\subseteq L^{\infty}(\Omega)$ be the closed ball of radius $R$.
We define the map $\Psi'_{R}:V\to \mathbb{R}^{n}$ as any measurable selection of the optimization problem below,
\begin{equation*}
    \Psi^\prime_R: v\mapsto \argmin_{\mathbf{\nup}\in Q^{\dagger}(B_{R})}\|v-\operator(Q\nup)\|,
\end{equation*}
The existence of such a map is a straightforward consequence of standard results in set-valued analysis. In fact: 
\begin{itemize}
    \item [i)] the set of minimizers, $Q^{\dagger}(B_{R})\subset\mathbb{R}^{n}$, is compact. This is because $B_{R}$ is both closed and bounded, while the map $Q^{\dagger}$ is linear, continuous, and has finite rank;\\
    \item [ii)] the map objective functional, $(v,\nup)\to\|v-\operator(Q\nup)\|$, is continuous.
\end{itemize}  
Then, these two properties are enough to guarantee the existence of a measurable map acting as "minimal selection": for the interested reader, we refer to the Appendix, Lemma \ref{lemma:optselec}.
Let now $$R_{0}:=\|\cov\|^{1/2}_{L^{\infty}(\Omega\times\Omega)}.$$ In light of our calculations at the beginning of the proof, we note that the control set $B$ is a subset of $\operator(B_{R})$ for all $R\ge R_{0}$. Thus, assuming $R\ge R_{0}$, we set
\begin{equation*}
    \tilde{\Psi}^\prime_{R}(v):=\mathbbm{1}_{B}(v)\Psi^\prime_{R_{0}}(v)+\mathbbm{1}_{\operator(B_{R})\setminus B}(v)\Psi^\prime_R (v).
\end{equation*}
Since both sets $B$ and $\operator(B_{R})$ are measurable (cf. Lemma \ref{lemma:measurability} in the Appendix), and $\tilde{\Psi}'_{R}(V)\subseteq Q^{\dagger}(B_{R})$, the above is both measurable and bounded (thus integrable). Furthermore, the above construction ensures that $\tilde{\Psi}'_{R}\equiv0$ outside of $\operator(B_{R})$, and, most importantly
$$|\tilde{\Psi'_{R}}(v)|\le \frac{n|\Omega|^{1/2}R_{0}}{\sqrt{\lambda_{n}^{\mu}}}$$
for all $v\in B$. In fact, for any $\nu$ with $\|\nu\|_{L^{\infty}(\Omega)}\le R_{0}$ one has
$$\left|\frac{1}{\sqrt{\lambda_{i}^{\mu}}}\langle \nu, \varphi_{i}\rangle\right|\le \frac{|\Omega|^{1/2}}{\sqrt{\lambda_{i}^{\mu}}}R_{0},$$
by the Cauchy-Schwartz inequality. In particular, $\tilde{\Psi}'_{R}$ is bounded on $B$, with a constant that is independent of $R$. Furthermore, by very definition,
\begin{equation}
\label{eq:optimality}
\|\operator(\nu)-\operator(Q\tilde{\Psi}'_{R}(\nu))\|\le\|\operator(\nu)-\operator(QQ^{\dagger}\nu)\|\end{equation}
for all $\nu\in B_{R}$. Now, exploiting the compactness of $[-1,1]^{n}$, let $\text{Lip}(\operator\circ Q)$ be the Lipschitz constant of $\operator\circ Q$ over $[-1,1]^{n}$ (recall that the latter has a finite due composition; see, e.g., (v) and (ii) in Proposition \ref{prop:lip}). Define the maps
$$\hat{\Psi}_{R}':v\mapsto \frac{\sqrt{\lambda_{n}^{\mu}}}{n|\Omega|^{1/2} R_{0}}\tilde{\Psi}_{R}',\quad\quad\Psi:\mathbf{c}\to \operator\left(\frac{n|\Omega|^{1/2}R_{0}}{\sqrt{\lambda_{n}^{\mu}}}Q\mathbf{c}\right).$$
Then the couple $(\hat{\Psi}_{R}',\;\Psi)$ forms an admissible encoding-decoding pair for all $M\ge1$ and $L\ge\ell_{0}'$, where $$\ell_{0}':=\text{Lip}(\operator\circ Q)n|\Omega|^{1/2}R_{0}/\sqrt{\lambda_{n}^{\mu}}$$
depends only on $n$, $\operator$ and $\mathbb{P}$. Consequently, for all $M\ge1$ and $L\ge\ell_{0}'$, one has
\begin{multline} 
\label{eq:daje}
\delta_{n,M,L}\le\mathbb{E}\|u_{\mu}-\operator_{n}(\tilde{\Psi}'_{R}(u_{\mu}))\|\le\\\le \mathbb{E}\left[\mathbbm{1}_{B_{R}}(\mu)\|u_{\mu}-\operator_{n}(\tilde{\Psi}'_{R}(u_{\mu}))\|\right]+\mathbb{E}\left[\mathbbm{1}_{B_{R}^{c}}(\mu)\|u_{\mu}-\operator_{n}(\tilde{\Psi}'_{R}(u_{\mu}))\|\right]. 
\end{multline}
For better readability, we now write $u_{\mupi}:=\operator(QQ^{\dagger}\mu)$, so that $u_{\mupi}$ is the image of the operator that is obtained by replacing the input field $\mu$ with its $n$-truncation. Then, by the very definition of $\tilde{\Psi}_{R}'$ and thanks to \eqref{eq:optimality}, we can continue \eqref{eq:daje} as
\begin{equation*} 
\delta_{n,M,L}\le\dots\le \mathbb{E}\|u_{\mu}-u_{\mupi}\|+\mathbb{E}\left[\mathbbm{1}_{B_{R}^{c}}(\mu)\|u_{\mu}-u_{0}\|\right]. 
\end{equation*}
Since the above holds for every $R\ge R_{0}$, we can let $R\to+\infty$. In doing so, we note that $\|u_{\mu}-u_{0}\|$ is an integrable random variable. Thus, $\mathbb{E}\left[\mathbbm{1}_{B_{R}^{c}}(\mu)\|u_{\mu}-u_{0}\|\right]\to0$ by dominated convergence. Claim 3 follows.\qed
\\\\
\textit{ Proof of Claim 4.}
As a direct consequence of Corollary \ref{cor:lipgen} and Eq. \eqref{eq:thmgrowth}, we have
\begin{align*}
     \|u_{\mu}-u_{\mupi}\|&\le\left(\sup_{0\le t\le 1}Ae^{\gamma\|t\mu+(1-t)\mupi\|_{L^{\infty}(\Omega)}}\right)\|\mu-\mupi\|_{L^{\infty}(\Omega)}\\&\le Ae^{\gamma\|\mu\|_{L^{\infty}(\Omega)}+\gamma\|\mupi\|_{L^{\infty}(\Omega)}}\|\mu-\mupi\|_{L^{\infty}(\Omega)}.
     \end{align*}
Then, by the Cauchy-Schwarz inequality, we have
\begin{equation*}
\mathbb{E}\|u_{\mu}-u_{\mupi}\|\le\\\le A \mathbb{E}^{1/4}\left[e^{4\gamma\|\mu\|_{L^{\infty}(\Omega)}}\right]\mathbb{E}^{1/4}\left[e^{4\gamma\|\mupi\|_{L^{\infty}(\Omega)}}\right]\mathbb{E}^{1/2}\|\mu-\mupi\|_{L^{\infty}(\Omega)}^{2}. 
\end{equation*}
For better readability, let now
$$k_{q}(\x,\y)=\sum_{i=1}^{q}\lambda_{i}^{\mu}\varphi_{i}(\x)\varphi_{i}(\y),$$
so that $k_{\infty}=\cov$ and $k_{n}$ is the covariance kernel of $\mupi$. Of note, as a straightforward consequence of the Cauchy-Schwartz inequality, one has
\begin{equation}
    \label{eq:covinfinity}
    \|k_{q}\|_{L^{\infty}(\Omega^{2})} = \max_{x\in\Omega}k_{q}(\x,\x).
\end{equation}
Now, since both kernels are Lipschitz continuous, by Lemmas \ref{lemma:bound} and \ref{lemma:mercer}, we have
\begin{equation*}
\mathbb{E}^{1/4}\left[e^{4\gamma\|\mu\|_{L^{\infty}(\Omega)}}\right]=c_{2}(d,H(k_{\infty}),1/2,\|k_{\infty}\|_{L^{\infty}(\Omega^{2})},4\gamma,\Omega)\end{equation*}
and 
\begin{equation*}\mathbb{E}^{1/4}\left[e^{4\gamma\|\mupi\|_{L^{\infty}(\Omega)}}\right]=c_{2}(d,H(k_{n}),1/2,\|k_{n}\|_{L^{\infty}(\Omega^{2})},4\gamma,\Omega),\end{equation*}
    where $H(k_{\infty})$ and $H(k_{n})$ are the Hölder constants of the two kernels. We now recall that, as shown in Lemma \ref{lemma:mercer}, $k_{n}\to k_{\infty}$ uniformly and $\sup_{q}L(k_{q})<+\infty$. In particular, $\sup_{q}\|k_{q}\|_{L^{\infty}(\Omega^{2})}<+\infty$, and, since $c_{2}$ depends continuously on its parameters, we have
$$\mathbb{E}^{1/4}\left[e^{4\gamma\|\mu\|_{L^{\infty}(\Omega)}}\right]\cdot\mathbb{E}^{1/4}\left[e^{4\gamma\|\mupi\|_{L^{\infty}(\Omega)}}\right]\le \tilde{c},$$
    for some $\tilde{c}=\tilde{c}(d,\gamma,\mathbb{P})=\tilde{c}(d,\operator,\mathbb{P}).$
    In particular, up to letting $c:=A\tilde{c}$, we may rewrite our previous bound as,
    \begin{equation*}
    \mathbb{E}\|u_{\mu}-u_{\mupi}\|\le \tilde{C}\mathbb{E}^{1/2}\|\mu-\mupi\|_{L^{\infty}(\Omega)}^{2}. 
    \end{equation*}\qed 
\\\\
\textit{Proof of Claim 5.} Following the same notation as above, we note that $k_{\infty}-k_{n}$ is the covariance kernel of the random field $\mu-\mupi$. Thus, by Lemma \ref{lemma:bound},
    \begin{equation}
    \label{eq:logbound}
  \mathbb{E}^{1/2}\|\mu-\mupi\|_{L^{\infty}(\Omega)}^{2}\le
  \tilde{c}'\|k_{\infty}-k_{n}\|^{1/2}_{L^{\infty}(\Omega^{2})}\left(1+\sqrt{\log^{+}(1/\|k_{\infty}-k_{n}\|_{L^{\infty}(\Omega^{2})})}\right)\end{equation}
  where, by applying the same arguments as before, $\tilde{c}'$ can be chosen to depend only on $d$ and $\mathbb{P}$. 
  We now note that
  $$\|k_{\infty}-k_{n}\|_{L^{\infty}(\Omega^{2})}\ge \|k_{\infty}-k_{n}\|_{L^{2}(\Omega^{2})}|\Omega|=\sqrt{\sum_{i>n}\lambda_{i}}|\Omega|\ge\epsilon^{1/2}|\Omega|.$$ The latter can be then combined with \eqref{eq:logbound} to prove that, up to replacing $\tilde{c}'$ with a suitable $c'_{0}$, independent of $\epsilon$, one has  
  $$\mathbb{E}^{1/2}\|\mu-\mupi\|_{L^{\infty}(\Omega)}^{2}\le
  c'_{0}\|k_{\infty}-k_{n}\|^{1/2}_{L^{\infty}(\Omega^{2})}\left(1+\sqrt{\log^{+}(1/\epsilon)}\right),$$
  as the map $a\mapsto\log^{+}(1/a)$ is monotone nonincreasing.
  Furthermore, $$\epsilon<1/2\implies\log^{+}(1/\epsilon)=\log(1/\epsilon)\ge \log(2)>0.$$ Thus, up to further replacing $c'_{0}$ with a proper $c'=c'(d,\mathbb{P})$, we may write
    \begin{equation*}
  \mathbb{E}^{1/2}\|\mu-\mupi\|_{L^{\infty}(\Omega)}^{2}\le
  c'\sqrt{\log(1/\epsilon)}\|k_{\infty}-k_{n}\|_{L^{\infty}(\Omega^{2})}^{1/2}=c'\sqrt{\log(1/\epsilon)}\left\|\sum_{i>n}\lambda_{i}^{\mu}\varphi_{i}^{2}\right\|_{L^{\infty}(\Omega)}^{1/2},\end{equation*}
  where the last equality follows from \eqref{eq:covinfinity}. \qed
\\\\
Finally, putting together the five Claims yields the inequality in \eqref{eq:field}, and thus proves the statement in the Theorem.\qed

\subsection*{Proof of Theorem \ref{theorem:fieldsl2v}}
Let $n\ge1$. We notice that, mutatis mutandis, all the steps in the proof of Theorem \ref{theorem:fields} can be carried out following the same ideas. The only part that actually changes is the estimate in Claim 5, which now concerns the quantity
$$\expe^{1/2}\|\mu-\mupi\|^{2}_{L^{2}(\Omega)},$$
where $\mupi\approx\mu$ is the $n$th KKL truncation of the random field $\mu.$ However, the latter can be estimated trivially as, by orthonormality, one has
\begin{equation*}
    \mathbb{E}\|\mu-\mupi\|_{L^{2}(\Omega)}^{2} = \mathbb{E}\left[\sum_{i>n}\lambda_{i}^{\mu}\eta_{i}^{2}\right]= \sum_{i>n}\lambda_{i}^{\mu},
\end{equation*}
so that the conclusion follows.\qed

\section{Numerical experiments}
\label{sec:experiments}
The purpose of this Section is to assess the error estimates in Theorems \ref{theorem:fieldsl2v} and \ref{theorem:fields} through a set of numerical experiments. To do so, we proceed in a schematic way so that we may synthesize as follows. First, we introduce the PDE of interest, together with the corresponding solution operator $\operator:\mu\to u_{\mu}$, and a given probability distribution $\mathbb{P}$ defined over the input space, $\mu\sim\mathbb{P}$. Then, we fix a suitable high-fidelity discretization of the input-output spaces, typically via Finite Elements or Finite Volumes, so that the operator under study becomes $\operator_{h}:\mup^{h}\to\mathbf{u}_{\mup^{h}}^{h}$ (here, the superscript $h$ is used to emphasize the presence of a spatial discretization). 
The discrete operator is then evaluated relying on a given numerical solver, which we exploit to generate a suitable \textit{training set}, that is, a collection of random independent realizations $\{\mup^{h}_{i},\mathbf{u}^{h}_{i}\}$, where
$$
\mathbf{u}^{h}_{i}:=\operator_{h}(\mup^{h}_{i})\in\mathbb{R}^{N_{h}},
$$
and the $\mup^{h}_{i}$'s are sampled according to $\mathbb{P}$. 

To check whether the error bounds in Theorems \ref{theorem:fieldsl2v} and \ref{theorem:fields} are observed in practice, we consider a sequence of "nested" autoencoder architectures, $\Psi_{n}\circ\Psi_{n}'$, characterized by an increasing latent dimension $n$, so that
$$\Psi_{n}':\mathbb{R}^{N_{h}}\to\mathbb{R}^{n},\quad\quad\Psi_{n}:\mathbb{R}^{n}\to\mathbb{R}^{N_{h}}.$$
The architectures are nested in that they share the same depth and widths, except for the innermost layers (that is, those mapping to/from the latent space, respectively). All the architectures are then trained over the same training set, and their performances are evaluated in terms of the test error below
\begin{equation}
\label{eq:testerror}
E_{\text{test}}(n):=\frac{1}{N_{\text{test}}}\sum_{j=1}^{N}\|\mathbf{u}^{h}_{j,\text{test}}-\Psi_{n}(\Psi_{n}'(\mathbf{u}^{h}_{j,\text{test}}))\|_{V_{h}},\end{equation}
which serves as a Monte Carlo estimate of $\mathbb{E}\|\mathbf{u}^{h}_{\mup^{h}}-\Psi_{n}(\Psi'_{n}(\mathbf{u}^{h}_{\mup^{h}}))\|_{V_{h}}.$ Here, $\{\mup^{h}_{j,\text{test}},\mathbf{u}^{h}_{j,\text{test}}\}_{j=1}^{N_{\text{test}}}$ is a suitable test set, generated independently of the training set. On the contrary, the norm $\|\cdot\|_{V_{h}}$ corresponds to the discretized $L^{2}$-norm associated to the underlying Finite Element (or Finite Volume) space, $V_{h}\subseteq L^{2}(\Omega)$, i.e.
$$\|\mathbf{v}\|_{V_{h}}:=\sqrt{\mathbf{v}^{T}\mathbf{M}\mathbf{v}}$$
for all $\mathbf{v}\in\mathbb{R}^{N_{h}}$, where $\mathbf{M}\in\mathbb{R}^{N_{h}\times N_{h}}$ is the so-called \textit{mass matrix}. The purpose is then to compare the behavior of the three quantities below
\begin{equation}
\label{eq:quantities}
E_{\text{test}}(n),\quad\quad\sqrt{\sum_{i>n}\lambda_{i}^{\mu}}, \quad\quad\sqrt{\sum_{i>n}\lambda_{i}^{u}},
\end{equation}
for varying $n$, as suggested by Theorems \ref{theorem:fieldsl2v} and \ref{theorem:fields}. To this end, we exploit the POD algorithm to approximate the first $n$ eigenvalues of the (uncentered) covariance operators of the two fields. Then, the two tails can be easily approximated by noting that
$$\sum_{i>n}\lambda_{i}^{\mu}=\sum_{i=1}^{+\infty}\lambda_{i}^{\mu}-\sum_{i=1}^{n}\lambda_{i}^{\mu} \approx \text{Var}\|\mup^{h}\|_{V_{h}}^{2}-\sum_{i=1}^{n}\lambda_{i}^{\mup^{h}},$$
where $\lambda_{i}^{\mup^{h}}\approx \lambda_{i}^{\mu}$ are the eigenvalues computed via POD. The same can be done for $u$. For our analysis, we typically let $n=1,2,\dots,6$, so that the resulting autoencoder architectures are fairly light. This allows us to keep external sources of error out of the way (such as, e.g., inaccuracies due to an inexact optimization of the loss function or shortage in the training data), and thus provide cleaner results.
\\\\
Once the theoretical error bounds have been assessed, we also take the opportunity to implement a complete DL-ROM surrogate: while this is not directly related to our analysis in Section \ref{sec:theory}, it serves the purpose of showing how the whole machinery can be put into action to provide an operative ROM. For simplicity, this step is only repeated once with a fixed latent dimension of choice.

Last but not least, we mention that all the code supporting the forthcoming analysis has been written in Python 3, specifically relying on the FEniCS and Pytorch libraries. The code is available upon request to the authors.

\subsection{Problems description}
\label{sec:problems}
We start by introducing the three case studies one by one: all the results are then reported and discussed at the end of this Section.
For the sake of our analysis, we have selected two prototypical problems that aim to show different possible behaviors of the solution manifold. The first case study features a diffusive phenomenon, which results in a highly regularizing solution operator: as a consequence, this problem models a scenario in which the eigenvalues of the solution field, $\lambda_{i}^{u}$, decay faster than those of the input field, $\lambda_{i}^{\mu}$. Conversely, the second case study concerns a nonlinear advection characterized by the presence of shock waves and thus constitutes a remarkable example of the opposite situation (i.e., $\lambda_{i}^{\mu}\to0$ faster than $\lambda_{i}^{u}$). 

We believe that despite their simplicity, these problems suffice to show the practical counterpart of our theoretical analysis in Section \ref{sec:theory}. In this respect, we remark that this work does not aim at showcasing the abilities of DL-ROMs in handling complex problems: in fact, the effectiveness of the DL-ROM approach has already been reported elsewhere; see, e.g., \cite{fresca2022pod} for problems concerning fluid dynamics or multiphysics.

\subsubsection{Stochastic Darcy flow with random permeability}
\label{subsec:diff}
To start, we consider an elliptic problem that describes diffusion in a porous medium with random permeability, in the same spirit as our previous discussion in Section \ref{sec:preliminaries}, cf. Proposition \ref{prop:darcy}. More precisely, let $\Omega:=(0,1)^{2}$ be the unit square. We consider the boundary value problem below
\begin{equation}
\label{eq:diffusion}
\begin{cases}
    -\nabla\cdot(e^{\mu}\nabla u)=10 &  \textnormal{in}\;\Omega\\
            u=0 & \textnormal{on}\;\partial\Omega
\end{cases},
\end{equation}
where $\mu$ is a centered Gaussian random field with covariance
$$\cov(\x,\y):=\exp(-|\x-\y|^{2}).$$
Here, we focus our attention on the solution operator $\operator:\mu\to u_{\mu}$
that maps the log-permeability of the medium onto the corresponding  solution to \eqref{eq:diffusion}. We discretize the problem using piecewise linear continuous Finite Elements over a structured triangular mesh of step size $h=\sqrt{2}/50$, resulting in a state space $V_{h}\subseteq L^{2}(\Omega)$ with $N_{h}=2601$ degrees of freedom. We generate a total of 5000 snapshots, split between training and testing with a 90:10 ratio.
For technical details on the DL-ROM architectures and their training, we refer the reader to the appendix, Section \ref{sec:architectures}.

\subsubsection{Burger's equation with random data}
\label{subsec:burger}
On the segment $\Omega:= (0,L)$, $L=5$, we consider the inviscid Burger equation
\begin{equation}
\label{eq:burgers}
\displaystyle
  \frac{\partial v}{\partial t} + \frac{1}{2}v\frac{\partial v}{\partial x} = 0,   
\end{equation} 
describing the nonlinear transport of a given solute $v=v(x,t)$ with random initial condition $v(\cdot,0)=\mu$. We complement Eq. \eqref{eq:burgers} with a stationary inflow condition on the left boundary, that is, $v(0,t)\equiv v(0,0)$ for all $t\ge0$. Here, we model the trajectories of the random field $\mu$ as
\begin{equation}\label{eq:burgersmu}\mu(x) = \frac{1}{2}\rho\left(\phi_{0}(x)+\sum_{k=1}^{+\infty}\frac{1}{k^{2}}\eta_{k}\sin\left(\frac{k\pi x}{L}\right)\right)\end{equation}
where $\{\eta_{i}\}_{i=1}^{+\infty}$ are i.i.d. standard Gaussians,  $\rho(x):=\min\{\max\{x,0\},1\}$ is a suitable transformation that clamps the data within $[0,1]$, and
$$\phi_{0}(x):=(x-1)(2-x)\mathbbm{1}_{[1,2]}(x).$$
We are interested in approximating the following parameter-to-solution operator
\begin{equation}
    \label{eq:burgersoperator}
    \operator:\mu\to u_{\mu}:=v(\cdot,T),
\end{equation}
which maps any given initial profile onto the state of the system at time $T=2.$ We discretize the problem using the Finite Volume method, with a temporal step $\Delta t=0.01$ and cells size $h=0.01$, resulting in a high-fidelity state space of dimension $N_{h}=500.$ We exploit the FOM to generate a total of 2000 snapshots, split between training and testing with a 90:10 ratio. 
Once again, to keep the paper self-contained, we postpone all the technical details about the DL-ROM architectures and their training to the Appendix, Section \ref{sec:architectures}.

\subsection{Results}

We start with the first case study, Section \ref{subsec:diff}. As we mentioned, the parameter-to-solution operator of this problem is highly regularizing because of the diffusive term in the PDE. Indeed, when comparing the tails of the eigenvalues, we see that those of the output field decay three times faster than those of the input field; see Figure \ref{fig:decays}. The same rate is also achieved by autoencoders. This result is in agreement with our theory since, in this case, Theorem \ref{theorem:fields} applies (cf. \ref{prop:darcy}).

Since linear methods, such as POD, can directly exploit the decay of the eigenvalues of the output field, and they require fewer data with respect to deep autoencoders, these approaches should be favored when dealing with problems of this type. It should be noted that this conclusion can already be derived from Theorem \ref{theorem:fields} without having to test multiple autoencoder modules.
Nevertheless, if provided with enough data, one may still choose to use autoencoders: at worst, they will match the same accuracy as linear methods (up to optimization errors).

The second case study, on the contrary, shows the opposite situation. This time, the eigenvalues of the output field have a tail that decays 25\% slower than that of the input field. Then, our theory suggests that a nonlinear method based on deep autoencoders can exploit this hidden regularity to provide better approximation capabilities. Indeed, this is what we observe in practice; see Figure \ref{fig:decays}. 
It should be noted that this result is particularly interesting if we consider that the setup for this problem only partially fulfills the hypothesis of Theorem \ref{theorem:fields}. Indeed, while it can be shown that the parameter-to-solution operator in \eqref{eq:burgersoperator} enjoys a form of $L^{\infty}\to L^{2}$ Lipschitz continuity, the probability law of the input field is non-Gaussian: see also our previous discussion in Remark \ref{remark:gaussian}. 

\begin{figure}
    \centering
    \includegraphics[width=0.495\textwidth]{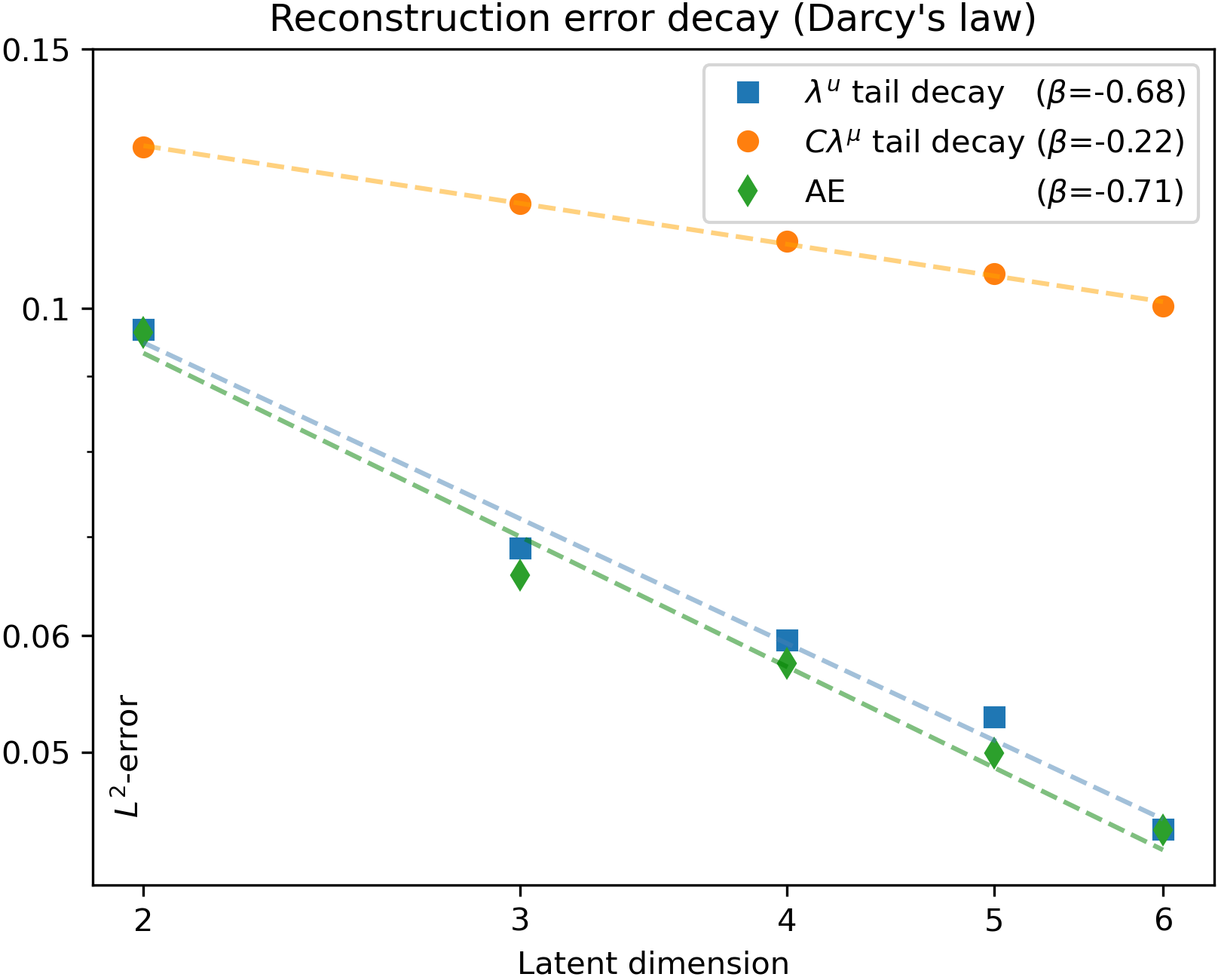}
    \includegraphics[width=0.495\textwidth]{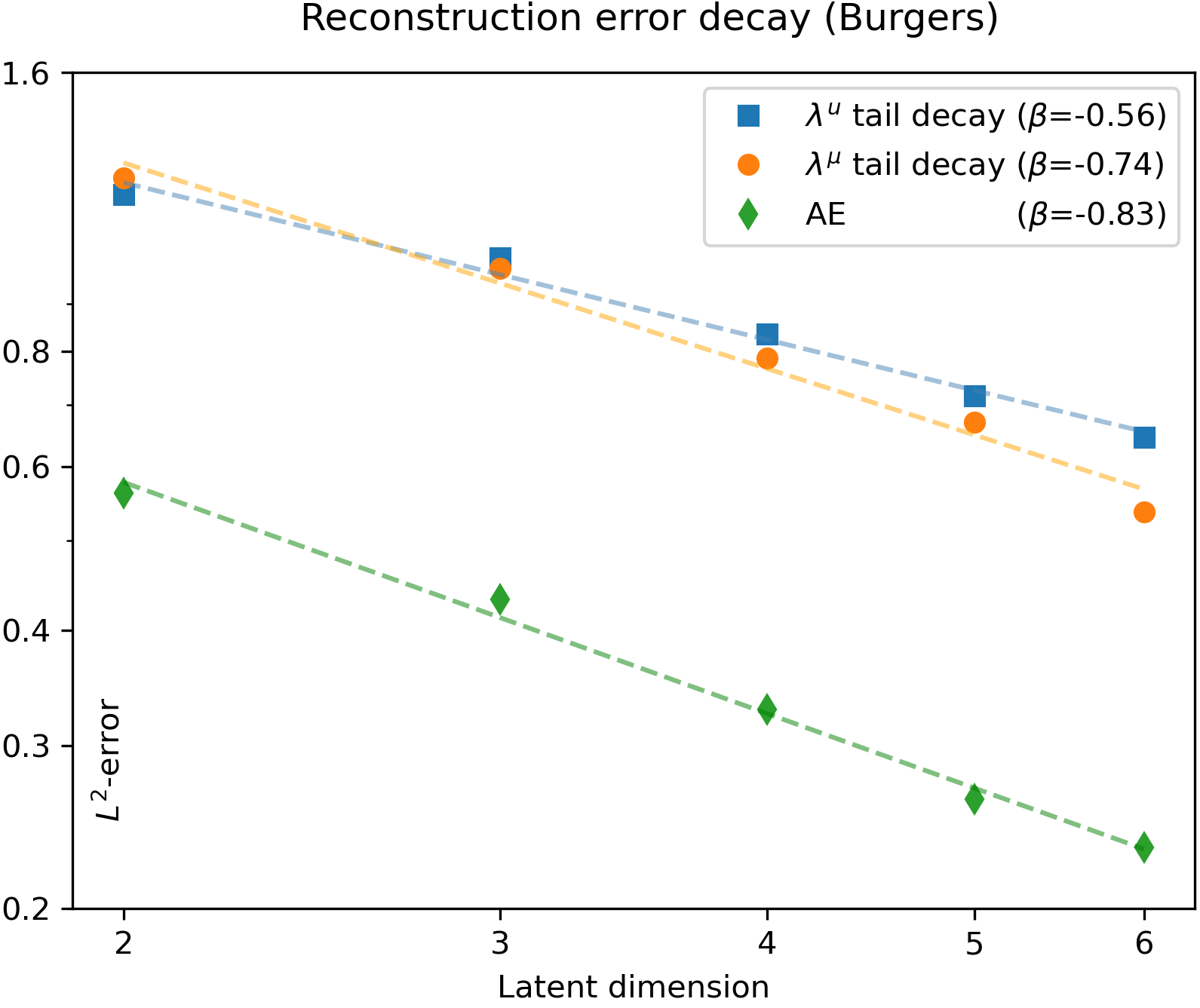}
    \caption{Decay of the reconstruction error for Darcy's law §\ref{subsec:diff} (left) and Burger's equation §\ref{subsec:burger}, in comparison with the tails of the eigenvalues of the input and output fields, respectively: cf. Equations \eqref{eq:testerror} and \eqref{eq:quantities}. In the Darcy flow example, the eigenvalues of the input field are scaled by a factor $C>0$ to improve readability. Dashed lines are obtained through least-squares in the loglog space. $\beta$ = rate of decay, computed as the slope of the dashed lines.}
    \label{fig:decays}
\end{figure}

In this concern, we also mention that although Theorem \ref{theorem:fields} would require the computation of an $L^{\infty}$-tail, that is,
$\left\|\sum_{i>n}\lambda_{i}^{\mu}\varphi_{i}^{2}\right\|_{\infty},$
it is sufficient to monitor the behavior of the $L^{2}$-tail, that is, $\sum_{i>n}\lambda_{i}^{\mu}$. This is because, in this case, the eigenfunctions of the input field are uniformly bounded in the $L^{\infty}$-norm, fact that we can easily appreciate from the plot in Figure \ref{fig:burgers-norms}: then, the two tails can be shown to decay at the same rate (see also our discussion in Remark \ref{remark:boundness}).
\\\\

\begin{figure}
    \centering
    \includegraphics[width=0.6\textwidth]{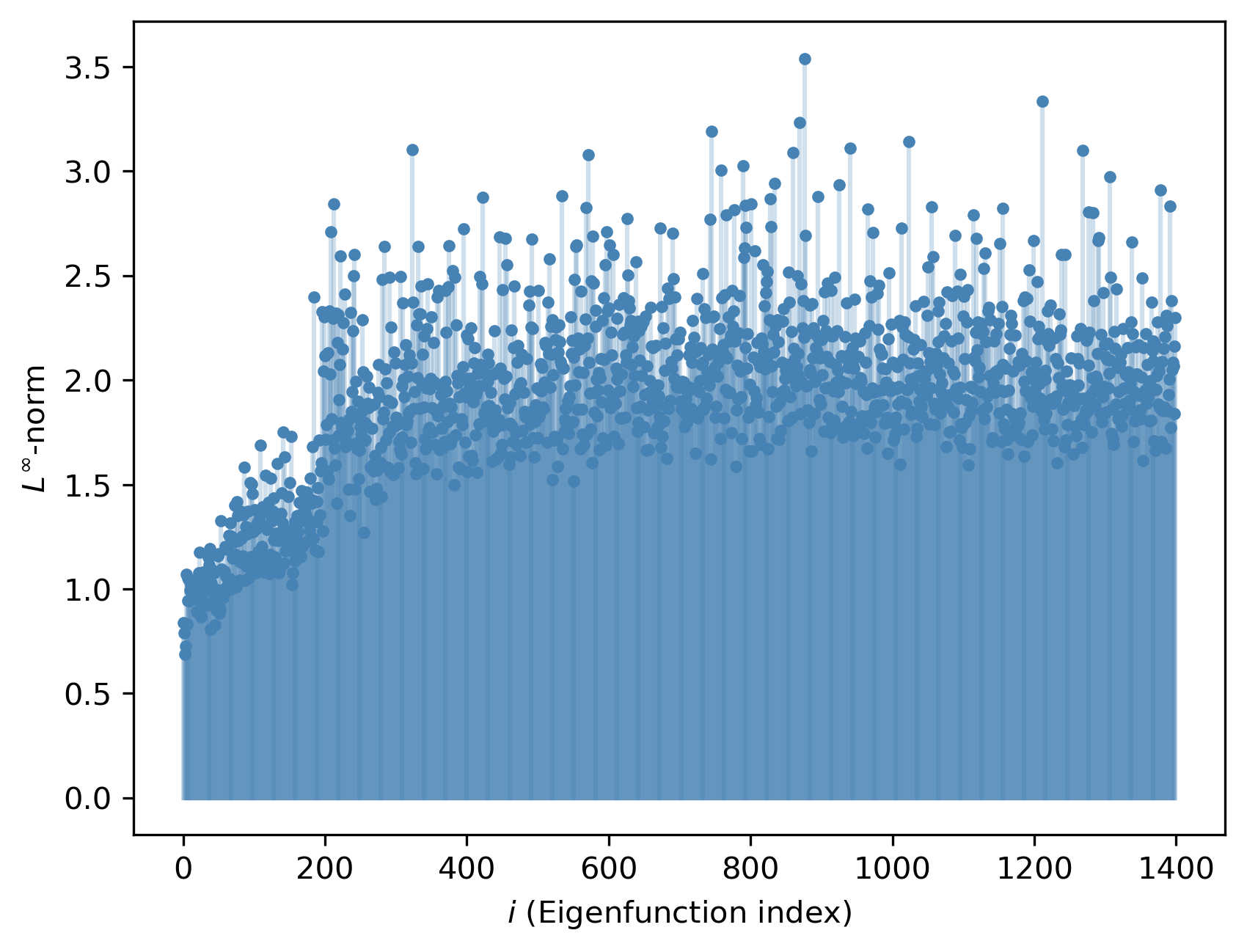}
    \caption{$L^{\infty}$-norms of the eigenfunctions in the KKL expansion of the input field $\mu$ for the Burger's equation §\ref{subsec:burger}: see Equations \eqref{eq:kl} and \eqref{eq:burgersmu}, respectively.}
    \label{fig:burgers-norms}
\end{figure}

For completeness, Table \ref{tab:performance} shows the overall performances attained by two DL-ROM surrogates for the problems at hand. While this analysis is not directly related to the theory developed in Section \ref{sec:theory}, it might still be of interest as it goes back to the global picture (that is, that of reduced order modeling). This time, we train all the architectures in the DL-ROM pipeline simultaneously following our initial discussion in Section \ref{subsec:roms}: see, in particular, Equation \eqref{eq:loss}. This means that the autoencoder module, $\Psi\circ\Psi'$, is trained together with all the remaining parts of the DL-ROM: consequently, although we are minimizing the reconstruction error made by the autoencoder, there are also other quantities driving the optimization (such as, e.g., the approximation error of the DL-ROM).

In the Darcy flow example, the DL-ROM reports an average $L^{2}$-error of 5.43\%. At the same time, its autoencoder module shows similar accuracy, with a reconstruction error of 4.94\%. As expected, this performance is also comparable with the one achieved by a linear approach such as POD: as we mentioned previously, in fact, we do not really need a nonlinear technique to reduce a problem of this type. However, things become quite different when we move to Burger's equation. In this case, the autoencoder module is almost twice as accurate as its linear counterpart, with an average $L^{2}$ error of 5.32\%. This fact is also reflected in the overall performance of the DL-ROM, which reports an average test error of 5.98\%. The interested reader can also find a few examples in Figure \ref{fig:burgerstest}, where we compare ground-truth simulations and DL-ROM outputs for new random realizations of the input field $\mu$.
\\\\
\renewcommand{\arraystretch}{1.5}
\begin{table}
    \centering
    \begin{tabular}{lllll}
    \hline
     \textbf{Problem} & \textbf{$n$} & \textbf{POD $L^{2}$-error} & \textbf{AE $L^{2}$-error} & \textbf{DL-ROM $L^{2}$-error} \\
     \hline\hline
     Darcy's law  §\ref{subsec:diff} & 16 & 4.46\% & 4.94\% & 5.43\%\\
     Burger's §\ref{subsec:burger} & 16 & 9.30\% & 5.32\% & 5.98\%\\
     \hline\\
    \end{tabular}
    \caption{Average test errors for the three case studies in Section \ref{sec:experiments}. Here, the latent dimension $n$ is fixed. POD = projection error, AE = reconstruction error, DL-ROM = model error.}
    \label{tab:performance}
\end{table}
Inspired by the results in Table \ref{tab:performance}, we conclude with a short digression on the interplay between reconstruction errors and approximation errors. In general, when it comes to DL-ROMs, the two quantities are only indirectly related. To better explain this fact, let us first consider the opposite case of POD-based ROMs. Ultimately, the POD projector $\mathbf{V}\in\mathbb{R}^{N_{h}\times n}$ operates as a linear autoencoder module, as
$$\mathbf{u}_{\mup}^{h}\approx\mathbf{V}\mathbf{V}^{T}\mathbf{u}_{\mup}^{h},$$
which allows one to represent each $\mathbf{u}_{\mup}^{h}$ with the corresponding set of projection coefficients $\mathbf{V}^{T}\mathbf{u}_{\mup}^{h}\in\mathbb{R}^{n}$. Then, any POD-based ROM, such as, e.g., POD-Galerkin \cite{quarteroni2015reduced}, POD-NN \cite{hesthaven2018non}, POD-DeepONet \cite{lu2022comprehensive} and POD-GPR \cite{guo2019data}, can be written abstractly as $$\mathbf{u}_{\mup}^{h}\approx\mathbf{V}\phi_{\text{POD}}(\mup),$$ where $\phi_{\text{POD}}:\mathbb{R}^{N_{h}}\to\mathbb{R}^{n}$ is some black-box procedure that maps parameters onto reduced coefficients: the latter can be a neural network model, as in POD-NN and POD-DeepONet, a Gaussian process approximator, as in POD-GPR, or a suitable numerical method solving the projected PDE, as in POD-Galerkin. Then, by orthogonality, the error of any such method can be bounded from below as
$$\|\mathbf{u}_{\mup}^{h}-\mathbf{V}\phi_{\text{POD}}(\mup)\|^{2}=\|\mathbf{u}_{\mup}^{h}-\mathbf{V}\mathbf{V}^{T}\mathbf{u}_{\mup}^{h}\|^{2}+\|\mathbf{V}^{T}\mathbf{u}_{\mup}^{h}-\phi_{\text{POD}}(\mup)\|^{2}\ge\|\mathbf{u}_{\mup}^{h}-\mathbf{V}\mathbf{V}^{T}\mathbf{u}_{\mup}^{h}\|^{2}.$$
That is: the approximation error of any such ROM is bounded from below by the projection error of the POD. Furthermore, this fact is not an intrinsic property of the POD basis; instead, it is a feature common to all projection-based methods.

In general, the same is not true for DL-ROMs. In fact, since all the architectures in the DL-ROM pipeline are optimized simultaneously, cf. Equation \eqref{eq:loss}, the approximation error of the DL-ROM might as well be smaller than the reconstruction error of the autoencoder, and vice versa. This can happen, for example, if it turns out to be simpler to describe latent variables using the input parameters (thus, through $\phi$) rather than using the output features (that is, through the encoder $\Psi'$). 

Still, when constructing a DL-ROM, a good rule of thumb is to prefer those architectures for which the two errors behave similarly: this, in fact, ensures a stronger connection between inputs and outputs, and it increases the interpretability of the DL-ROM as a whole.

\begin{figure}
    \centering
    \includegraphics[width=\textwidth]{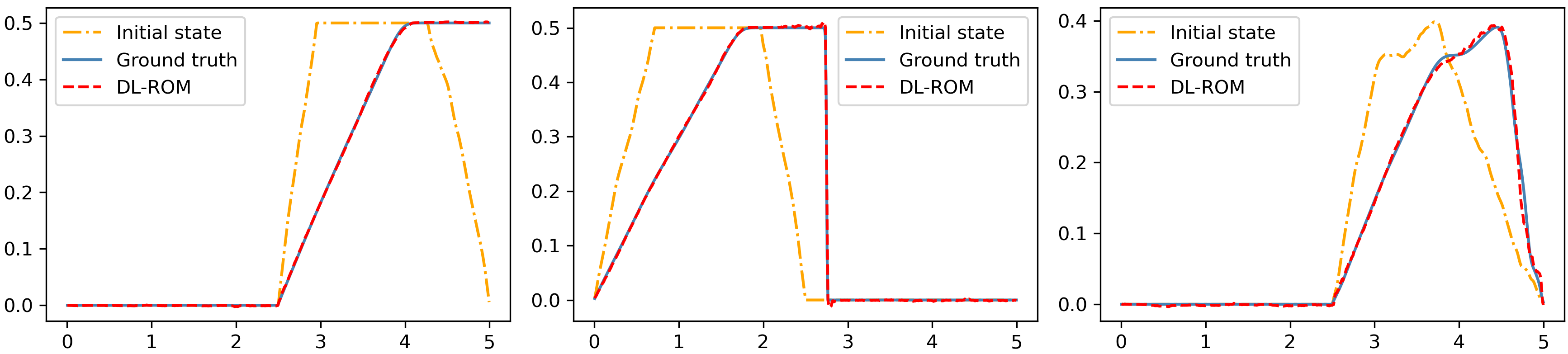}
    \caption{DL-ROM predictions for three unseen initial states of Burger's equation, §\ref{subsec:burger}.}
    \label{fig:burgerstest}
\end{figure}

\section{Conclusions}

This work addresses the practical problem of designing deep autoencoders, specifically in terms of their latent dimension, in the context of reduced order modeling for PDEs parametrized by random fields. This topic is of particular interest, since deep autoencoders have recently emerged as the pivotal element of Deep Learning based ROMs (DL-ROMs), a novel class of approaches that exploit manifold learning to approximate the solution manifold of a parametrized operator,
equipping them with the ability to tackle complex problems for which traditional methods may fall short.
The presented research is novel in that it addresses both theoretically and practically the case of stochastic PDEs, a scenario that is particularly relevant to applications involving, e.g., uncertainty quantification or Bayesian inversion, aspects hitherto unexplored in the DL-ROM literature.

Our main contribution is well summarized by Theorems \ref{theorem:finite}-\ref{theorem:fields}, in which we provide explicit error bounds that can aid domain practitioners in selecting the appropriate latent dimension for deep autoencoders. Our findings are highly interpretable, as they demonstrate the capacity of deep autoencoders to match or surpass the performances achieved by linear techniques. 
Numerical experiments agree with our theory, effectively highlighting the practical importance of our analysis in the intricate design of DL-ROMs. Future work may include the derivation of complementary results on the complexity of deep autoencoders and their training, possibly exploring a compromise between data-driven and physics-based approaches, in the same spirit of other recent works; see, e.g., \cite{de2022generic, franco2023approximation}.



\section*{Declarations}

\bmhead{Acknowledgements} The present research is part of the activities of project Dipartimento di Eccellenza 2023-2027, funded by MUR, and of project FAIR (Future Artificial Intelligence Research) project, funded by the NextGenerationEU program within the PNRR-PE-AI scheme (M4C2, Investment 1.3, Line on Artificial Intelligence).

\bmhead{Funding} This project has received funding under the ERA-NET ERA PerMed/ FRRB grant agreement No ERAPERMED2018-244, RADprecise - Personalized radiotherapy: incorporating cellular response to irradiation in personalized treatment planning to minimize radiation toxicity.

\bmhead{Conflict of interest} The authors declare that they have no competing interests. 

\bmhead{Data and code availability} Data and source code are available upon request: all related inquiries should be directed to the authors.


\begin{appendices}

\section{Technical proofs for Section \ref{sec:preliminaries}}
\label{sec:appendix:A}

\setcounter{lemma}{0}
\begin{lemma}
    Let $\mathbb{P}$ be a probability distribution over $\mathbb{R}^{p}$. Let $\rho:\mathbb{R}\to\mathbb{R}$ be a continuous map. Assume that either one of the following holds:
    \begin{itemize}
        \item [i)] $\rho$ is bounded and nonconstant;
        \item [ii)] $\rho$ is bounded from below and  $\rho(x)\to+\infty$ as $x\to+\infty$.
        \item [iii)] there exists some $a,b\in\mathbb{R}$ such that $x\to a\rho(x)+b\rho(-x)$ satisfies (ii).
    \end{itemize}
    Then, for every $\varepsilon>0$ and every measurable map $f:\mathbb{R}^{p}\to\mathbb{R}^{n}$
    with $\mathbb{E}|f(x)|<+\infty$, there exists $\Phi\in\mathscr{N}_{\rho}(\mathbb{R}^{p},\mathbb{R}^{n})$ such that
    \begin{equation*}
        \mathbb{E}_{\x\sim\mathbb{P}}|f(\x)-\Phi(\x)|<\varepsilon.
    \end{equation*}
\end{lemma}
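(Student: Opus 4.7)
\medskip

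The plan is to reduce the statement to the classical version of Hornik's universal approximation theorem via a sequence of standard approximations.

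First I would reduce to the scalar-output case $n=1$. If for each coordinate $f^{(i)}:\mathbb{R}^{p}\to\mathbb{R}$ of $f$ we can find $\Phi^{(i)}\in\mathscr{N}_{\rho}(\mathbb{R}^{p},\mathbb{R})$ with $\mathbb{E}|f^{(i)}(\x)-\Phi^{(i)}(\x)|<\varepsilon/n$, then we may assemble the $\Phi^{(i)}$ into a single $\Phi\in\mathscr{N}_{\rho}(\mathbb{R}^{p},\mathbb{R}^{n})$ by stacking their hidden layers in parallel and concatenating their terminal (identity) linear layers. The $\ell^{1}$-to-$\ell^{2}$ inequality on $\mathbb{R}^{n}$ then yields the claim up to a constant, which can be absorbed into $\varepsilon$.

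Second, for the scalar case and under assumption (i), I would approximate the possibly unbounded integrable $f$ by a compactly supported continuous function in $L^{1}(\mathbb{P})$. Concretely: (a) truncate $f_{R}(\x):=\max(\min(f(\x),R),-R)$ and apply dominated convergence (with dominant $|f|$) to obtain $\mathbb{E}|f-f_{R}|<\varepsilon/3$ for $R$ large; (b) exploit the tightness of $\mathbb{P}$ on the Polish space $\mathbb{R}^{p}$ (Ulam's theorem) to pick a compact $K\subset\mathbb{R}^{p}$ with $\mathbb{P}(K^{c})<\varepsilon/(6R)$; (c) apply Lusin's theorem to obtain $g\in C_{c}(\mathbb{R}^{p})$ with $|g|\le R$ and $\mathbb{P}(\{f_{R}\ne g\})<\varepsilon/(6R)$, so that $\mathbb{E}|f_{R}-g|\le 2R\cdot\mathbb{P}(\{f_{R}\ne g\})<\varepsilon/3$; (d) apply Hornik's classical density theorem to approximate the continuous $g$ uniformly on the compact set $K\cup\mathrm{supp}(g)$ by some $\Phi\in\mathscr{N}_{\rho}(\mathbb{R}^{p},\mathbb{R})$ within a tolerance $\delta$, and control the remaining integral $\int_{K^{c}}|g-\Phi|\,d\mathbb{P}$ using the boundedness of $\rho$ (under (i)) combined with the small mass of $K^{c}$. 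Choosing $\delta$ sufficiently small yields $\mathbb{E}|g-\Phi|<\varepsilon/3$. Summing the three errors gives the conclusion.

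Third, to handle assumption (ii), I would repeat the same scheme, but since $\rho$ is now possibly unbounded, I cannot directly control $\int_{K^{c}}|\Phi|\,d\mathbb{P}$ by the $L^{\infty}$ norm of $\rho$. The remedy is to refine step (c) by picking $g$ with $\mathrm{supp}(g)\subseteq K$, and in step (d) use the version of Hornik's theorem that gives uniform approximation on compacts by networks whose values outside $K$ remain under control, or equivalently multiply $\Phi$ by a smooth cutoff realized as a small sub-network (possible since $\rho$ satisfies (ii), hence by Hornik separates continuous functions on compacts and in particular can represent the cutoff). Finally, assumption (iii) reduces to (ii) by observing that $\tilde{\rho}(x):=a\rho(x)+b\rho(-x)$ can be emulated exactly by a single hidden layer of width $2$ with activation $\rho$; thus any $\tilde{\rho}$-network is realizable as a $\rho$-network of at most doubled width, and the result follows.

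The main obstacle will be the control of the error outside a compact set in cases (ii)--(iii), where the activation (and hence the network output) can grow without bound. All three steps above are ultimately standard, so the proof is really an assembly of Hornik's theorem with truncation, tightness and Lusin's theorem; the cleanest write-up is to separate the three cases and reuse the scalar reduction throughout.
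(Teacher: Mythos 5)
Your overall strategy (reduce to scalar output, handle case~(i) first, reduce~(iii) to~(ii)) is sensible, and the reduction of~(iii) to~(ii) via a width-two layer is essentially what the paper does. However, there are two issues, one cosmetic and one substantive.

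The cosmetic issue is case~(i): Hornik's 1991 theorem is already stated as an $L^{1}(\mathbb{P})$-density result for bounded, nonconstant activations and arbitrary finite measures, so the whole truncation/tightness/Lusin machinery you set up simply reproves part of Hornik. The paper dispatches~(i) in one line by citing Hornik directly.

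The substantive gap is case~(ii). Your two proposed remedies for controlling $\int_{K^{c}}|\Phi|\,d\mathbb{P}$ when $\rho$ is unbounded do not work as stated: there is no off-the-shelf version of Hornik's theorem that bounds network outputs off the compact set, and ``multiplying $\Phi$ by a cutoff'' is not a network operation (the pointwise product of two $\rho$-networks is not itself a $\rho$-network, and ``realizing the cutoff as a sub-network'' does not change this). The paper's key idea, which you are missing, is to manufacture a \emph{bounded} activation out of $\rho$ by composing it with itself: set $\sigma(x):=\rho(-\rho(x)+\alpha)$. Since $\rho$ is bounded below, $-\rho(x)+\alpha$ is bounded above, so $\sigma$ is bounded; choosing $\alpha$ so that $\sigma(0)$ exceeds $\limsup_{x\to+\infty}\sigma(x)$ makes $\sigma$ nonconstant. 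Because $\sigma$ is realized by stacking two $\rho$-layers (with a sign flip and a bias), one has $\mathscr{N}_{\sigma}(\mathbb{R}^{p},\mathbb{R}^{n})\subseteq\mathscr{N}_{\rho}(\mathbb{R}^{p},\mathbb{R}^{n})$, and case~(ii) then reduces cleanly to case~(i). This composition trick is exactly what sidesteps the tail-control problem you correctly identified but did not resolve.
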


\begin{proof}
    If (i) holds, then this is just a consequence of Hornik's Theorem \cite{hornik1991approximation}. Then, let (ii) hold. Since $\rho$ is bounded from below, 
    $\rho(\mathbb{R})\subset[A,+\infty)$ for some $A\in\mathbb{R}$.  
    Let
    \begin{equation*}
        \sigma(x):=\rho(-\rho(x)+\alpha),
    \end{equation*}
    where $\alpha$ is some parameter to be fixed later on. We have,    
    \begin{multline*}
    \sup_{x\in\mathbb{R}}|\sigma(x)|=\sup_{x\in\mathbb{R}}|\rho(-\rho(x)+\alpha)|=\sup_{y\in\rho(\mathbb{R})}|\rho(-y+\alpha)|\le\\\le\sup_{y\ge A}|\rho(-y+\alpha)|=\sup_{z\le \alpha-A}|\rho(z)|<+\infty,
    \end{multline*}
    in fact, by continuity, $\rho$ is bounded on all intervals of the form $(-\infty, c]$, with $c\in\mathbb{R}$. In particular, $\sigma$ is bounded. Furthermore, since $\rho(x)\to+\infty$ as $x\to+\infty$,
    \begin{equation*}
        \limsup_{x\to\infty}\sigma(x)\le\limsup_{z\to-\infty}\rho(z)=c_{0}<+\infty.
    \end{equation*}
    Let now $\alpha$ be such that
    \begin{equation*}
        \sigma(0)=\rho(\alpha-\rho(0))>c_{0}.
    \end{equation*}
    Then, $\sigma$ is guaranteed to be both bounded and nonconstant. However, because of the way we defined $\sigma$, we also have
    \begin{equation*}
        \mathscr{N}_{\sigma}(\mathbb{R}^{p},\mathbb{R}^{n})\subseteq \mathscr{N}_{\rho}(\mathbb{R}^{p},\mathbb{R}^{n}),
    \end{equation*}
    and the conclusion now follows by (i). Finally, assume that (iii) holds and let
    \begin{equation*}
        \tilde{\rho}(x):=a\rho(x)+b\rho(-x).
    \end{equation*}
    As before, we have
        $\mathscr{N}_{\tilde{\rho}}(\mathbb{R}^{p},\mathbb{R}^{n})\subseteq \mathscr{N}_{\rho}(\mathbb{R}^{p},\mathbb{R}^{n}),$
    and the conclusion follows by (ii).    
\end{proof}

\setcounter{proposition}{0}
\begin{proposition}
    Let $\operator:(W,\|\cdot\|_{W})\to(V,\|\cdot\|_{V})$ be an operator between two normed spaces. We have the following,
    \begin{itemize}
        \item [i)] $\|\partial\operator\|_{V}(w)<+\infty$ for all $w\in K\subset W$ $\iff$ $\operator$ is locally Lipschitz over $K$;\\
        
        \item [ii)] if $K\subset W$ is compact and $\|\partial\operator\|_{V}(w)<+\infty$ for all $w\in K$, then 
        $\mathcal{G}$ is Lipschitz over $K$;\\

        \item [iii)] if $C\subseteq W$ is convex, then 
        $$L_{C}:=\sup_{w\in C}\|\partial\operator\|_{V}(w)<+\infty$$
        if and only if $\operator$ is $L_{C}$-Lipschitz over $C$.\\

        \item [iv)] if $\operator$ is Fréchet differentiable at $w\in W$, then $\|\partial\operator\|_{V}(w)$ coincides with the operator norm of the Fréchet derivative of $\operator$ at $w.$\\

        \item [v)] given any $\mathcal{F}:(V,\|\cdot\|)_{V}\to(Y,\|\cdot\|_{Y})$, one has the chain-rule inequality
        $$\|\partial(\mathcal{F}\circ \operator)\|_{Y}(w)\le\|\partial \mathcal{F}\|_{Y}(\operator(w))\cdot\|\partial\operator\|_{V}(w),$$
        for all $w\in W.$
    \end{itemize}
\end{proposition}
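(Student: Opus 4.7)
The plan is to handle the five items in their natural order: (iv) and (v) follow from direct manipulation of the limsup definition, (i) is essentially a restatement, (ii) invokes (i) together with a compactness argument, and (iii) is the only item that requires real work, via a continuity-and-supremum argument along line segments. Throughout, the key feature to exploit is that for every $\epsilon > 0$, the limsup definition supplies a radius $r_\epsilon > 0$ such that the one-sided difference quotients centred at $w$ are bounded by $\|\partial\operator\|_V(w) + \epsilon$ on the $W$-ball of radius $r_\epsilon$.

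For (iv), I would plug the Fréchet expansion $\operator(w+h) = \operator(w) + \operator'(w)h + o(\|h\|_W)$ into the definition: dividing by $\|h\|_W$ and taking $\limsup_{h\to 0}$ yields $\|\partial\operator\|_V(w) \le \|\operator'(w)\|_{\mathrm{op}}$, while probing with $h = tu$ for a unit vector $u$ and $t\to 0^+$ gives $\|\partial\operator\|_V(w) \ge \|\operator'(w)u\|_V$ for every such $u$, hence equality. For (v), when both $\|\partial\operator\|_V(w)$ and $\|\partial\mathcal{F}\|_Y(\operator(w))$ are finite (otherwise the bound is trivial), I would set $k_h := \operator(w+h) - \operator(w)$ and factor
\begin{equation*}
\frac{\|\mathcal{F}(\operator(w+h)) - \mathcal{F}(\operator(w))\|_Y}{\|h\|_W} = \frac{\|\mathcal{F}(\operator(w)+k_h) - \mathcal{F}(\operator(w))\|_Y}{\|k_h\|_V}\cdot\frac{\|k_h\|_V}{\|h\|_W}
\end{equation*}
(with the ratio understood as zero when $k_h = 0$). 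The point is that finiteness of $\|\partial\operator\|_V(w)$ forces $k_h \to 0$ as $h \to 0$, so the limsup of the first factor can be bounded by $\|\partial\mathcal{F}\|_Y(\operator(w))$ via a standard $\epsilon$-argument, while the limsup of the second is controlled by $\|\partial\operator\|_V(w)$, yielding the claimed chain rule.

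For (i), the forward direction is a direct reading of the limsup definition, which supplies for each $w \in K$ a radius $r$ and a constant $M$ controlling $\|\operator(w+h) - \operator(w)\|_V / \|h\|_W$ on $\{0 < \|h\|_W \le r\}$; the reverse is immediate. Item (ii) then proceeds by covering the compact $K$ with finitely many balls on each of which $\operator$ is Lipschitz by (i), taking $L := \max_i L_i$ and introducing a Lebesgue number $\delta > 0$ for the cover. Points $w, w' \in K$ at $W$-distance less than $\delta$ lie in a common cover-ball and so satisfy $\|\operator(w) - \operator(w')\|_V \le L \|w - w'\|_W$, while points at distance at least $\delta$ can be bounded trivially via $\|\operator(w) - \operator(w')\|_V \le 2\sup_K \|\operator\|_V$ (finite by continuity of $\operator$ on compact $K$), producing a single global Lipschitz constant $\max(L,\; 2\sup_K \|\operator\|_V / \delta)$.

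The substantive step is (iii). The direction $\Leftarrow$ is immediate. For $\Rightarrow$, I fix $x, y \in C$, set $\gamma(t) := (1-t)x + ty \in C$ by convexity, fix $\epsilon > 0$, and introduce the slack functional
\begin{equation*}
\psi(t) := \|\operator(\gamma(t)) - \operator(x)\|_V - (L_C + \epsilon)\,t\,\|y - x\|_W - \epsilon,
\end{equation*}
which is continuous with $\psi(0) = -\epsilon < 0$. Letting $t^* := \sup\{t \in [0,1] : \psi \le 0 \text{ on } [0,t]\}$, the aim is to rule out $t^* < 1$: by continuity $\psi(t^*) = 0$, and since $\gamma(t^*) \in C$, the bound $\|\partial\operator\|_V(\gamma(t^*)) \le L_C$ supplies $s_0 > 0$ such that $\|\operator(\gamma(t^*+s)) - \operator(\gamma(t^*))\|_V \le (L_C + \epsilon)\,s\,\|y-x\|_W$ for $0 < s \le s_0$, whence $\psi(t^* + s) \le \psi(t^*) = 0$ on a right-neighbourhood of $t^*$, contradicting the supremum property. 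Therefore $t^* = 1$, so $\psi(1) \le 0$; letting $\epsilon \to 0^+$ yields $\|\operator(y) - \operator(x)\|_V \le L_C \|y - x\|_W$. The subtlety is the introduction of the additive slack $\epsilon$: without it, the right-hand side of the limsup estimate at $\gamma(t^*)$ would be exactly $L_C$, leaving no margin to cross $t^*$ strictly; the $\epsilon$-buffer is precisely what converts the pointwise variation bound into a genuine segment-wise Lipschitz estimate.
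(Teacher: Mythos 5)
Your proofs of (iii), (iv), and (v) are correct. Item (iii) is handled by a genuinely different route from the paper's: you carry a continuous slack functional $\psi$ along the segment and locate a first crossing point, whereas the paper invokes (ii) and then chains one-sided estimates over a finite ball cover of the segment (into which consecutive intermediate points are inserted). Your version is self-contained, avoids the dependence on (ii), and is arguably cleaner. Items (iv) and (v) essentially coincide with the paper's presentation, up to using a maximizing sequence versus probing along $h=tu$.

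The gap is in (ii). You propose to cover the compact $K$ by finitely many balls "on each of which $\operator$ is Lipschitz by (i)", but (i) cannot deliver that. The local variation is a one-sided, pointwise quantity: $\|\partial\operator\|_{V}(w)<+\infty$ bounds $\|\operator(w+h)-\operator(w)\|_{V}/\|h\|_{W}$ for small $h$ anchored at $w$; it says nothing about $\|\operator(v)-\operator(v')\|_{V}/\|v-v'\|_{W}$ for arbitrary pairs $v,v'$ in a ball around $w$. Thus "locally Lipschitz" in (i) means Lipschitz-at-each-point, from which Lipschitz-on-a-ball does not follow, and the Lebesgue-number argument cannot close. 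A concrete obstruction: $f(x)=x^{2}\sin(1/x^{2})$ with $f(0)=0$ is differentiable everywhere on $[0,1]$, hence has finite local variation at every point by (iv), yet $f'$ is unbounded near $0$ and $f$ is not Lipschitz on any neighbourhood of $0$ — so no such cover exists. The paper's own proof of (ii) proceeds by contradiction instead: assume $\operator$ is not Lipschitz on $K$, extract sequences $\{w_{n}\},\{v_{n}\}$ with unbounded difference quotients, use continuity and boundedness of $\operator$ on $K$ to force $\|w_{n}-v_{n}\|_{W}\to0$, pass to a convergent subsequence $w_{n_{k}},v_{n_{k}}\to w\in K$, and conclude $\|\partial\operator\|_{V}(w)=+\infty$. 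You should at minimum adopt that argument; be aware, though, that the final inference there relies on the same two-sided reading of the local variation, so the subtlety your route makes visible is in fact present in the paper as well.
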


\begin{proof}
Since (i) is trivial, we skip its proof.
    \begin{itemize}
        \item [ii)] Let $K\subset W$ be a compact subset. Seeking contraddiction, let us assume that $\operator$ is not Lipschitz continuous over $K$. Then, there exists two sequences $\{w_{n}\}_{n},$ and $\{v_{n}\}_{n}$ such that
        \begin{equation}
            \label{eq:absurd}
            \frac{\|\operator(w_{n}) - \operator(v_{n}) \|_{V}}{\|w_{n}-v_{n}\|_{W}}\to+\infty
        \end{equation}
        as $n\to+\infty$. Since $\operator$ is locally Lipschitz, it also continuous, and thus bounded over $K$: therefore, the above implies $\|w_{n}-v_{n}\|_{W}\to0$. At the same time, by compactness, there exists a subsequence $\{w_{n_{k}}\}_{k}\subseteq\{w_{n}\}_{n}$ and an element $w\in K$ such that $w_{n_{k}}\to w$. In particular, we have $w_{n_{k}},v_{n_{k}}\to w$. But then  \eqref{eq:absurd} would yield $\|\partial\operator\|_{V}(w)=+\infty$, absurd.\\
        
        \item [iii)] Let $C$ be convex and assume that $L_{C}:=\sup_{w\in C}\|\partial\operator(w)\|_{V}<+\infty$. Given any two points $v,v'\in C$, let $K$ be the segment between the two. Since $K$ is compact, it follows from (ii) that $\operator$ is Lipschitz continuous over $K$. Furthermore, since $K\subseteq C$,
        $$\sup_{w\in K}\|\partial\operator(w)\|_{V}\le L_{C}$$
        Exploiting the very definition of limit supremum, for all $w\in K$, let $B(w, r_{w})$ be a ball of radius $r_{w}>0$ such that
        $$\left|\frac{\|\operator(w+h) - \operator(w) \|_{V}}{\|h\|_{W}}-\|\partial\operator\|_{V}(w)\right|<\varepsilon\quad\forall h\in B(w, r_{w}).$$
        Then, up to rewriting the above, for all $w\in K$ we have
        $$h\in B(w, r_{w})\implies\|\operator(w+h) - \operator(w) \|_{V}\le (L_{C}+\varepsilon)\|h\|_{W}.$$
        We now note that, since $K\subset\cup_{w\in K}B(w,r_{w})$ and $K$ is compact, there exists a finite sequence of points $w_{1},\dots,w_{n}\in K$, and finite sequence of radii $r_{1},\dots,r_{k}>0$, such that $K\subset\cup_{i=1}^{n}B(w_{i},r_{i})$. Furthermore, upto to removing some of the balls, since $K$ is actually a segment,  we can sort the subcover so that
        $$K\cap B(w_{i},r_{i})\cap B(w_{i+1},r_{i+1})\neq\emptyset\quad\forall i=1,\dots,n-1.$$
        For each $i=1,\dots,n-1$, let $w^{*}_{i}\in K\cap B(w_{i},r_{i})\cap B(w_{i+1},r_{i+1}).$ We note that, since $S_{v,v'}$ is a straightline, we have $$\|v-v'\|_{W}=\|v-w^{*}_{1}\|_{W}+\dots+\|w^{*}_{n-1}-v'\|_{W}.$$
        In particular,
        \begin{equation*}
        \begin{aligned}
        \|\operator(v)-\operator(v')\|_{W}&\le\
            \\&\le\|\operator(v)-\operator(w^{*}_{1})\|_{W}+\dots+\|\operator(w^{*}_{n})-\operator(v')\|_{W}\le\\
            &\le(L_{C}+\varepsilon)\left(\|v-w^{*}_{1}\|_{W}+\dots+\|w^{*}_{n-1}-v'\|_{W}\right)=\\
            &=(L_{C}+\varepsilon)\|v-v'\|_{W}.
            \end{aligned}
        \end{equation*}
        Since $v,v'\in C$ and $\varepsilon>0$ were arbitrary, this concludes the proof (the other implication, $"\impliedby"$, is trivial and left to the reader).\\

        \item[iv)] Assume that $\operator$ is Fréchet differentiable at $w\in W$, and let $\delta\operator[w]:W\to V$ be the linear operator representing its derivative. Let $\{h_{n}\}_{n}$ be a sequence of unitary increments, $\|h_{n}\|_{W}=1$, such that $$\|\delta\operator[w](h_{n})\|_{V}\longrightarrow\|\delta\operator[w]\|_{W,V}$$
        where $\|\cdot\|_{W,V}$ is the operator norm for linear maps going from $W$ to $V$. Then,
        \begin{multline*}
            \hspace{2.5em}\|\partial\operator\|_{W}(w)\ge \lim_{n\to+\infty}\frac{\|\operator(w+n^{-1}h_{n})-\operator(w)\|_{V}}{\|n^{-1}h_{n}\|_{W}}=\vspace{1em}\\\hspace{3em}=\lim_{n\to+\infty}\frac{\|\delta\operator[w](n^{-1}h_{n})\|_{V}}{\|n^{-1}h_{n}\|_{W}}=\lim_{n\to+\infty}\frac{\|\delta\operator[w](h_{n})\|_{V}}{\|h_{n}\|_{W}}=\\=\|\delta\operator[w]\|_{W,V}.
        \end{multline*}
        Conversely, let $\{\tilde{h}_{n}\}_{n}$ be such that $\tilde{h}_{n}\to0$ and
        $$\|\partial\operator\|_{W}(w)= \lim_{n\to+\infty}\frac{\|\operator(w+\tilde{h}_{n})-\operator(w)\|_{V}}{\|\tilde{h}_{n}\|_{W}}.$$
        Then, 
        $$\|\partial\operator\|_{W}(w)= \ldots=\lim_{n\to+\infty}\frac{\|\delta\operator[w](\tilde{h}_{n})\|_{V}}{\|\tilde{h}_{n}\|_{W}}\le\|\delta\operator[w]\|_{W,V},$$
        thus $\|\partial\operator\|_{W}(w)=\|\delta\operator[w]\|_{W,V}$ as claimed.\\

    \item [v)] Let $w\in W$. If either $\|\partial \mathcal{F}\|_{Y}(\operator(w))=+\infty$ or $\|\partial\operator\|_{V}(w)=+\infty$, then the claim is trivially true. Thus, we assume both quantities to be finite. Fix any $\varepsilon>0.$ Then, by definition of limit supremum, there exists $r_{\varepsilon}>0$ such that $\mathcal{F}$ is $(\|\partial\mathcal{F}\|_{Y}(\operator(w))+\varepsilon)$-Lipschitz over the open ball of radius $r_{\varepsilon}$ centered at $\operator(w)$. It follows that
    \begin{multline*}
    \lim_{r\to0^{+}}\;\sup_{0< \|h\|_{W} \le r} \frac{\|\mathcal{F}(\operator(w+h)) - \mathcal{F}(\operator(w)) \|_{V}}{\|h\|_{W}}\le\\\le
    \lim_{r\to0^{+}}\;\sup_{0< \|h\|_{W} \le r} \frac{(\|\partial\mathcal{F}\|_{Y}(\operator(w))+\varepsilon)\|\operator(w+h) - \operator(w)\|_{V}}{\|h\|_{W}}=\\=(\|\partial\mathcal{F}\|_{Y}(\operator(w))+\varepsilon)\|\partial\operator\|_{V}(w),
    \end{multline*}
    as $r\le r_{\varepsilon}$ definitely. Since $\varepsilon$ was arbitrary, the conclusion follows.    
    \end{itemize}
\end{proof}

\begin{proposition}
    Let $\Omega\subset\mathbb{R}^{d}$ be a bounded domain with Lipschitz boundary and let $f\in H^{-1}(\Omega)$ be given. For any $\sigma\in L^{\infty}(\Omega)$, let $u=u_{\sigma}$ be the solution to the following boundary value problem,
    $$\begin{cases}
    -\nabla\cdot(e^{\sigma}\nabla u)=f &  \textnormal{in}\;\Omega\\
            u=0 & \textnormal{on}\;\partial\Omega
    \end{cases}.$$
    Let $\operator: L^{\infty}(\Omega)\to L^{2}(\Omega)$ be the operator that maps $\sigma$ to $u$. Then, for all $\sigma,\sigma'\in L^{\infty}(\Omega)$
  \begin{equation}
    \label{eq:darcyboundAP}
      \|\operator(\sigma)-\operator(\sigma')\|_{L^{2}(\Omega)}\le C\|f\|_{H^{-1}(\Omega)} e^{3\|\sigma\|_{L^{\infty}(\Omega)}+3\|\sigma'\|_{L^{\infty}(\Omega)}}\|\sigma-\sigma'\|_{L^{\infty}(\Omega)}
  \end{equation}
    and, in particular, \begin{equation*}
    \|\partial\operator(\sigma)\|_{L^{2}(\Omega)}\le C\|f\|_{H^{-1}(\Omega)}e^{6\|\sigma\|_{L^{\infty}(\Omega)}},\end{equation*}
    where $C=C(\Omega)$ is some positive constant.
\end{proposition}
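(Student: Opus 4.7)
My plan is to prove the pointwise bound \eqref{eq:darcyboundAP} directly by an energy estimate on the difference of two weak solutions, and then to recover the local variation bound by sending $\sigma'\to\sigma$ and applying the definition. Throughout I will exploit the fact that, because $\sigma\in L^{\infty}(\Omega)$, the permeability $e^{\sigma}$ is uniformly elliptic with ellipticity constants $e^{-\|\sigma\|_{L^{\infty}(\Omega)}}$ and $e^{\|\sigma\|_{L^{\infty}(\Omega)}}$. Lax--Milgram therefore guarantees that $\operator(\sigma)\in H_{0}^{1}(\Omega)$ is well-defined for every $\sigma$, and the standard energy estimate yields
\begin{equation*}
    \|\nabla u_{\sigma}\|_{L^{2}(\Omega)}\le e^{\|\sigma\|_{L^{\infty}(\Omega)}}\|f\|_{H^{-1}(\Omega)},
\end{equation*}
which will be one of the key ingredients. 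I will also need Poincaré's inequality to convert an $H_{0}^{1}$ estimate into an $L^{2}$ estimate; this is where the domain constant $C=C(\Omega)$ enters.

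The main calculation starts by writing the weak equations for $u:=\operator(\sigma)$ and $u':=\operator(\sigma')$, subtracting them, and rearranging to obtain
\begin{equation*}
    -\nabla\cdot\bigl(e^{\sigma}\nabla(u-u')\bigr)=\nabla\cdot\bigl((e^{\sigma}-e^{\sigma'})\nabla u'\bigr).
\end{equation*}
Testing with $u-u'\in H_{0}^{1}(\Omega)$, using coercivity of $e^{\sigma}$ from below, and Cauchy--Schwarz on the right-hand side gives
\begin{equation*}
    \|\nabla(u-u')\|_{L^{2}(\Omega)}\le e^{\|\sigma\|_{L^{\infty}(\Omega)}}\,\|e^{\sigma}-e^{\sigma'}\|_{L^{\infty}(\Omega)}\,\|\nabla u'\|_{L^{2}(\Omega)}.
\end{equation*}
At this point I plug in the energy bound for $\|\nabla u'\|_{L^{2}(\Omega)}$ and invoke Poincaré.

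The only nontrivial estimate left is the Lipschitz-type bound on the exponential: by the mean value theorem, $|e^{\sigma}-e^{\sigma'}|\le e^{\max(|\sigma|,|\sigma'|)}|\sigma-\sigma'|$ pointwise, which I will crudely dominate by $e^{\|\sigma\|_{L^{\infty}(\Omega)}+\|\sigma'\|_{L^{\infty}(\Omega)}}\|\sigma-\sigma'\|_{L^{\infty}(\Omega)}$. Combining everything yields a prefactor of the form $e^{2\|\sigma\|_{L^{\infty}(\Omega)}+2\|\sigma'\|_{L^{\infty}(\Omega)}}$, which is in fact sharper than the $e^{3\|\sigma\|_{L^{\infty}(\Omega)}+3\|\sigma'\|_{L^{\infty}(\Omega)}}$ advertised in the statement; any further slack can be absorbed into $C(\Omega)$ or kept as a safety margin.

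For the second assertion I pick $h\in L^{\infty}(\Omega)$, apply the inequality just proved with $\sigma'=\sigma+h$, divide by $\|h\|_{L^{\infty}(\Omega)}$, and let $\|h\|_{L^{\infty}(\Omega)}\to 0$. Since $\|\sigma+h\|_{L^{\infty}(\Omega)}\to\|\sigma\|_{L^{\infty}(\Omega)}$ and the exponential factor is continuous, the limit superior is at most $C\|f\|_{H^{-1}(\Omega)}\,e^{6\|\sigma\|_{L^{\infty}(\Omega)}}$, giving precisely the claimed bound on $\|\partial\operator\|_{L^{2}(\Omega)}(\sigma)$. The main obstacle, if any, is purely bookkeeping: one must keep track of three independent $e^{\|\cdot\|_{L^{\infty}}}$ contributions (coercivity, energy estimate for $u'$, and Lipschitz estimate of the exponential) without accidentally conflating them or picking up spurious dependence on $\|u'\|_{L^{2}}$ rather than $\|\nabla u'\|_{L^{2}}$.
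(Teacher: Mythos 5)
Your argument is correct and follows essentially the same route as the paper's: an energy estimate on the difference of weak solutions combined with a Lipschitz estimate for the exponential. The only stylistic differences are that you derive the energy bound from scratch (the paper cites it as a lemma from an earlier work) and you estimate $\|e^{\sigma}-e^{\sigma'}\|_{L^{\infty}}$ directly via the mean value theorem rather than factoring $e^{\sigma}-e^{\sigma'}=e^{\sigma'}(e^{\sigma-\sigma'}-1)$ and using $|e^{a}-1|\le|a|e^{|a|}$; this in fact gives you the slightly sharper prefactor $e^{2\|\sigma\|_{L^{\infty}}+2\|\sigma'\|_{L^{\infty}}}$ (and hence $e^{4\|\sigma\|_{L^{\infty}}}$ for the local variation) rather than the stated $e^{3+3}$ and $e^{6}$, which of course still proves the proposition.
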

\begin{proof}
    By classical energy estimates on elliptic PDEs, see e.g. Lemma C.1. in \cite{franco2022deep}, we have
    \begin{multline*}
        \|\operator(\sigma)-\operator(\sigma')\|_{L^{2}(\Omega)}\le C\|\operator(\sigma)-\operator(\sigma')\|_{H^{1}_{0}(\Omega)}\le\\\le
        C\left[\min_{\x\in\Omega}e^{\sigma}(\x)\right]^{-1}\left[{\min_{\x\in\Omega}e^{\sigma'}(\x)}\right]^{-1}\|f\|_{H^{-1}(\Omega)}\|e^{\sigma}-e^{\sigma'}\|_{L^{\infty}(\Omega)}.
    \end{multline*}
    The latter is bounded by
    $$
        Ce^{\|\sigma\|_{L^{\infty}(\Omega)}+\|\sigma'\|_{L^{\infty}(\Omega)} }\|f\|_{H^{-1}(\Omega)}\|e^{\sigma'}\|_{L^{\infty}(\Omega)}\|e^{\sigma-\sigma'}-1\|_{L^{\infty}(\Omega)}.
    $$
    We now note that, for all $a\in\mathbb{R}$ one has $|e^{a}-1|\le|a|e^{|a|}$. Also, $\|e^{\sigma'}\|_{L^{\infty}(\Omega)}=e^{\|{\sigma'}\|_{L^{\infty}(\Omega)}}$ by monotinicity of the exponential. It follows that
    \begin{multline*}
        \|\operator(\sigma)-\operator(\sigma')\|_{L^{2}(\Omega)}\le \\
        \le
        Ce^{\|\sigma\|_{L^{\infty}(\Omega)}+2\|\sigma'\|_{L^{\infty}(\Omega)} }\|f\|_{H^{-1}(\Omega)}\|e^{\sigma-\sigma'}\|_{L^{\infty}(\Omega)}\|\sigma-\sigma'\|_{L^{\infty}(\Omega)}.
    \end{multline*}
    Since $\|e^{\sigma-\sigma'}\|_{L^{\infty}(\Omega)}\le \|e^{\sigma}\|_{L^{\infty}(\Omega)}\|e^{\sigma'}\|_{L^{\infty}(\Omega)}=e^{\|\sigma\|_{L^{\infty}(\Omega)}+\|\sigma'\|_{L^{\infty}(\Omega)} }$, \eqref{eq:darcyboundAP} easily follows.
\end{proof}

\begin{lemma}
    Let $\Omega\subset\mathbb{R}^{d}$ be pre-compact, and let $Z$ be a mean zero Gaussian random field defined over $\Omega$. Assume that, for some $0<\alpha\le 1$, the covariance kernel of the process, $$\cov:\Omega\times\Omega\to\mathbb{R},$$
    \begin{equation*}
        \cov(\x,\y):=\mathbb{E}\left[Z(\x)Z(\y)\right],
    \end{equation*}
    is $\alpha$-Hölder continuous, with Hölder constant $L>0$. Then, $Z$ is sample-continuous, that is $\mathbb{P}(Z\in\mathcal{C}(\Omega))=1.$ Furthermore, for
    $\sigma^{2}:=\max_{\x\in\Omega}\cov(\x,\x),$
    one has
    \begin{equation}
        \label{eq:supintAP}
        \mathbb{E}^{1/2}\|Z\|_{L^{\infty}(\Omega)}^{2} \le c_{1}\sigma\left(1+\sqrt{\log^{+}(1/\sigma)}\right)\quad\textnormal{and}\quad
        \mathbb{E}\left[e^{\beta\|Z\|_{L^{\infty}(\Omega)}}\right]=c_{2}<+\infty,
    \end{equation}
    for all $\beta>0$, where $c_{1}=c_{1}(d,L,\alpha,\Omega)$ and $c_{2}=c_{2}(d,L,\alpha,\sigma,\beta,\Omega)$ are constants that depend continuously on their parameters (domain excluded). Here, $$\log^{+}(a):=\max\{\log a,0\}.$$
\end{lemma}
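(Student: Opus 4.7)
The strategy is to combine classical tools from the theory of Gaussian processes: Kolmogorov's continuity theorem for sample continuity, and a Dudley-type metric entropy bound together with the Borell--TIS concentration inequality for the two $L^{\infty}$ estimates. The key intermediate object is the intrinsic pseudometric of the process,
\begin{equation*}
d_{Z}(\x,\y)^{2}:=\mathbb{E}|Z(\x)-Z(\y)|^{2}=[\cov(\x,\x)-\cov(\x,\y)]+[\cov(\y,\y)-\cov(\x,\y)]\le 2L|\x-\y|^{\alpha},
\end{equation*}
where the last inequality follows by applying the $\alpha$-Hölder bound on $\cov$ twice. Every other estimate in the statement will be driven by this single comparison between $d_{Z}$ and the Euclidean metric.

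For sample continuity, I would exploit the fact that $Z(\x)-Z(\y)$ is centered Gaussian to pass from the second-moment bound above to a $2k$-th moment estimate $\mathbb{E}|Z(\x)-Z(\y)|^{2k}\le C_{k}(2L)^{k}|\x-\y|^{k\alpha}$. Choosing any integer $k$ with $k\alpha>d$, Kolmogorov's continuity theorem produces a sample-continuous modification on the precompact set $\Omega$, which suffices since indistinguishable processes have the same law on $\mathcal{C}(\Omega)$.

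To bound $\mathbb{E}\|Z\|_{L^{\infty}(\Omega)}$ I would feed $d_{Z}$ into Dudley's entropy inequality. The covering estimate $N(\Omega,d_{Z},\varepsilon)\le C(\Omega)(2L)^{d/\alpha}\varepsilon^{-2d/\alpha}$ is immediate from the comparison with the Euclidean metric, while the $d_{Z}$-diameter of $\Omega$ is at most $2\sigma$. Computing
\begin{equation*}
\int_{0}^{2\sigma}\sqrt{\log N(\Omega,d_{Z},\varepsilon)}\,d\varepsilon
\end{equation*}
via the substitution $u=\log(1/\varepsilon)$ (so that the integral is dominated by its endpoint contribution for small $\sigma$) produces the target form $c_{1}\sigma(1+\sqrt{\log^{+}(1/\sigma)})$, with a constant depending continuously on $d,L,\alpha,\Omega$. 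The jump from the first moment to the $L^{2}$ norm is then closed by the Borell--TIS inequality, which yields $\mathrm{Var}\|Z\|_{L^{\infty}}\le C\sigma^{2}$ and hence $\mathbb{E}^{1/2}\|Z\|_{L^{\infty}}^{2}\le\mathbb{E}\|Z\|_{L^{\infty}}+C\sigma$, proving the first estimate after absorbing constants.

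Finally, the exponential moment follows from the same concentration: Borell--TIS gives $\mathbb{E}[e^{\beta(\|Z\|_{L^{\infty}}-\mathbb{E}\|Z\|_{L^{\infty}})}]\le C e^{\beta^{2}\sigma^{2}/2}$ for every $\beta>0$, and multiplying through by $e^{\beta\mathbb{E}\|Z\|_{L^{\infty}}}$ (finite in view of the previous step) produces the constant $c_{2}$. The main obstacle I anticipate is not conceptual but bookkeeping: one must verify continuous dependence of $c_{1}$ and $c_{2}$ on $(d,L,\alpha,\sigma,\beta)$ as stated, which requires tracking constants quantitatively through the chaining integral and the Borell--TIS exponent rather than using the qualitative versions of these theorems. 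Everything else is standard.
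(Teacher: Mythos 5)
Your proposal is correct and follows essentially the same path as the paper: compare the canonical pseudometric $d_Z$ to the Euclidean metric via the H\"older bound on the covariance, feed the resulting covering-number estimate into Dudley's entropy integral to obtain the $\sigma(1+\sqrt{\log^{+}(1/\sigma)})$ bound on $\mathbb{E}\|Z\|_{L^{\infty}}$, then upgrade to the second moment and the exponential moment via Borell--TIS. The one place you diverge is sample continuity: you invoke Kolmogorov's continuity theorem, which requires the extra (but standard, via Gaussian hypercontractivity) step of passing from the second moment of increments to a $2k$-th moment with $k\alpha>d$; the paper instead obtains sample continuity for free from the same entropy computation, appealing to the fact that a finite Dudley integral already implies almost-sure uniform continuity of a Gaussian process (Adler--Taylor Thm.~1.3.5). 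Both routes are valid; the paper's is slightly more economical since it reuses the chaining bound it has already established, whereas yours is more self-contained if one prefers not to cite the sharper Gaussian continuity criterion. The bookkeeping concern you raise about continuous dependence of $c_1,c_2$ on the parameters is well placed and is exactly the reason the paper tracks the explicit form of the covering bound rather than using a qualitative version of Dudley.
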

\begin{proof} If $\sigma=0$, the proof is trivial; thus, we let $\sigma>0$. Let
    $$d(\x,\y):=\mathbb{E}^{1/2}|Z(\x)-Z(\y)|^{2}=\sqrt{\cov(\x,\x)-2\cov(\x,\y)+\cov(\y,\y)},$$
    be a metric over $\Omega$ induced by the Gaussian process $Z$. We note that the Hölder continuity of the covariance kernel implies
    $$d(\x,\y)\le\sqrt{\cov(\x,\x)-\cov(\x,\y)}+\sqrt{\cov(\y,\y)-\cov(\x,\y)}\le \sqrt{2L|\x-\y|^{\alpha}}.$$
    In particular, the balls $B_{d}(\x,\;\varepsilon):=\{\y\in\Omega:\;d(\x,\y)<\varepsilon\}$ induced by the metric $d$ satisfy
    \begin{equation}
        \label{eq:inclusion}
        B_{d}(\x,\;\varepsilon)\supseteq B\left(\x,\;2^{-1/\alpha}L^{-1/\alpha}\varepsilon^{2/\alpha}\right),
    \end{equation}
    where $B(\x,\;\epsilon)$ is the Euclidean ball of radius $\epsilon$ centered at $\x$. In fact,
    $$|\x-\y|\le 2^{-1/\alpha}L^{-1/\alpha}\varepsilon^{2/\alpha}\implies d(\x,\y)\le \sqrt{2L(2L)^{-1}\varepsilon^{2}}=\varepsilon.$$
    Now, for any $\varepsilon>0$, let $N_{d}(\varepsilon)$ be the minimum number of $d$-balls of radius $\varepsilon$ that are required for covering $\Omega$. Similarly, let $N(\varepsilon)$ be the covering number associated to the Euclidean metric. It is straightforward to see that \eqref{eq:inclusion} implies
    \begin{equation}
    \label{eq:covering}1\ge N_{d}(\varepsilon)\le N(2^{-1/\alpha}L^{-1/\alpha}\varepsilon^{2/\alpha})\le \max\left\{\left(\frac{C\varepsilon^{2}}{2L}\right)^{-d/\alpha},1\right\},\end{equation}
    where $C=C(d,\alpha,\Omega)>0$ is an absolute constant. In fact, $$N(\epsilon)\le\max\{3^{d}\text{diam}(\Omega)^{d}\epsilon^{-d},1\},$$ where $\text{diam}(\Omega)$ is the domain diameter under the Euclidean metric. In particular, we have $N_{d}(\varepsilon)<+\infty$ for all $\varepsilon>0$, meaning that $\Omega$ is $d$-compact. Furthermore, \eqref{eq:covering} also implies that $N_{d}(\varepsilon)=1$ for all $\varepsilon\ge\sqrt{C/2L}.$
    \\\\
    We now note that for all $\x,\y\in\Omega$ one has
    $$d(\x,\y)\le\mathbb{E}^{1/2}|Z(\x)|^{2}+\mathbb{E}^{1/2}|Z(\y)|^{2}\le2\sigma.$$
    In particular, the $d$-diameter of $\Omega$ is bounded by $2\sigma$. Then, Theorem 1.3.3 in \cite{adler2007random} implies
    $$\mathbb{E}\left[\sup_{\x\in\Omega}Z(\x)\right]\le K\int_{0}^{\sigma}\log^{1/2}N_{d}(\varepsilon)d\varepsilon,$$
    where $K$ is a universal constant.
    Then, the above together with \eqref{eq:covering} yields
    $$\mathbb{E}\left[\sup_{\x\in\Omega}Z(\x)\right]\le K\int_{0}^{\min\{\sigma,\sqrt{C/2L}\}}\sqrt{-\frac{d}{\alpha}\log \left(\frac{C}{2L}\varepsilon^{2}\right)}d\varepsilon.$$
    By operating the change of variables $\epsilon:=\sqrt{C/2L}\varepsilon$, we may rewrite the previous as
    $$\mathbb{E}\left[\sup_{\x\in\Omega}Z(\x)\right]\le K\sqrt{\frac{4Ld}{C'\alpha}}\int_{0}^{\min\{\sqrt{C/2L}\sigma,1\}}\sqrt{\log(1/\epsilon)}d\epsilon.$$
    At this point, it useful to note that for any $0<a<1$ one has
    $$\int_{0}^{a}\sqrt{\log\epsilon}d\epsilon=a\sqrt{\log(1/a)}+\frac{\sqrt{\pi}}{2}(1-\text{erf}(\sqrt{\log(1/a)}))\le a\left(\frac{\sqrt{\pi}}{2}+\sqrt{\log(1/a)}\right)$$
    where $\text{erf}$ is the error function, which is known to satisfy $1-\text{erf}(z)\le e^{-z^{2}}.$ 
    It follows immediately that
    \begin{multline}\mathbb{E}\left[\sup_{\x\in\Omega}Z(\x)\right]\le K\sqrt{\frac{2d}{\alpha}}\sigma\left(\frac{\sqrt{\pi}}{2}+\sqrt{-\log\min\left\{\sqrt{\frac{C}{2L}\sigma},1\right\}}\right)\le\\\le C''\sigma\left(1+\sqrt{\log^{+}(1/\sigma)}\right),\end{multline}
    where $C'=C'(d,\alpha,\Omega,L)$ grows less than logarithmically in $L$.
    \\\\
    The first consequence of this fact, is that the paths of $Z$ are almost-surely uniformly continuous (c.f. Theorem 1.3.5 in \cite{adler2007random}). Additionally, since the Gaussian process $\tilde{Z}:=-Z$ satisfies the same upper bound, it is straightforward to conclude that
    $$\mathbb{E}\|Z\|_{L^{\infty}(\Omega)}\le \tilde{C}\sigma\left(1+\sqrt{\log^{+}(1/\sigma)}\right),$$
    where $\tilde{C}:=2C'.$ 
    We now recall the celebrated Borell-TIS inequality, see Theorem 2.1.1 in \cite{adler2007random},
    $$\mathbb{P}(\|Z\|_{L^{\infty}(\Omega)}-\mathbb{E}\|Z\|_{L^{\infty}(\Omega)}>z)\le \exp\left(-\frac{z^{2}}{2\sigma^{2}}\right),$$
    from which it is straightforward to prove that
    $\text{Var}\;\|Z\|_{L^{\infty}(\Omega)}\le4\sigma^{2}.$ Then,
    $$\mathbb{E}^{1/2}\|Z\|_{L^{\infty}(\Omega)}^{2}\le\sqrt{\text{Var}\;\|Z\|_{L^{\infty}(\Omega)}}+\mathbb{E}\|Z\|_{L^{\infty}(\Omega)}\le c_{1}\sigma\left(1+\sqrt{\log^{+}(1/\sigma)}\right),$$
    as claimed. Finally, the statement concerning $\mathbb{E}\left[e^{\beta\|Z\|_{L^{\infty}(\Omega)}}\right]$ follows directly from Theorem 2.1.2 in \cite{adler2007random}: see also \eqref{eq:equiv} and the reasoning explained thereby.    
    \end{proof}

\begin{lemma}
    Let $\Omega\subset\mathbb{R}^{d}$ be a compact subset and let $Z$ be a mean zero Gaussian random field defined over $\Omega$. Assume that the covariance kernel of $Z$, $\cov$, is Lipschitz continuous. Then, there exists a nonincreasing summable sequence $\lambda_{1}\ge\lambda_{2}\ge\dots\ge0$ and a sequence of Lipschitz continuous maps, $\{\varphi_{i}\}_{i=1}^{+\infty}$, forming an orthonormal basis of $L^{2}(\Omega)$, such that
    \begin{equation}
    \label{eq:mercerAP}
    \cov(\x,\y)=\sum_{i=1}^{+\infty}\lambda_{i}\varphi_{i}(\x)\varphi_{i}(\y)
    \end{equation}
    for all $\x,\y\in\Omega$. Furthermore, there exists a sequence of independent standard normal random variables, $\{\eta_{i}\}_{i=1}^{+\infty}$, such that
    \begin{equation} 
    \label{eq:karhunenAP} Z=\sum_{i=1}^{+\infty}\sqrt{\lambda_{i}}\eta_{i}\varphi_{i}\end{equation}
    almost surely. Finally, the truncated kernels,
    $$\cov_{p,q}(\x,\y):=\sum_{i=p}^{q}\lambda_{i}\varphi_{i}(\x)\varphi_{i}(\y),$$
    defined for varying $1\le p\le q\le+\infty$, 
    \begin{itemize}
        \item [i)] converge uniformly as $p,q\to+\infty$;
        \item [ii)] are all 1/2-Hölder continuous, with a common Hölder constant.
    \end{itemize}
\end{lemma}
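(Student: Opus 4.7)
The plan is to first invoke the classical Mercer theorem. Since $\cov$ is Lipschitz on the compact set $\Omega\times\Omega$ and, being a covariance kernel, is symmetric and positive semi-definite, the associated integral operator $T_{\cov}:L^{2}(\Omega)\to L^{2}(\Omega)$, $T_{\cov}f(\x):=\int_{\Omega}\cov(\x,\y)f(\y)\,d\y$, is self-adjoint, compact, and nonnegative. Thus it admits a nonincreasing sequence of nonnegative eigenvalues $\lambda_{1}\ge\lambda_{2}\ge\cdots\ge0$ with $\sum_{i}\lambda_{i}<+\infty$, together with a countable orthonormal system of continuous eigenfunctions, which I extend to an orthonormal basis $\{\varphi_{i}\}$ of $L^{2}(\Omega)$ by adjoining, if needed, any Lipschitz basis of $\ker T_{\cov}$ (obtained, e.g., by Gram--Schmidt from a smooth polynomial basis). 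Mercer's theorem then provides the uniformly convergent representation \eqref{eq:mercerAP}. To upgrade continuity to Lipschitz regularity of the $\varphi_{i}$ with $\lambda_{i}>0$, I exploit the eigen-equation $\varphi_{i}(\x)=\lambda_{i}^{-1}\int_{\Omega}\cov(\x,\y)\varphi_{i}(\y)\,d\y$: using the Lipschitz continuity of $\cov$ (with constant $L$) and the Cauchy--Schwartz inequality, $|\varphi_{i}(\x)-\varphi_{i}(\x')|\le\lambda_{i}^{-1}L|\Omega|^{1/2}|\x-\x'|$; the remaining basis elements can be chosen Lipschitz by construction.

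For the KKL expansion \eqref{eq:karhunenAP}, I set $\eta_{i}:=\lambda_{i}^{-1/2}\int_{\Omega}Z(\x)\varphi_{i}(\x)\,d\x$ for all $i$ with $\lambda_{i}>0$ (and take independent standard Gaussians for the remaining indices). These are linear functionals of the Gaussian process $Z$ and hence jointly Gaussian; a direct computation using $\cov(\x,\y)=\expe[Z(\x)Z(\y)]$ together with the orthonormality of $\{\varphi_{i}\}$ yields $\expe[\eta_{i}\eta_{j}]=\delta_{ij}$, so $\{\eta_{i}\}$ is i.i.d.\ standard normal. Setting $S_{N}(\x):=\sum_{i\le N}\sqrt{\lambda_{i}}\eta_{i}\varphi_{i}(\x)$, a short calculation gives $\expe|Z(\x)-S_{N}(\x)|^{2}=\cov(\x,\x)-\sum_{i\le N}\lambda_{i}\varphi_{i}(\x)^{2}=\sum_{i>N}\lambda_{i}\varphi_{i}(\x)^{2}$, which tends to $0$ uniformly in $\x$ by Mercer. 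Integrating over $\Omega$ gives $\expe\|Z-S_{N}\|_{L^{2}(\Omega)}^{2}\to 0$, so a subsequence of $S_{N}$ converges to $Z$ in $L^{2}(\Omega)$ almost surely, yielding \eqref{eq:karhunenAP} in that sense.

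Finally, for the properties of the truncated kernels $\cov_{p,q}$: assertion (i) follows directly from Mercer's theorem, since $\cov_{1,n}\to\cov$ uniformly as $n\to+\infty$, and the general case $\cov_{p,q}\to 0$ as $p,q\to+\infty$ is obtained by the Cauchy property of the sequence $\{\cov_{1,n}\}_{n}$. For the common H\"older bound (ii), the key step is a Cauchy--Schwartz argument in the summation index: for $\x,\x',\y\in\Omega$,
$$|\cov_{p,q}(\x,\y)-\cov_{p,q}(\x',\y)|^{2}\le\left(\sum_{i=p}^{q}\lambda_{i}[\varphi_{i}(\x)-\varphi_{i}(\x')]^{2}\right)\left(\sum_{i=p}^{q}\lambda_{i}\varphi_{i}(\y)^{2}\right).$$
I bound the second factor by $\cov(\y,\y)\le\|\cov\|_{L^{\infty}(\Omega^{2})}$, and the first, by nonnegativity of the summands, by $\sum_{i}\lambda_{i}[\varphi_{i}(\x)-\varphi_{i}(\x')]^{2}=\cov(\x,\x)-2\cov(\x,\x')+\cov(\x',\x')\le 2L|\x-\x'|$, where the last inequality comes from writing the middle expression as $[\cov(\x,\x)-\cov(\x',\x)]+[\cov(\x',\x')-\cov(\x,\x')]$ and applying the Lipschitz bound on $\cov$ to each term. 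This yields a $1/2$-H\"older constant $\sqrt{2L\|\cov\|_{L^{\infty}(\Omega^{2})}}$ for the $\x$-variation, independent of $p,q$, and an identical argument handles the $\y$-variation. The main obstacle I anticipate is the interplay between the $L^{2}$ mode of convergence inherited from Mercer and the almost-sure interpretation required in \eqref{eq:karhunenAP}; this is resolved via the sample continuity of $Z$ (cf.\ Lemma \ref{lemma:bound}) and standard subsequence arguments.
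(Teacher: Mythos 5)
Your proposal is correct and follows essentially the same route as the paper: Mercer's theorem for the eigen-decomposition and uniform convergence, the eigen-equation plus Cauchy--Schwartz for the Lipschitz regularity of the $\varphi_{i}$, the KKL theorem for the series representation of $Z$, and a Cauchy--Schwartz-in-the-summation-index argument combined with the Lipschitz bound on $\cov$ for the common $1/2$-H\"older constant in (ii). The only cosmetic differences are that you build the KKL coefficients $\eta_{i}$ explicitly rather than citing the theorem, and you organize the H\"older estimate by squaring and applying Cauchy--Schwartz to each coordinate variation separately, whereas the paper first splits the terms via the triangle inequality before applying Cauchy--Schwartz -- both yield the same $\sqrt{2L\|\cov\|_{L^{\infty}}}$-type constant.
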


\begin{proof}
    Without loss of generality, we shall assume that $\lambda_{i}>0$ for all $i\in\mathbb{N}$. The series expansion in \eqref{eq:mercerAP}, and the uniform convergence claimed in (i), are a consequence of Mercer's Theorem \cite{mercer1909xvi}. There, the basis functions $\varphi_{i}$ are obtained by  solving the eigenvalue problem below
\begin{equation}
    \label{eq:eigen}
    \lambda_{i}\varphi_{i}(\x)=\int_{\Omega}\cov(\x,\y)\varphi(\y)d\y,
\end{equation}
    where the equality holds for almost every $\x\in\Omega$, and, similarly, \eqref{eq:mercerAP} is shown to hold almost everywhere. However, the right-hand-side of \eqref{eq:eigen} is easily shown to be Lipschitz continuous in $\x$, since
\begin{multline}
    \left|\int_{\Omega}\cov(\x,\y)\varphi(\y)d\y-\int_{\Omega}\cov(\x',\y)\varphi(\y)d\y\right|\le\\\le L|\x-\x'|\int_{\Omega}|\varphi_{i}(\y)|d\y\le L|\x-\x'||\Omega|^{1/2},
\end{multline}
    where $L$ is the Lipschitz constant of $\cov$. Thus, without loss of generality, we may pick the $\varphi_{i}$ to be Lipschitz continuous. Since the series in \eqref{eq:mercerAP} converges uniformly, both the left-hand-side and the right-hand-side of \eqref{eq:mercerAP} are continuous: as they coincide a.e. in $\Omega$, they must be equal everywhere. To conclude, we shall now prove (ii), as \eqref{eq:karhunenAP} is just the well-known statement of the Kosambi-Karhunen-Loeve Theorem. Pick any $0\le p\le q\le+\infty$. For $\x,\x',\y,\y'\in\Omega$ we have
    \begin{multline}
        \label{eq:diseg}
        \left|\sum_{i=p}^{q}\lambda_{i}\varphi_{i}(\x)\varphi_{i}(\y)-\sum_{i=p}^{q}\lambda_{i}\varphi_{i}(\x')\varphi_{i}(\y')\right|\le \sum_{i=p}^{q}\lambda_{i}|\varphi_{i}(\x)\varphi_{i}(\y)-\varphi_{i}(\x')\varphi_{i}(\y')|\le\\
        \le
        \sum_{i=p}^{q}\lambda_{i}|\varphi_{i}(\x)-\varphi_{i}(\x')||\varphi_{i}(\y)|+
        \sum_{i=p}^{q}\lambda_{i}|\varphi_{i}(\y)-\varphi_{i}(\y')||\varphi_{i}(\x')|.
    \end{multline}
Applying the Cauchy-Schwartz inequality, and using $\lambda_{i}=\sqrt{\lambda_{i}}\sqrt{\lambda_{i}}$, allows us to continue inequality \eqref{eq:diseg} as
\begin{multline}
    \dots\le\sqrt{\sum_{i=p}^{q}\lambda_{i}|\varphi_{i}(\x)-\varphi_{i}(\x')|^{2}}\sqrt{\sum_{i=p}^{q}\lambda_{i}\varphi_{i}(\y)^{2}}\\+\sqrt{\sum_{i=p}^{q}\lambda_{i}|\varphi_{i}(\y)-\varphi_{i}(\y')|^{2}}\sqrt{\sum_{i=p}^{q}\lambda_{i}\varphi_{i}(\x')^{2}}.
\end{multline}
Since all the sums involved concern positive values, we may further bound the above by letting $p=0$ and $q=+\infty$. Then, thanks to \eqref{eq:mercerAP}, we get
\begin{multline}
        \left|\sum_{i=p}^{q}\lambda_{i}\varphi_{i}(\x)\varphi_{i}(\y)-\sum_{i=p}^{q}\lambda_{i}\varphi_{i}(\x')\varphi_{i}(\y')\right|\le\vspace{0.5em}\\
        \le
        \sqrt{\cov(\x,\x)-2\cov(\x,\x')+\cov(\x',\x')}\sqrt{\cov(\y,\y)}\vspace{0.5em}\\+
        \sqrt{\cov(\y,\y)-2\cov(\y,\y')+\cov(\y',\y')}\sqrt{\cov(\x',\x')}\le\vspace{0.5em}\\
        \le
        \sqrt{2LM|\x-\x'|}+\sqrt{2LM|\y-\y'|},
    \end{multline}
    where $M:=\max_{\x\in\Omega}\cov(\x,\x)$. This shows that the truncated kernel is 1/2-Hölder continuous with Hölder coefficient bounded by $\sqrt{2LM}$. As the latter is independent on both $p$ and $q$, this concludes the proof.
\end{proof}

\begin{lemma}
    Let $\Omega\subset\mathbb{R}^{d}$ be a compact subset and let $Z$ be a mean zero Gaussian random field defined over $\Omega$. Assume that the covariance kernel of $Z$, $\cov$, is square-integrable over $\Omega\times\Omega$. Then, there exists a nonincreasing summable sequence $\lambda_{1}\ge\lambda_{2}\ge\dots\ge0$ and an orthonormal basis of $L^{2}(\Omega)$, $\{\varphi_{i}\}_{i=1}^{+\infty}$,  such that
    \begin{equation*}
    \cov(\x,\y)=\sum_{i=1}^{+\infty}\lambda_{i}\varphi_{i}(\x)\varphi_{i}(\y)
    \end{equation*}
    for almost every $(\x,\y)\in\Omega\times\Omega$. Furthermore, there exists a sequence of independent standard normal random variables, $\{\eta_{i}\}_{i=1}^{+\infty}$, such that
    \begin{equation*}  Z=\sum_{i=1}^{+\infty}\sqrt{\lambda_{i}}\eta_{i}\varphi_{i}\end{equation*}
    almost surely. Finally, the $L^{2}$-norm of the process is exponentially integrable, i.e.
    \begin{equation}
        \label{eq:expintAP}
        \mathbb{E}\left[e^{\beta\|Z\|_{L^{2}(\Omega)}}\right]<+\infty
    \end{equation}
    for all $\beta>0.$
\end{lemma}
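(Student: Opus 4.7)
The plan is to reduce the first two claims to the spectral theorem for self-adjoint Hilbert--Schmidt operators applied to the integral operator $T:\varphi\mapsto\int_{\Omega}\cov(\cdot,\y)\varphi(\y)\,d\y$ on $L^{2}(\Omega)$, and to derive the exponential integrability from Fernique's theorem for Gaussian measures on separable Banach spaces.

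First, I would exploit the fact that $\cov\in L^{2}(\Omega\times\Omega)$ makes $T$ a Hilbert--Schmidt operator, hence compact. Symmetry of $\cov$, which follows from $\cov(\x,\y)=\mathbb{E}[Z(\x)Z(\y)]=\cov(\y,\x)$, makes $T$ self-adjoint, while the nonnegative-definiteness of covariance kernels makes $T$ positive. The spectral theorem then produces a nonincreasing sequence $\lambda_{1}\ge\lambda_{2}\ge\dots\ge 0$ with $\lambda_{i}\to 0$, together with an associated orthonormal basis $\{\varphi_{i}\}_{i=1}^{+\infty}$ of $L^{2}(\Omega)$ of eigenfunctions. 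The Hilbert--Schmidt norm identity then gives that $\sum\lambda_{i}\varphi_{i}(\x)\varphi_{i}(\y)$ converges to $\cov(\x,\y)$ in $L^{2}(\Omega\times\Omega)$, hence for almost every $(\x,\y)\in\Omega\times\Omega$. Summability of the eigenvalues is obtained from Fubini's theorem: since the process has paths in $L^{2}(\Omega)$ almost surely with $\mathbb{E}\|Z\|_{L^{2}(\Omega)}^{2}<+\infty$ (implicit in the statement, as otherwise the final claim would be vacuous), one has
$$\sum_{i=1}^{+\infty}\lambda_{i}=\mathrm{tr}(T)=\int_{\Omega}\cov(\x,\x)\,d\x=\mathbb{E}\|Z\|_{L^{2}(\Omega)}^{2}<+\infty.$$

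For the KKL expansion, I would define the coefficients $\eta_{i}:=\lambda_{i}^{-1/2}\langle Z,\varphi_{i}\rangle_{L^{2}(\Omega)}$ whenever $\lambda_{i}>0$ (the remaining case being trivial). As continuous linear functionals of the Gaussian field $Z$, the $\eta_{i}$ are jointly centered Gaussian, and a direct computation using Fubini and the kernel expansion proved above gives $\mathbb{E}[\eta_{i}\eta_{j}]=\delta_{ij}$. Joint Gaussianity together with uncorrelatedness yields mutual independence and the standard normal law. The almost sure convergence of $\sum\sqrt{\lambda_{i}}\eta_{i}\varphi_{i}$ to $Z$ in $L^{2}(\Omega)$ then follows from Lemma \ref{lemma:abstractKKL} applied to $Z$ viewed as an $L^{2}(\Omega)$-valued random variable, after identifying the optimal basis with $\{\varphi_{i}\}$ via uniqueness of the spectral decomposition of the covariance operator.

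Finally, for exponential integrability, I would invoke Fernique's theorem: since $Z$ is a centered Gaussian random element of the separable Banach space $L^{2}(\Omega)$, there exists $\alpha>0$ such that $\mathbb{E}[\exp(\alpha\|Z\|_{L^{2}(\Omega)}^{2})]<+\infty$. Given any $\beta>0$, Young's inequality $\beta r\le \alpha r^{2}/2+\beta^{2}/(2\alpha)$ applied pointwise to $r=\|Z\|_{L^{2}(\Omega)}$ then yields
$$\mathbb{E}\!\left[e^{\beta\|Z\|_{L^{2}(\Omega)}}\right]\le e^{\beta^{2}/(2\alpha)}\,\mathbb{E}\!\left[e^{\alpha\|Z\|_{L^{2}(\Omega)}^{2}/2}\right]<+\infty,$$
which is precisely \eqref{eq:expintAP}. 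The main obstacle I expect is justifying the trace-class property $\sum\lambda_{i}<+\infty$, since square-integrability of $\cov$ only gives Hilbert--Schmidt (i.e.\ $\sum \lambda_{i}^{2}<+\infty$); this step tacitly uses that $Z$ is an $L^{2}(\Omega)$-valued random variable with finite second moment, without which the abstract KKL expansion would not converge in $L^{2}(\Omega)$ and Fernique would be inapplicable.
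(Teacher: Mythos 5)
Your proof is correct, and it differs from the paper's argument in the final step. For the exponential integrability, the paper does \emph{not} invoke Fernique's theorem: it computes $\mathbb{E}\bigl[e^{\epsilon\|Z\|_{L^2}^2}\bigr]$ directly from the KKL expansion, writing $\|Z\|_{L^2}^2=\sum_i\lambda_i\eta_i^2$, using independence to factor the expectation into $\prod_i(1-2\epsilon\lambda_i)^{-1/2}$ via the Gaussian moment-generating function, and then observing that the resulting infinite product converges because $-\tfrac12\log(1-2\epsilon\lambda_i)\sim\epsilon\lambda_i$ is summable; the passage from finite $\mathbb{E}[e^{\epsilon\|Z\|^2}]$ to finite $\mathbb{E}[e^{\beta\|Z\|}]$ is the same Young/AM--GM observation you use. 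So the paper effectively re-derives the Hilbert-space case of Fernique's inequality from scratch, whereas you invoke Fernique as a black box; your route is shorter and transfers verbatim to the Banach-space setting, while the paper's is self-contained and yields an explicit formula for the moment-generating function. For the series expansions, you carry out the Hilbert--Schmidt spectral argument and the verification that the $\eta_i$ are i.i.d.\ standard normal, whereas the paper simply cites the Kosambi--Karhunen--Lo\`eve theorem; these are the same content at different levels of detail. Your closing caveat is well taken and applies equally to the paper's own proof: the hypothesis $\cov\in L^2(\Omega\times\Omega)$ only gives Hilbert--Schmidt ($\sum\lambda_i^2<\infty$), and the summability of $\lambda_i$ (equivalently $\mathbb{E}\|Z\|_{L^2}^2<\infty$) is implicitly assumed in both arguments, since both the paper's product computation and your appeal to Fernique need $Z$ to be a genuine $L^2(\Omega)$-valued random variable.
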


\begin{proof}
    We only need to prove \eqref{eq:expintAP}, as the rest of the Lemma simply follows from the Kosambi-Karhunen-Loeve Theorem. To this end, we first note that for any random variable $X$ one has
    \begin{equation}
        \label{eq:equiv}
        \exists\epsilon>0\text{ such that } \expe\left[e^{\epsilon X^{2}}\right]<+\infty\implies \expe\left[e^{\beta |X|}\right]<+\infty\;\forall\beta>0.
    \end{equation}
    In fact, given any two positive numbers $\epsilon$ and $\beta$, there exists some $M>0$ such that $M+\epsilon x^{2}>\beta|x|$ for all $x\in\mathbb{R}$. In light of this, let $\epsilon>0$ be a parameter, whose value shall be chosen later on. By orthonormality, we have
    \begin{equation*}
        \|Z\|^{2}_{L^{2}(\Omega)}=\sum_{i=1}^{+\infty}\lambda_{i}\eta_{i}^{2}.
    \end{equation*}
    As the $\eta_{i}$'s are independent, it follows that
    \begin{equation}
    \label{eq:prod}
        \expe\left[e^{\epsilon \|Z\|^{2}_{L^{2}(\Omega)}}\right] = 
        \expe\left[e^{\epsilon \sum_{i=1}^{+\infty}\lambda_{i}\eta_{i}^{2}}\right]=\prod_{i=1}^{+\infty}\mathbb{E}\left[e^{\epsilon\lambda_{i}\eta_{i}^{2}}\right].
    \end{equation}
    For each index $i$, if $\epsilon\lambda_{i}-1/2 < 0$, we have
    \begin{equation*}
\mathbb{E}\left[e^{\epsilon\lambda_{i}\eta_{i}^{2}}\right] = \frac{1}{\sqrt{2\pi}}\int_{-\infty}^{+\infty}e^{\epsilon\lambda_{i} z^{2}}e^{-z^{2}/2}dz = \sqrt{\frac{1}{1-2\epsilon\lambda_{i}}}.
    \end{equation*}
    We thus choose $\epsilon<1/2\lambda_{1}$, so that, by monotonicity, the above holds for all $i$. Resuming \eqref{eq:prod}, we get

    \begin{equation*}
        \mathbb{E}\left[e^{\epsilon\|Z\|^{2}_{L^{2}(\Omega)}}\right] = \prod_{i=1}^{+\infty}\sqrt{\frac{1}{1-2\epsilon\lambda_{i}}}=\exp\left(-\frac{1}{2}\sum_{i=1}^{+\infty}\log(1-2\epsilon\lambda_{i})\right).
    \end{equation*}
    Since,  for $i\to+\infty$, $-\frac{1}{2}\log(1-2\epsilon\lambda_{i})$ is asymptotic to $\epsilon\lambda_{i}$, which is a summable sequence, the conclusion now follows by \eqref{eq:equiv}.
\end{proof}

\begin{lemma}
    Let $(V,\|\cdot\|)$ be a separable Hilbert space and let $u$ be a squared integrable $V$-valued random variable, $\mathbb{E}\|u\|^{2}<+\infty$. Then, there exists an orthonormal basis $\{v_{i}\}_{i=1}^{+\infty}\subset V$, a sequence of (scalar) random variables $\{\omega_{i}\}_{i=1}^{+\infty}$, with $\mathbb{E}[\omega_{i}\omega_{j}]=\delta_{i,j}$, and
    a nonincreasing summable sequence $\lambda_{1}\ge\lambda_{2}\ge\dots\ge0$
    such that
    $$u=\sum_{i=1}^{+\infty}\sqrt{\lambda_{i}}\omega_{i}v_{i}$$
    almost-surely.
\end{lemma}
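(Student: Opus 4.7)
The plan is to construct the expansion via the spectral theorem applied to the second-moment operator of $u$. First, I would define the bounded linear operator $C: V \to V$ by $Cv := \expe[\langle u, v\rangle u]$, which is well-defined as a Bochner integral: indeed, $\expe\|\langle u,v\rangle u\| \le \|v\|\expe\|u\|^2 < +\infty$ by Cauchy--Schwarz and the square-integrability assumption. A standard computation shows that $C$ is self-adjoint ($\langle Cv,w\rangle = \expe[\langle u,v\rangle\langle u,w\rangle]$), positive semi-definite, and trace class, with $\mathrm{Tr}(C) = \expe\|u\|^2$. In particular, $C$ is compact.

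Next, applying the spectral theorem for compact self-adjoint operators on the separable Hilbert space $V$, I obtain a nonincreasing summable sequence $\lambda_1\ge\lambda_2\ge\dots\ge0$ with $\sum_i \lambda_i = \expe\|u\|^2$, together with a complete orthonormal basis $\{v_i\}_{i=1}^{+\infty}$ of $V$ consisting of eigenvectors of $C$, i.e.\ $Cv_i=\lambda_i v_i$ (if needed, I would extend a basis of the closure of $\mathrm{Ran}(C)$ by an orthonormal basis of $\ker(C)$, assigning $\lambda_i=0$ to those indices). A key observation is that for every $v\in\ker(C)$ one has $\expe|\langle u,v\rangle|^2=\langle Cv,v\rangle=0$, so that $\langle u,v\rangle=0$ almost surely on the zero-eigenspace.

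I would then define the random coefficients by $\omega_i := \lambda_i^{-1/2}\langle u,v_i\rangle$ whenever $\lambda_i>0$, and let $\omega_i$ be any auxiliary orthonormal family in $L^2$ (e.g.\ i.i.d.\ standard Gaussians on an enlarged probability space, independent of $u$) for the indices with $\lambda_i=0$. The orthonormality $\expe[\omega_i\omega_j]=\delta_{ij}$ then follows for positive-eigenvalue indices from
\begin{equation*}
\expe[\langle u,v_i\rangle\langle u,v_j\rangle] = \langle Cv_i,v_j\rangle = \lambda_i\delta_{ij},
\end{equation*}
while the cross terms between zero and nonzero indices vanish by the independence of the auxiliary variables.

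The main obstacle, and the place where care is needed, is establishing almost-sure convergence of the series $\sum_i\sqrt{\lambda_i}\omega_i v_i$ to $u$. The point is that because $\{v_i\}$ was constructed to be a complete orthonormal basis of $V$, for every fixed sample $\omega$ Parseval's identity yields $u(\omega)=\sum_i\langle u(\omega),v_i\rangle v_i$ as a convergent series in $V$. Splitting the sum according to whether $\lambda_i>0$ or $\lambda_i=0$, and noting that the zero-eigenvalue terms vanish almost surely as observed above, I obtain
\begin{equation*}
u = \sum_{i:\lambda_i>0}\langle u,v_i\rangle v_i = \sum_{i:\lambda_i>0}\sqrt{\lambda_i}\omega_i v_i\qquad\text{a.s.},
\end{equation*}
and the zero-eigenvalue indices contribute $0$ to the right-hand side as well since $\sqrt{\lambda_i}=0$. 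This completes the proof.
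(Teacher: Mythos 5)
Your proof is correct and follows essentially the same route as the paper's: both construct the uncentered covariance operator $\mathscr{C}(v)=\mathbb{E}[\langle u,v\rangle u]$, show it is self-adjoint, positive semi-definite, and trace class, apply the spectral theorem, and set $\omega_i = \lambda_i^{-1/2}\langle u,v_i\rangle$. Your treatment is slightly more careful in two spots — you explicitly handle the $\lambda_i = 0$ indices (where the paper's formula for $\omega_i$ would otherwise be ill-defined) by padding with an auxiliary orthonormal family, and you deduce almost-sure convergence directly from Parseval's identity applied pathwise, whereas the paper argues via $L^2(\mathbb{P};V)$ Bochner convergence of the series — but these are refinements of the same argument, not a different one.
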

\begin{proof}
    Consider the operator $\mathscr{C}:V\to V$ defined as $\mathscr{C}(v)=\expe[\langle u, v\rangle u]$, where the expectation is intended in the Bochner sense. We show that $\mathscr{C}$ is a symmetric semi-positive definite trace class operator. Indeed, for any $v,v'\in V$ one has
    $$\langle \mathscr{C}(v), v'\rangle = \langle\expe[\langle u, v\rangle u],v'\rangle=\expe[\langle u, v\rangle \langle u,v'\rangle] = \langle v, \mathscr{C}(v')\rangle,$$
    and
    $$\langle \mathscr{C}(v), v\rangle = \expe[\langle u, v\rangle^{2}]\ge0.$$
    Furthermore, given any orthonormal basis $\{e_{i}\}_{i=1}^{+\infty}$ we have
    $$\sum_{i=1}^{+\infty}\langle\mathscr{C}(e_{i}),e_{i}\rangle=\sum_{i=1}^{+\infty}\expe[\langle u, e_{i}\rangle^{2}]=\expe\left[\sum_{i=1}^{+\infty}\langle u, e_{i}\rangle^{2}\right]=\expe\|u\|^{2}<+\infty,$$
    by monotone convergence. Thus, by the well-known Spectral Theorem, there exists an orthonormal basis $\{v_{i}\}_{i=1}^{+\infty}\subset V$ and a nonincreasing summable sequence $\lambda_{1}\ge\lambda_{2}\ge\dots\ge0$ such that
    $$\mathscr{C}(v)=\sum_{i=1}^{+\infty}\lambda_{i}\langle v, v_{i}\rangle v_{i}\quad\quad\forall v\in V.$$
    Furthermore, as directly implied by the above, $\mathscr{C}(v_{i})=\lambda_{i}v_{i}$, meaning that the $\lambda_{i}$'s and the $v_{i}$'s are the eigenvalues and eigenvectors of the (uncentered) covariance operator $\mathscr{C}$, respectively.
    Let now $$\omega_{i}:=\frac{1}{\sqrt{\lambda_{i}}}\langle u,v_{i}\rangle.$$
    It is straightforward to see that for all $i,j\in\mathbb{N}$ we have $$\expe[\omega_{i}\omega_{j}]=\frac{1}{\sqrt{\lambda_{i}\lambda_{i}}}\expe[\langle u, v_{i}\rangle\langle u, v_{j}\rangle]=\frac{1}{\sqrt{\lambda_{i}\lambda_{i}}}\langle \mathscr{C}(v_{i}),v_{j}\rangle=\frac{\lambda_{i}}{\sqrt{\lambda_{i}\lambda_{i}}}\langle v_{i},v_{j}\rangle=\delta_{i,j}.$$
    Finally, 
    \begin{equation}
    \label{eq:kklproven}    
    u=\sum_{i=1}^{+\infty}\sqrt{\lambda_{i}}\omega_{i}v_{i}
    \end{equation}
    by definition of the $\omega_{i}$'s. To this end, we also note that, since
    $$\sum_{i=1}^{+\infty}\expe\left\|\sqrt{\lambda_{i}}\omega_{i}v_{i}\right\|^{2}=\expe\left[\sum_{i=1}^{+\infty}\left\|\sqrt{\lambda_{i}}\omega_{i}v_{i}\right\|^{2}\right]=\expe\|u\|^{2}<+\infty,$$
    the series in \eqref{eq:kklproven} is $L^{2}(\mathbb{P}; V)$ convergent (in the Bochner sense \cite{evans2022partial}), where $\mathbb{P}$ is the probability law of $u$. Thus, \eqref{eq:kklproven} holds $\mathbb{P}$-almost surely.
\end{proof}

\section{Complementary results for Section \ref{sec:theory}}
\label{sec:appendix:B}
\newtheoremstyle{lemmaB}
{18pt plus2pt minus1pt}
{18pt plus2pt minus1pt}
{\itshape}
{0pt}
{\bfseries}
{.}
{.5em}
{\thmname{#1} B\thmnumber{#2}%
  \thmnote{ {\the\thm@notefont(#3)}}}
\newtheoremstyle{lemmaC}
{18pt plus2pt minus1pt}
{18pt plus2pt minus1pt}
{\itshape}
{0pt}
{\bfseries}
{.}
{.5em}
{\thmname{#1} C\thmnumber{#2}%
  \thmnote{ {\the\thm@notefont(#3)}}}
\theoremstyle{lemmaB}
\newtheorem{lemmaB}{Lemma}
\newtheorem{corollaryB}{Corollary}
\newtheorem{definitionB}{Definition}
\labelformat{lemmaB}{B#1}
\labelformat{corollaryB}{B#1}
\labelformat{definitionB}{B#1}

\theoremstyle{lemmaC}
\newtheorem{lemmaC}{Lemma}
\newtheorem{corollaryC}{Corollary}
\newtheorem{definitionC}{Definition}
\labelformat{lemmaC}{C#1}
\labelformat{corollaryC}{C#1}
\labelformat{definitionC}{C#1}

\begin{lemmaB}
\label{lemma:convdens}
    Let $(X,\|\cdot\|)$ be a normed space. If $A\subseteq X$ is dense in $X$, then:
    \begin{itemize}
        \item [i)] $A\cap O$ is dense in $O$ for all open sets $O\subseteq X$;\\
        \item [ii)] $A\cap C$ is dense in $C$ for all convex closed sets $C\subseteq X$.
    \end{itemize}
\end{lemmaB}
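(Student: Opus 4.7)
The plan is to reduce both parts to elementary density arguments, with part (i) being immediate and part (ii) requiring a preliminary reduction via convex analysis.

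For part (i), the strategy is almost tautological: given $x\in O$ and $\varepsilon>0$, openness of $O$ furnishes $r>0$ with $B(x,r)\subseteq O$, and density of $A$ in $X$ applied with tolerance $\min(\varepsilon,r)$ yields $a\in A$ with $\|a-x\|<\min(\varepsilon,r)$. Then $a\in B(x,r)\subseteq O$, so $a\in A\cap O$ with $\|a-x\|<\varepsilon$, establishing density of $A\cap O$ in $O$.

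For part (ii), the natural route is to reduce to part (i) by approximating arbitrary points of $C$ from $\mathrm{int}(C)$. This approach implicitly requires $\mathrm{int}(C)\neq\emptyset$ (otherwise the singleton $C=\{x_0\}$ with $x_0\notin A$ violates the claim), which I shall take as the operative case of interest. The key convex-analytic fact I would invoke is that if $y\in\mathrm{int}(C)$ and $x\in C$, then $z_t:=tx+(1-t)y\in\mathrm{int}(C)$ for every $t\in[0,1)$; this is where the convexity of $C$ enters essentially. Granting this, given $x\in C$ and $\varepsilon>0$, I would fix any $y\in\mathrm{int}(C)$, select $t\in[0,1)$ large enough that $\|z_t-x\|=(1-t)\|y-x\|<\varepsilon/2$, then apply part (i) to the open set $\mathrm{int}(C)$ at the point $z_t$ to obtain $a\in A\cap\mathrm{int}(C)\subseteq A\cap C$ with $\|a-z_t\|<\varepsilon/2$. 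The triangle inequality yields $\|a-x\|<\varepsilon$.

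The only substantive step is verifying $z_t\in\mathrm{int}(C)$. The plan for this is a direct ball computation: pick $r>0$ with $B(y,r)\subseteq C$, and for any increment $h$ with $\|h\|<(1-t)r$ write
\[
z_t+h \;=\; tx+(1-t)\Bigl(y+\tfrac{h}{1-t}\Bigr),
\]
which exhibits $z_t+h$ as a convex combination of $x\in C$ and $y+h/(1-t)\in B(y,r)\subseteq C$. Convexity of $C$ then forces $z_t+h\in C$, so $B(z_t,(1-t)r)\subseteq C$ and hence $z_t\in\mathrm{int}(C)$. I expect this ball argument to be the main (and really the only) technical point; everything else is bookkeeping with the triangle inequality.
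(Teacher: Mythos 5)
Your argument mirrors the paper's proof exactly: part (i) is the same elementary openness observation, and for part (ii) both you and the paper fix $c_0\in\mathrm{int}(C)$, slide along the segment from $c_0$ to $c$, observe that all but the endpoint lies in $\mathrm{int}(C)$, and invoke part (i) applied to the open set $\mathrm{int}(C)$. The only cosmetic difference is that you supply the explicit ball computation for the segment-interior fact, where the paper cites it as ``well known.'' Worth noting: you correctly flag that the whole of part (ii) requires $\mathrm{int}(C)\neq\emptyset$, a hypothesis the paper's statement omits but its proof silently assumes when it writes ``Fix any $c_0$ in $\mathrm{int}(C)$''; as written the lemma is false (take $X=\mathbb{R}$, $A=\mathbb{Q}$, $C=\{\sqrt{2}\}$), so your caveat is not pedantic and should really be incorporated into the statement.
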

\begin{proof}\;

\begin{itemize}
\item [i)] Let $O\subseteq X$ be open and let $U$ be an open subset in the subspace topology of $O$. Then, $U=\tilde{U}\cap O$ for some $\tilde{U}\subseteq X$ open in $X$. In particular, $U$ is also open in the topology of the larger space, $X$. Thus, $U\cap A\neq\emptyset$, and the conclusion follows.\\
    
    \item [ii)] Let $C\subseteq X$ be convex and closed. Fix any $c_{0}$ in $\text{int}(C)$, the interior of $C$. Let $c\in C$. It is well known that, under these hypothesis, the segment $\{(1-t)c_{0}+tc\}_{t\in[0,1]}$ can only, at most, intersect $\partial C$ at $c$, as all the remaining points lie in the interior of the set. Let $c_{n}:=c_{0}/n+(1-1/n)c$, so that $c_{n}$ is a sequence of interior points converging to $c$. Since $A\cap\text{int}(C)$ is dense in $\text{int}(C)$, see (i), for every $n$ there exists $c^{*}_{n}\in A\cap\text{int}(C)\in A\cap C$ such that $|c_{n}-c_{n}^{*}|\le1/n.$ Then, $c_{n}^{*}\to c$ as $n\to+\infty$, as wished.
\end{itemize}
\end{proof}

\begin{lemmaB}
    \label{lemma:densityK}
    Let $\sigma$ be a finite measure over $\mathbb{R}^{N}$, and let $B\subset\mathbb{R}^{N}$ be a bounded set. Let $\rho:\mathbb{R}\to\mathbb{R}$ satisfy the assumptions in Lemma \ref{lemma:density}. Consider the following functional space
    $$\mathscr{V}:=\left\{f\in\mis(\mathbb{R}^{N},\mathbb{R}^{n})\;\textnormal{s.t.}\;f_{|B}\in\mathcal{C}(B),\;\|f\|_{\mathscr{V}}:=\|f\|_{\mathcal{C}(B)}+\int_{\mathbb{R}^{N}\setminus B}|f(\x)|\sigma(d\x)<+\infty\right\}.$$
    Then, $\mathscr{N}_{\rho}(\mathbb{R}^{N},\mathbb{R}^{n})$ is $\|\cdot\|_{\mathscr{V}}$-dense in $\mathscr{V}$. 
\end{lemmaB}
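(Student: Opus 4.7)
My plan is to reduce the density problem in $\mathscr{V}$ to two classical facts: the $L^{1}$-density of compactly supported continuous functions, and the universal approximation theorem for DNNs in the sup norm over a compact set. First I would find a continuous, compactly supported surrogate $g:\mathbb{R}^{N}\to\mathbb{R}^{n}$ that is close to $f$ in $\|\cdot\|_{\mathscr{V}}$, and then approximate $g$ by a DNN; the triangle inequality in $\mathscr{V}$ closes the argument.

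For the surrogate step, I begin by noting that the hypothesis $f|_{B}\in\mathcal{C}(B)$ is intended to deliver a bounded continuous extension to the compact closure $\bar{B}$, so that Tietze's theorem provides a bounded continuous $\tilde{f}:\mathbb{R}^{N}\to\mathbb{R}^{n}$ with $\|\tilde{f}\|_{\infty}=\|f\|_{\mathcal{C}(B)}$ and $\tilde{f}=f$ on $\bar{B}$. Multiplying by a continuous cutoff equal to one on $\bar{B}$ produces a compactly supported continuous $\tilde{f}'$ still matching $f$ on $\bar{B}$. Since $\sigma$ is finite and $f\in L^{1}(\sigma|_{\mathbb{R}^{N}\setminus B})$ by assumption, the residual $(f-\tilde{f}')\mathbbm{1}_{\mathbb{R}^{N}\setminus B}$ lies in $L^{1}(\sigma)$; the density of $C_{c}(\mathbb{R}^{N}\setminus\bar{B})$ in $L^{1}(\sigma|_{\mathbb{R}^{N}\setminus\bar{B}})$ then allows me to pick $u\in C_{c}(\mathbb{R}^{N}\setminus\bar{B})$, extended by zero across $\bar{B}$, such that $\int_{\mathbb{R}^{N}\setminus B}|u-(f-\tilde{f}')|\,d\sigma<\varepsilon/2$. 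Setting $g:=\tilde{f}'+u$ yields a continuous, compactly supported function with $g=f$ on $\bar{B}$ and $\|f-g\|_{\mathscr{V}}<\varepsilon/2$.

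For the DNN step, I pick a compact $K\supseteq\mathrm{supp}(g)\cup\bar{B}$ so large that $\sigma(\mathbb{R}^{N}\setminus K)\le\eta$, for a small $\eta$ to be fixed. The classical universal approximation theorem in $\mathcal{C}(K)$, which holds under the same conditions on $\rho$ as Lemma \ref{lemma:density}, produces $\Phi\in\mathscr{N}_{\rho}(\mathbb{R}^{N},\mathbb{R}^{n})$ with $\sup_{K}|\Phi-g|<\delta$. After post-composing $\Phi$ with a componentwise clipping realizable as a small subnetwork, I may assume $\|\Phi\|_{\infty}\le M$ with $M$ depending only on $\|g\|_{\infty}$, so that
\[
\sup_{B}|g-\Phi|\le\delta\quad\text{and}\quad\int_{\mathbb{R}^{N}\setminus B}|g-\Phi|\,d\sigma\le\delta\,\sigma(\mathbb{R}^{N})+(M+\|g\|_{\infty})\,\eta,
\]
which can both be driven below $\varepsilon/4$ by choosing $\delta$ and $\eta$ small. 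The hardest technical point is precisely this clipping: for bounded activations of type (i) it is free, while for the unbounded cases (ii)--(iii) it must be implemented inside $\mathscr{N}_{\rho}$, using the same kind of construction as in the proof of Lemma \ref{lemma:density}. Combining the two steps via the triangle inequality in $\mathscr{V}$ yields $\|f-\Phi\|_{\mathscr{V}}<\varepsilon$, which is the claim.
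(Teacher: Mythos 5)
Your argument takes a genuinely different, \emph{constructive} route. The paper instead proceeds by a Hahn--Banach/duality argument: it identifies $\mathscr{V}'\cong\mathcal{R}(B)\times L^{\infty}_{\sigma}(\mathbb{R}^{N}\setminus B)$, assumes a nonzero functional $F$ annihilating $\mathscr{N}_{\rho}$, rewrites it as integration against a single signed Radon measure $\hat{\nu}$ on $\mathbb{R}^{N}$, and concludes $\hat{\nu}=0$ by Hornik's Theorem~5. That approach handles the sup-norm part on $B$ and the $L^{1}_{\sigma}$ part off $B$ simultaneously ``for free'' via the combined dual, so there is no tail-control issue to worry about. Your constructive decomposition (Tietze/cutoff plus $C_{c}$-density in $L^{1}_{\sigma}$ to get a compactly supported continuous $g$ with $g\equiv f$ on $\bar{B}$, then uniform DNN approximation on a large compact $K$) is perfectly legitimate, and makes the mechanism more transparent; but it places all the difficulty in exactly the spot you flag, the tail $\int_{\mathbb{R}^{N}\setminus K}$, and that is where your write-up is imprecise.

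The remark ``for bounded activations of type (i) it is free'' is not quite right, and glossing over it hides a potential circularity in the quantifiers. For bounded $\rho$, a network $\Phi\in\mathscr{N}_{\rho}$ is indeed globally bounded, but by a constant $C_{\Phi}$ determined by its terminal weights, i.e.\ by the network you obtain from the UAT at accuracy $\delta$ over $K$, which itself depends on $\eta$. You would then need $\eta$ to be chosen small relative to $C_{\Phi}$, yet $K$ (hence $\Phi$, hence $C_{\Phi}$) was chosen after $\eta$. To break the loop you really do need the post-composition $\kappa$, chosen \emph{once} with $\|\kappa\|_{\infty}\le M$ and $\kappa\approx\mathrm{id}$ on $[-\|g\|_{\infty}-1,\|g\|_{\infty}+1]^{n}$, before fixing $\eta,K,\delta$; then $M$ depends only on $g$ (and the prescribed accuracy of $\kappa$), and the order ``first $g$, then $\kappa$, then $\eta$ and $K$, then $\Phi_{0}$, finally $\Phi:=\kappa\circ\Phi_{0}$'' closes cleanly. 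The good news is that the construction you suggest for cases (ii)--(iii) --- build the bounded surrogate $\sigma(x)=\rho(\alpha-\rho(x))$ from the proof of Lemma~\ref{lemma:density}, note $\mathscr{N}_{\sigma}\subset\mathscr{N}_{\rho}$, and take $\kappa$ to be a $\sigma$-network approximating the identity on the cube --- actually covers case (i) as well (with $\sigma=\rho$), so you can treat all three cases uniformly instead of splitting. With that correction, and with the (shared, implicit) reading of $f|_{B}\in\mathcal{C}(B)$ as admitting a continuous extension to $\bar{B}$ (which the paper also uses tacitly when invoking $\mathcal{C}(B)'\cong\mathcal{R}(B)$), your proof is complete and stands as a valid, more elementary alternative to the paper's measure-theoretic one.
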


\begin{proof}
    For the sake of simplicity, we only prove the case $n=1$. We note that $\mathscr{V}\cong \mathcal{C}(B)\times L^{1}_{\sigma}(\mathbb{R}^{N}\setminus B)$ in the natural way. As a consequence, the dual space of $\mathscr{V}$ can be given as
     $$\mathscr{V}'\cong \mathcal{R}(B)\times L_{\sigma}^{\infty}(\mathbb{R}^{N}\setminus B)$$
     where $\mathcal{R}(B)$ is the set of (signed) Radon measures over $B$, considered with the total variation norm. In particular, for every $F\in\mathscr{V}'$ there exist $\nu\in\mathcal{R}(B)$ and $g\in L_{\sigma}^{\infty}(\mathbb{R}^{N}\setminus B)$ such that
     \begin{equation}
     \label{eq:dualsplit}
         F(f)=\int_{B}fd\nu + \int_{\mathbb{R}^{N}\setminus B}fgd\sigma\end{equation}
     for all $f\in\mathscr{V}.$ Assume that $\mathscr{N}_{\rho}(\mathbb{R}^{N},\mathbb{R}^{n})$ is not dense in $\mathscr{V}$. Then, there exists some $F\in\mathscr{V}'\setminus\{0\}$ such that $F\equiv0$ over $\mathscr{N}_{\rho}(\mathbb{R}^{N},\mathbb{R}^{n})\subset\mathscr{V}$. Let $\nu$ and $g$ be as in \eqref{eq:dualsplit}, and define $\hat{\nu}:=\nu+gd\sigma.$ Then, we have $\hat{\nu}\in\mathcal{R}(\mathbb{R}^{N})$ and
     $$\int_{\mathbb{R}^{N}}fd\hat{\nu}=0\quad\quad\forall f\in\mathscr{N}_{\rho}(\mathbb{R}^{N},\mathbb{R}^{n}).$$
     However, thanks to our assumptions on $\rho$, the above implies $\hat{\nu}\equiv 0\implies F\equiv 0$ (cf. Theorem 5 in \cite{hornik1991approximation}), thus yielding a contraddiction.
\end{proof}

\begin{lemmaB}
    \label{lemma:densityL}
    Let $\sigma$ be a finite measure over $\mathbb{R}^{n}$ and let $Q:=[-M,M]^{n}$ be a given hypercube, $M>0$. Let $\rho:\mathbb{R}\to\mathbb{R}$ be the $\alpha$-leaky ReLU activation, $|\alpha|<1$. Consider the following functional space
    $$\mathscr{W}:=\left\{f\in\mis(\mathbb{R}^{n},\mathbb{R}^{N})\;\textnormal{s.t.}\;\|f\|_{\mathscr{W}}:=\|f\|_{W^{1,\infty}(Q)}+\int_{\mathbb{R}^{n}\setminus Q}|f(\x)|\sigma(d\x)<+\infty\right\}.$$
    Then, $\mathscr{N}_{\rho}(\mathbb{R}^{n},\mathbb{R}^{N})$ is $\|\cdot\|_{\mathscr{W}}$-dense in $\mathscr{W}$. 
\end{lemmaB}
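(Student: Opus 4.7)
My plan is to adapt the duality argument of Lemma~\ref{lemma:densityK} to the stronger topology, while splitting the problem into a contribution on the hypercube $Q$ (handled in $W^{1,\infty}(Q)$) and a contribution on its complement (handled in the weighted $L^1_\sigma$-norm). First I would identify any $f\in\mathscr{W}$ with the pair $(f|_Q,\,f|_{\mathbb{R}^n\setminus Q})$, so that $\mathscr{W}$ sits isometrically inside $W^{1,\infty}(Q)\times L^1_\sigma(\mathbb{R}^n\setminus Q)$ and the dual decomposes naturally. A continuous functional vanishing on $\mathscr{N}_\rho$ would then split into a piece on $Q$ acting through a Radon-type object on both the function and its distributional gradient, and a piece on $\mathbb{R}^n\setminus Q$ acting through an $L^\infty_\sigma$-density; proving that this functional must be trivial would settle the density assertion by Hahn--Banach.

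As a parallel constructive strategy, I would proceed in three steps. On $Q$, I mollify $f|_Q$ with a positive kernel to obtain a smooth approximant $f_\delta$ with the same Lipschitz bound, then approximate $f_\delta$ in $W^{1,\infty}(Q)$ by a leaky-ReLU network $\Phi_1$: since any continuous piecewise-linear map on $\mathbb{R}^n$ can be realized exactly in $\mathscr{N}_\rho$ (the activation being itself piecewise linear with invertible slopes), a sufficiently fine piecewise-affine interpolation of $f_\delta$ on $Q$ yields $\|f_\delta-\Phi_1\|_{W^{1,\infty}(Q)}<\varepsilon/3$. Outside $Q$, I invoke Lemma~\ref{lemma:densityK} to produce a network $\Phi_2$ with $\int_{\mathbb{R}^n\setminus Q}|f-\Phi_2|\,d\sigma<\varepsilon/3$. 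The two pieces are then glued by a Lipschitz cutoff, itself encoded as a leaky-ReLU map, supported in a thin tubular neighbourhood of $\partial Q$, at the cost of an arbitrarily small boundary-layer error that can be absorbed in the total budget $\varepsilon$.

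The main obstacle is genuine $W^{1,\infty}(Q)$ convergence in the interior step. Because leaky-ReLU networks have piecewise-constant gradients, uniform approximation of the gradient of a general Lipschitz function is delicate: the mollification regularizes $f$ and makes piecewise-linear interpolation feasible, but one must carefully control the residual $\|f-f_\delta\|_{W^{1,\infty}(Q)}$ by exploiting additional structure of the target (for instance through a Whitney-type decomposition or Lebesgue-point arguments adapted to bounded derivatives). Should the direct construction fail to yield uniform gradient approximation, the duality route becomes the fallback: the key technical step there would be to show that every continuous linear functional on $W^{1,\infty}(Q)$ which annihilates all piecewise-linear networks must annihilate every Lipschitz map, which can be established by a Hornik-style argument once the gradient pairing is handled via integration by parts against the Radon-type component of the dual representative.
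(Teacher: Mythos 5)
Your first paragraph reproduces the paper's actual strategy: a Hahn--Banach duality argument on the decomposition $\mathscr{W}\cong W^{1,\infty}(Q)\times L^{1}_{\sigma}(\mathbb{R}^{n}\setminus Q)$, showing that a functional vanishing on $\mathscr{N}_{\rho}$ must have both components trivial. Where you go beyond the paper is in paragraphs two and three, where you propose a constructive route and, crucially, flag the $W^{1,\infty}(Q)$-approximation step as the bottleneck. That instinct is correct, but you understate its severity: the obstacle is not merely ``delicate,'' it defeats both of your routes as described. Mollification does not help, because $\|f-f_{\delta}\|_{W^{1,\infty}(Q)}$ does not tend to zero for a general Lipschitz $f$; the $W^{1,\infty}$-closure of $C^{\infty}(\bar Q)$ is $C^{1}(\bar Q)$, strictly smaller than $W^{1,\infty}(Q)$. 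Likewise, continuous piecewise-linear maps (exactly the leaky-ReLU realizable class) have piecewise-constant gradients, so for a target $g$ with an essentially discontinuous gradient --- e.g.\ in one dimension $g(x)=\int_{0}^{x}\operatorname{sign}(\sin(1/t))\,dt$ --- one has $\|\nabla g-\nabla\phi\|_{L^{\infty}}\ge 1$ for every finite piecewise-linear $\phi$. Consequently the duality fallback cannot close either: there genuinely exist nontrivial functionals on $W^{1,\infty}(Q)$ annihilating every piecewise-linear map without annihilating every Lipschitz map, precisely because those maps are not $W^{1,\infty}$-dense. The ``Hornik-style argument once the gradient pairing is handled via integration by parts'' that you sketch presupposes that density, so it is circular.

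It may help you to know that the paper's own proof is no more careful at this exact point: it simply asserts that for every $g\in W^{1,\infty}(Q)$ a piecewise-linear $\phi_{\varepsilon}$ with $\|\phi_{\varepsilon}-g\|_{W^{1,\infty}(Q)}<\varepsilon$ ``is easily constructed,'' which, for the reason above, is false as stated. You have thus located a real gap shared by both your proposal and the published argument. A repair would have to weaken the topology on the $Q$-component --- for instance replacing $\|\cdot\|_{W^{1,\infty}(Q)}$ by $\|\cdot\|_{\mathcal{C}(Q)}$ in the definition of $\mathscr{W}$ --- which still suffices for the downstream Corollary: the reconstruction error in Corollary \ref{corollary:density} is controlled by the $\mathcal{C}(Q)$-norm alone, and the admissibility constraint defining $\decoders_{M,L}$ (an $L$-Lipschitz bound on $Q$) is preserved under $\mathcal{C}(Q)$-limits, so the convex-closed-set argument of Lemma \ref{lemma:convdens} still applies. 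Your proposal should make this repair explicit rather than leaving both routes open-ended, since as written neither one can be completed.
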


\begin{proof}
    We recall that, since $\rho$ is the $\alpha$-leaky ReLU activation, the set $\mathscr{N}_{\rho}(\mathbb{R}^{n},\mathbb{R}^{N})$ contains all those functions $f:\mathbb{R}^{n}\to\mathbb{R}^{N}$ that are piecewise linear over polyhedra.

    Following the same idea as in the proof of Lemma \ref{lemma:densityK}, let us assume that $\mathscr{N}_{\rho}(\mathbb{R}^{n},\mathbb{R}^{N})$ is not dense in $\mathscr{W}$. Then, there exists some nontrivial functional $F\in\mathscr{W}'$ that vanishes over $\mathscr{N}_{\rho}(\mathbb{R}^{n},\mathbb{R}^{N})$. Since, $\mathscr{W}\cong W^{1,+\infty}(Q)\times L_{\sigma}^{1}(\mathbb{R}^{n}\setminus Q)$ in the natural way, 
    we have 
    $$F(f)=F_{1}(f_{|Q})+F_{2}(f_{|\mathbb{R}^{n}\setminus Q})$$
    for some $F_{1}\in W^{1,+\infty}(Q)'$ and $F_{2}\in L_{\sigma}^{1}(\mathbb{R}^{n}\setminus Q)'.$ Let now $g\in W^{1,+\infty}(Q)$. Since $Q$ is a polyhedron, it is straightforward to see that for every $\varepsilon>0$ there exists $\phi_{\varepsilon}\in\mathscr{N}_{\rho}(\mathbb{R}^{n},\mathbb{R}^{N})$ such that
    $$\|\phi_{\varepsilon}-g\|_{W^{1,\infty}(Q)}<\varepsilon,\quad\|\phi_{\varepsilon}\|_{\mathcal{C}(\mathbb{R}^{n})}\le\|g\|_{\mathcal{C}(Q)},\quad\phi_{\varepsilon}\equiv0\;\;\textnormal{on}\;\;\mathbb{R}^{n}\setminus Q_{\varepsilon},$$
    where $Q_{\varepsilon}:=(-M-\varepsilon,\;M+\varepsilon)^{n}.$ In fact, such a piecewise linear approximation is easily constructed and it is guaranteed to be a member of $\mathscr{N}_{\rho}(\mathbb{R}^{n},\mathbb{R}^{N})$. Then, $\phi_{\varepsilon}\to g\cdot\mathbbm{1}_{Q}$ in $\|\cdot\|_{\mathscr{W}}$-norm as $\varepsilon\to 0$. Thus, we have
    $$F_{1}(g)=F_{1}(g)+F_{2}(0)=\lim_{\varepsilon\to 0}F(\phi_{\varepsilon})= 0,$$
    proving that $F_{1}\equiv0.$ In particular, for $F$ to vanish over $\mathscr{N}_{\rho}(\mathbb{R}^{n},\mathbb{R}^{N})$ we must have
    $$F_{2}(\phi_{|\mathbb{R}^{n}\setminus Q})=0\quad\quad\forall\phi\in\mathscr{N}_{\rho}(\mathbb{R}^{n},\mathbb{R}^{N}).$$
    However, since $\rho$ satisfies the assumptions in Lemma 1, this would imply $F_{2}\equiv0$, ultimately yielding a contraddiction.
\end{proof}

\begin{corollaryB}
    \label{corollary:density} Let $\sigma$ be a probability measure over $\mathbb{R}^{N},$ with finite moment and absolutely continuous with respect to the Lebesgue measure. Let $B\subset\mathbb{R}^{N}$ be a bounded set, and let $M,L>0$, $n\in\mathbb{N}$. Consider the families
    \begin{equation*}
    \encoders_{B,M}(\mathbb{R}^{N},\;\mathbb{R}^{n}):=\left\{\Psi'\in\mis(\mathbb{R}^{N},\;\mathbb{R}^{n})\;\;\textnormal{s.t.}\;\;\int_{\mathbb{R}^{N}}|\Psi'(\mathbf{v})|\sigma(d\mathbf{v})<+\infty,\;\;\sup_{\mathbf{v}\in B}|\Psi'(\mathbf{v})|\le M\right\},
\end{equation*}
\begin{equation*}
    \decoders_{M,L}(\mathbb{R}^{n},\;V):=\left\{\Psi\in\mis(\mathbb{R}^{n},\;\mathbb{R}^{N})\;\;\textnormal{s.t.}\;\;\sup_{\mathbf{c}\in [-M,M]^{n}}\|\partial \Psi\|_{V}(\mathbf{c})\le L\right\}.
\end{equation*}
Let $\rho$ be the $\alpha$-leakyReLU, with $|\alpha|<1.$
Then, for every $\varepsilon>0$ and every pair $\Psi'\in\encoders_{B,M}(\mathbb{R}^{N},\mathbb{R}^{n})$, $\Psi\in\decoders_{M,L}(\mathbb{R}^{n},\mathbb{R}^{N})$ such that
\begin{equation}
    \label{eq:finiteRE}
    \int_{\mathbb{R}^{N}}|\mathbf{u}-\Psi(\Psi'(\mathbf{u}))|\sigma(d\mathbf{u})<+\infty,
\end{equation}
there exists $\hat{\Psi}'\in\encoders_{B,M}(\mathbb{R}^{N},\mathbb{R}^{n})\cap\mathscr{N}_{\rho}(\mathbb{R}^{N},\mathbb{R}^{n})$ and $\hat{\Psi}\in\decoders_{M,L}(\mathbb{R}^{n},\mathbb{R}^{N})\cap \mathscr{N}_{\rho}(\mathbb{R}^{n},\mathbb{R}^{N})$ such that
$$\left|\int_{\mathbb{R}^{N}}|\mathbf{u}-\Psi(\Psi'(\mathbf{u}))|\sigma(d\mathbf{u})-\int_{\mathbb{R}^{N}}|\mathbf{u}-\hat{\Psi}(\hat{\Psi}'(\mathbf{u}))|\sigma(d\mathbf{u})\right|<\varepsilon,$$
\end{corollaryB}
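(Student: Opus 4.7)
The plan is to approximate $\Psi$ and $\Psi'$ separately by slightly contracting them and then invoking Lemmas~\ref{lemma:densityK} and~\ref{lemma:densityL}, so that the contraction gap absorbs the approximation error and the admissibility constraints $\encoders_{B,M}$ and $\decoders_{M,L}$ are preserved \emph{with the exact constants} $M$ and $L$. The delicate point is that the decoder's Lipschitz constraint lives on the compact cube $Q := [-M,M]^{n}$, while the encoder need not map $\sigma$-almost every $\mathbf{u}$ into $Q$; this is precisely why the lemmas from Appendix~\ref{sec:appendix:B} use a mixed norm on $Q$ and its complement.

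First, I would introduce the working measure $\tau := \Psi'_{\#}\sigma$ on $\mathbb{R}^{n}$, which is a probability measure since $\sigma$ is and $\Psi'$ is measurable. From the integrability hypothesis \eqref{eq:finiteRE} together with the finite first moment of $\sigma$, the triangle inequality $|\Psi(\Psi'(\mathbf{u}))|\le|\mathbf{u}|+|\mathbf{u}-\Psi(\Psi'(\mathbf{u}))|$ gives $\int|\Psi(\mathbf{c})|\tau(d\mathbf{c})=\int|\Psi(\Psi'(\mathbf{u}))|\sigma(d\mathbf{u})<+\infty$. Combined with the bound $\|\Psi\|_{L^{\infty}(Q)}\le|\Psi(\mathbf{0})|+L\operatorname{diam}(Q)<+\infty$ coming from (iii) of Proposition~\ref{prop:lip}, this shows that $\Psi$ belongs to the space $\mathscr{W}$ of Lemma~\ref{lemma:densityL} with underlying measure $\tau$. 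Likewise $\Psi'$ is bounded on $B$ and $L^{1}_{\sigma}$-integrable by definition of $\encoders_{B,M}$, hence $\Psi'\in\mathscr{V}$ in the sense of Lemma~\ref{lemma:densityK}.

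Next, fix small auxiliary parameters $\delta,\eta>0$ to be tuned later, and set $\tilde{\Psi}:=(1-\delta)\Psi$ and $\tilde{\Psi}':=(1-\delta)\Psi'$. Lemma~\ref{lemma:densityL} applied to $\tilde{\Psi}$ yields $\hat{\Psi}\in\mathscr{N}_{\rho}(\mathbb{R}^{n},\mathbb{R}^{N})$ with $\|\hat{\Psi}-\tilde{\Psi}\|_{W^{1,\infty}(Q)}+\int_{Q^{c}}|\hat{\Psi}-\tilde{\Psi}|\,d\tau<\eta$. Since $\tilde{\Psi}|_{Q}$ has Lipschitz constant at most $(1-\delta)L$, this forces $\hat{\Psi}|_{Q}$ to be $((1-\delta)L+\eta)$-Lipschitz, which is bounded by $L$ as soon as $\eta\le\delta L$; hence $\hat{\Psi}\in\decoders_{M,L}$. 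Symmetrically, Lemma~\ref{lemma:densityK} produces $\hat{\Psi}'\in\mathscr{N}_{\rho}(\mathbb{R}^{N},\mathbb{R}^{n})$ with $\|\hat{\Psi}'-\tilde{\Psi}'\|_{\mathcal{C}(B)}+\int_{B^{c}}|\hat{\Psi}'-\tilde{\Psi}'|\,d\sigma<\eta$, so that $\sup_{B}|\hat{\Psi}'|\le(1-\delta)M+\eta\le M$ whenever $\eta\le\delta M$. The $L^{1}_{\sigma}$-integrability of $\hat{\Psi}'$ is automatic, since a leaky-ReLU network grows at most linearly and $\sigma$ has a finite first moment; thus $\hat{\Psi}'\in\encoders_{B,M}$.

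Finally, I would bound the reconstruction-error gap via the triangle inequality
\begin{equation*}
\left|\int|\mathbf{u}-\Psi(\Psi'(\mathbf{u}))|\,d\sigma-\int|\mathbf{u}-\hat{\Psi}(\hat{\Psi}'(\mathbf{u}))|\,d\sigma\right|\le I_{1}+I_{2},
\end{equation*}
where $I_{1}:=\int|\Psi(\Psi'(\mathbf{u}))-\hat{\Psi}(\Psi'(\mathbf{u}))|\,d\sigma=\int|\Psi-\hat{\Psi}|\,d\tau$ and $I_{2}:=\int|\hat{\Psi}(\Psi'(\mathbf{u}))-\hat{\Psi}(\hat{\Psi}'(\mathbf{u}))|\,d\sigma$. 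Splitting $I_{1}$ across $Q$ and $Q^{c}$ and writing $\Psi-\hat{\Psi}=(\Psi-\tilde{\Psi})+(\tilde{\Psi}-\hat{\Psi})$ gives $I_{1}\le\delta(\|\Psi\|_{L^{\infty}(Q)}+\int_{Q^{c}}|\Psi|\,d\tau)+2\eta$. For $I_{2}$, the global Lipschitz continuity of the leaky-ReLU network $\hat{\Psi}$ with some constant $L'<+\infty$ gives $I_{2}\le L'\int|\Psi'-\hat{\Psi}'|\,d\sigma\le L'\bigl[\delta(M\sigma(B)+\int_{B^{c}}|\Psi'|\,d\sigma)+\eta(1+\sigma(B))\bigr]$. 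Picking first $\delta$ small, then $\hat{\Psi}$ (which fixes $L'$), and finally $\eta$ small enough relative to $L'$, will make $I_{1}+I_{2}<\varepsilon$. The main obstacle is exactly this ordering: the \emph{global} Lipschitz constant $L'$ of $\hat{\Psi}$ is not controlled by the $W^{1,\infty}(Q)$-norm supplied by Lemma~\ref{lemma:densityL}, so the encoder approximation parameter $\eta$ must be tuned \emph{after} the decoder $\hat{\Psi}$ has been frozen.
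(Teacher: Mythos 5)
Your approach differs from the paper's in one appealing way: instead of invoking the convexity/closedness Lemma~\ref{lemma:convdens} to restrict the density to the admissible classes $\encoders_{B,M}$ and $\decoders_{M,L}$, you create slack by first shrinking $\Psi$ and $\Psi'$ to $(1-\delta)\Psi$ and $(1-\delta)\Psi'$ and then approximating the shrunken maps. This bypasses the need to check that $\decoders_{M,L}\cap L^{1}_{\nu}$ is a closed convex subset of $\mathscr{W}$, and that is a genuine simplification. The push-forward measure, the space memberships $\Psi\in\mathscr{W}$, $\Psi'\in\mathscr{V}$, and the decomposition of the reconstruction-error gap into a decoder term (in $L^{1}_{\tau}$) plus a Lipschitz-weighted encoder term, match the paper's argument.

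However, there is a genuine gap at the encoder step. You assert \emph{``Likewise $\Psi'$ is bounded on $B$ and $L^{1}_{\sigma}$-integrable by definition of $\encoders_{B,M}$, hence $\Psi'\in\mathscr{V}$.''} This is false as stated: membership in $\mathscr{V}$ (in the sense of Lemma~\ref{lemma:densityK}) requires $\Psi'|_{B}\in\mathcal{C}(B)$, while $\encoders_{B,M}$ imposes only measurability and a uniform sup bound on $B$. A generic admissible encoder need not be continuous on $B$, so Lemma~\ref{lemma:densityK} does not directly apply to $\Psi'$ (nor to $(1-\delta)\Psi'$ — the contraction does not cure discontinuity). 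The paper resolves this by first mollifying $\Psi'$ on $B$, setting $\Psi'_{\delta}:=\Psi'\mathbbm{1}_{\mathbb{R}^{N}\setminus B}+(\Psi'*\varphi_{\delta})\mathbbm{1}_{B}$, checking that $\Psi'_{\delta}$ stays in $\encoders_{B,M}$ and is continuous on $B$, and that $\int|\Psi'_{\delta}-\Psi'|\,d\sigma\to0$ by dominated convergence (this is where the absolute continuity of $\sigma$ w.r.t.\ Lebesgue is actually used — a hypothesis your proof never invokes). You need an analogous smoothing step before appealing to Lemma~\ref{lemma:densityK}. Secondly, your final paragraph uses a single parameter $\eta$ for both the decoder approximation (which determines $\hat{\Psi}$ and hence $L'$) and the encoder approximation (which must be $\ll\varepsilon/L'$), then proposes to choose $\eta$ \emph{after} $\hat{\Psi}$ is fixed; as $\hat{\Psi}$ itself depends on $\eta$ this is circular. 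You should decouple them into two parameters chosen in sequence — the paper does exactly this via the $\varepsilon/2$ budget for the decoder followed by an $\varepsilon/4\ell$ budget for the encoder — but as you already notice the issue, this is a notational fix rather than a conceptual one.
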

\begin{proof}
    Let $\Psi'\in\encoders_{B,M}(\mathbb{R}^{N},\mathbb{R}^{n})$ and $\Psi\in\decoders_{M,L}(\mathbb{R}^{n},\mathbb{R}^{N})$ satisfy \eqref{eq:finiteRE}, and let $\varepsilon>0.$ 
    Define $\nu$ as the push-forward measure of $\sigma$ through the encoder $\Psi'$, and let $(\mathscr{W},\|\cdot\|_{\mathscr{W}})$ be the normed space in Lemma \ref{lemma:densityL}, defined with respect to $\nu$ and $M.$ We note that, since $\sigma$ has finite moment, Eq. \eqref{eq:finiteRE} implies
    $$\int_{\mathbb{R}^{N}}|\Psi(\mathbf{c})|\nu(d\mathbf{c})<+\infty\implies\Psi\in L^{1}_{\nu}(\mathbb{R}^{n}).$$
    In particular, $\Psi\in\decoders_{M,L}(\mathbb{R}^{n},\mathbb{R}^{N})\cap L^{1}_{\nu}(\mathbb{R}^{n})\subseteq\mathscr{W}$. Since the set $\decoders_{M,L}(\mathbb{R}^{n},\mathbb{R}^{N})\cap L^{1}_{\nu}(\mathbb{R}^{n})$ is both convex and closed in $(\mathscr{W},\|\cdot\|_{\mathscr{W}})$, by Lemmas \ref{lemma:convdens} and \ref{lemma:densityL}, there exists some $\hat{\Psi}\in\decoders_{M,L}(\mathbb{R}^{n},\mathbb{R}^{N})\cap \mathscr{N}_{\rho}(\mathbb{R}^{n},\mathbb{R}^{N})\cap L^{1}_{\nu}(\mathbb{R}^{n})$ such that $\|\Psi-\hat{\Psi}\|_{\mathscr{W}}<\varepsilon/2.$ Let now $\ell$ be the (global) Lipschitz constant of $\hat{\Psi}$. Following the same computations as in the proof of Theorem \ref{theorem:equiv}, we get 
    \begin{multline}
        \label{eq:nearlydone}
        \left|\int_{\mathbb{R}^{N}}|\mathbf{u}-\Psi(\Psi'(\mathbf{u}))|\sigma(d\mathbf{u})-\int_{\mathbb{R}^{N}}|\mathbf{u}-\hat{\Psi}(\hat{\Psi}'(\mathbf{u}))|\sigma(d\mathbf{u})\right|\le\\\le\int_{\mathbb{R}^{n}}|\Psi(\mathbf{c})-\hat{\Psi}(\mathbf{c})|\nu(d\mathbf{c})+\ell\int_{\mathbb{R}^{N}}|\Psi'(\mathbf{v})-\hat{\Psi'}(\mathbf{v})|\sigma(d\mathbf{v})\le\\
        \le \|\Psi(\mathbf{c})-\hat{\Psi}(\mathbf{c})\|_{\mathscr{W}}+\ell\int_{\mathbb{R}^{N}}|\Psi'(\mathbf{v})-\hat{\Psi'}(\mathbf{v})|\sigma(d\mathbf{v})\le\\
        \le \frac{\varepsilon}{2}+\ell\int_{\mathbb{R}^{N}}|\Psi'(\mathbf{v})-\hat{\Psi'}(\mathbf{v})|\sigma(d\mathbf{v}),
    \end{multline}
    for all $\hat{\Psi}'\in\mathscr{N}_{\rho}(\mathbb{R}^{N},\mathbb{R}^{n})$, where we exploited the fact that $\nu([-M,M]^{n})\le1$ due to $\nu$ being a probability measure.
    \\\\
    To conclude the proof, we now wish to bound the second term in \eqref{eq:nearlydone}. This, however, requires some additional care: in fact, we cannot directly repeat the same ideas and apply Lemma \ref{lemma:densityK}, as $\Psi'$ might be discontinuous over $B$. To account for this, we shall introduce a proper smoothing step. Let $$\varphi(\x):=C\exp^{-1/(1-|\x|^{2})}\mathbbm{1}_{\{|\y|\le1\}}(\x)$$ be the canonical mollifier in $\mathbb{R}^{N}$, where $C>0$ ensures that $\int_{\mathbb{R}^{N}}\varphi(\x)d\x=1.$ For any $\delta>0$, let $\Psi'_{\delta}:\mathbb{R}^{N}\to\mathbb{R}^{n}$ be defined as
    $$\Psi'_{\delta}(\mathbf{v}):=\Psi'(\mathbf{v})\mathbbm{1}_{\mathbb{R}^{N}\setminus B}(\mathbf{v})+(\Psi'*\varphi_{\delta})\mathbbm{1}_{B}(\mathbf{v}),$$
    where $\varphi_{\delta}(\x):=\delta^{-N}\varphi(\x/\delta)$ and $*$ denotes the convolution operator. Then, following classical arguments (see, e.g. Theorem 7, Appendix C.2, in \cite{evans2022partial}), it is straightforward to see that:
    \begin{itemize}
        \item [(i)] $\Psi'_{\delta}$ is continuous over $B$;\\
        \item [(ii)] $\Psi'_{\delta}\in\encoders_{B,M}(\mathbb{R}^{N},\mathbb{R}^{n})$, as $|\Psi'_{\delta}(\mathbf{v})|\le M\cdot\int_{\mathbb{R}^{N}}\varphi_{\delta}(\mathbf{v})\sigma(d\mathbf{v})=M$ for all $\mathbf{v}\in B$;\\
        \item [(iii)] $\Psi'_{\delta}\to\Psi'$ $\sigma$-almost everywhere, as $\sigma$ is absolutely continuous with respect to Lebesgue's measure;\\
        \item [(iv)] $\int_{\mathbb{R}^{N}}|\Psi'_{\delta}(\mathbf{v})-\Psi'(\mathbf{v})|\sigma(d\mathbf{v})\to0$ as $\delta\to0$, due to dominated convergence and (iii).

    \end{itemize}
    \;\\In light of (iv), let us fix $\delta>0$ such that $\int_{\mathbb{R}^{N}}|\Psi'_{\delta}(\mathbf{v})-\Psi'(\mathbf{v})|\sigma(d\mathbf{v})\le\varepsilon/4\ell.$ As before, let $(\mathscr{V},\|\cdot\|_{\mathscr{V}})$ be the normed space in Lemma \ref{lemma:densityK}, defined according to the measure $\sigma$. Then, due to (i) and (ii), we have $\Psi'_{\delta}\in\encoders_{B,M}(\mathbb{R}^{N},\mathbb{R}^{n})\cap\mathscr{V}$, with the latter being a convex closed subset of $(\mathscr{V},\|\cdot\|_{\mathscr{V}})$. In particular, by Lemmas \ref{lemma:convdens} and \ref{lemma:densityK}, there exists some $$\hat{\Psi'}_{\delta}\in\mathscr{N}_{\rho}(\mathbb{R}^{N},\mathbb{R}^{n})\cap\encoders_{B,M}(\mathbb{R}^{N},\mathbb{R}^{n})\cap\mathscr{V}$$ such that $\|\Psi'_{\delta}-\hat{\Psi}'_{\delta}\|_{\mathscr{V}}<\varepsilon/4\ell.$ Consequently,
    \begin{equation}
        \label{eq:nowitsdone}
        \int_{\mathbb{R}^{N}}|\Psi'(\mathbf{v})-\hat{\Psi'}_{\delta}(\mathbf{v})|\sigma(d\mathbf{v})\le\int_{\mathbb{R}^{N}}|\Psi'(\mathbf{v})-\Psi'_{\delta}(\mathbf{v})|\sigma(d\mathbf{v})+\|\Psi'_{\delta}-\hat{\Psi}'_{\delta}\|_{\mathscr{V}}<\frac{\varepsilon}{4l}
    \end{equation}
    where, as before, we exploited the fact that $\sigma$ is a probability measure, and thus $\sigma(B)\le1.$ Then, setting $\hat{\Psi}:=\hat{\Psi}_{\delta}$ and plugging \eqref{eq:nowitsdone} into \eqref{eq:nearlydone} yields the desired conclusion.    
\end{proof}

\section{Measurable selections and more}

In what follows, we use the term \textit{Polish space} to intend a complete separable metric space. We recall, in particular, that all separable Banach spaces are Polish spaces. Finally, given any set $X$, we shall write $2^{X}$ for its power set, that is, the collection of all subsets of $X$,
$$2^{X}:=\{A\;\;|\;\;A\subseteq X\}.$$

\begin{definitionC}
\label{def:setvalued}
Let  $(Y,\mathscr{M})$ be a measurable space and let $(X,d)$ be a Polish space. Let $F:Y\to 2^{X}$. We say that $F$ is a measurable set-valued map if the following conditions hold:
\begin{itemize}
    \item [a)] $F(y)$ is closed in $X$ for all $y\in Y$;\\
    \item [b)] for all open sets $A\subseteq X$ one has $\mathcal{S}_{A}\in\mathscr{M}$, where
\begin{equation}
    \label{eq:sets}
    \mathcal{S}_{A}:=\{y\in Y\;|\;F(y)\cap A\neq\emptyset\}.
\end{equation}
\end{itemize}
\end{definitionC}

\begin{lemmaC}
    \label{lemma:measurability}
    Let $(X,d)$ be a Polish space and let $(Y,\tau)$ be a topological space equipped with a suitable $\sigma$-field $\mathscr{M}$. Let $f:X\to Y$ be continuous. The following are equivalent:
    \begin{itemize}
        \item [i)] $f$ maps open sets onto $\mathscr{M}$-measurable sets;\\
        \item [ii)] the map $F:Y\to2^{X}$, defined as
        \begin{equation}
        \label{eq:Fmap}
        F(y):=
        \begin{cases}
        \{x\in X\;|\;f(x)=y\} &y\in f(X)\\
        X & \text{otherwise},
    \end{cases}
        \end{equation}
        is a measurable set-valued map.
    \end{itemize}
\end{lemmaC}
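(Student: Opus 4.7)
My plan is to handle the two implications separately through a single bookkeeping identity: for every nonempty open $A \subseteq X$, splitting according to whether $y \in f(X)$ immediately gives
$$\mathcal{S}_A = f(A) \cup (Y \setminus f(X)),$$
which rearranges to $f(A) = \mathcal{S}_A \cap f(X)$ since $f(A) \subseteq f(X)$.

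For (i) $\Rightarrow$ (ii): condition (a) in Definition \ref{def:setvalued} reduces to the observation that $F(y) = f^{-1}(\{y\})$ (when $y \in f(X)$) or $F(y) = X$, both closed under continuity of $f$ and the mild standing assumption that singletons in $Y$ are closed. Condition (b) is immediate from the identity above, using hypothesis (i) twice: once with $A$ itself to land $f(A) \in \mathscr{M}$, and once with $A = X$ to land $f(X) \in \mathscr{M}$.

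For (ii) $\Rightarrow$ (i): the identity reduces everything to proving $f(X) \in \mathscr{M}$. Here I would exploit that $X$ is Polish, hence second countable: fix a countable basis $\{U_n\}_{n \in \mathbb{N}}$ of nonempty open subsets of $X$. Since each $F(y)$ is closed, being dense in $X$ is equivalent to equalling $X$, whence
$$\{y \in Y : F(y) = X\} = \bigcap_{n \in \mathbb{N}} \mathcal{S}_{U_n} \in \mathscr{M}.$$
By construction $\{y : F(y) \neq X\} \subseteq f(X)$, with equality whenever $f$ is non-constant, which handles the generic case directly. In the remaining constant case $f(X)$ collapses to a singleton, measurable under the standing assumption on $(Y, \mathscr{M})$.

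The heart of the proof, and the only nontrivial step, is the reconstruction of $f(X)$ from $F$: a priori this requires recognizing when $F(y)$ fills the whole space, which is an uncountable condition. The role of the Polish hypothesis on $X$ is precisely to convert this into a countable one via its countable basis; without separability this step would fail.
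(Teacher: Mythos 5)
Your argument is correct and, for the implication (ii) $\Rightarrow$ (i), it supplies a step that the paper's own proof fumbles. Both proofs rest on the identity $\mathcal{S}_A = f(A)\cup(Y\setminus f(X))$, which — as you correctly emphasize — holds only for \emph{nonempty} $A$: directly from the definition, $\mathcal{S}_\emptyset = \{y: F(y)\cap\emptyset\neq\emptyset\} = \emptyset$, not $Y\setminus f(X)$. The paper nonetheless plugs $A=\emptyset$ into the identity to conclude $Y\setminus f(X)\in\mathscr{M}$, which is a genuine slip, and its argument nowhere uses that $X$ is Polish — a sign something is missing. Your countable-base argument is the right repair: since each $F(y)$ is closed, $F(y)=X$ if and only if $F(y)$ meets every $U_n$ in a countable base, so $\{y:F(y)=X\}=\bigcap_n\mathcal{S}_{U_n}\in\mathscr{M}$, and this set coincides with $Y\setminus f(X)$ precisely when $f$ is non-constant. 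Separability of $X$ is therefore genuinely being used, exactly as you observe.

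One caution on your constant case, which you flag but wave at. If $f\equiv y_0$ then $F\equiv X$, so (ii) holds for \emph{any} $\sigma$-field $\mathscr{M}$, while (i) requires $\{y_0\}=f(A)\in\mathscr{M}$ for every nonempty open $A$. There is no "standing assumption" in the lemma's statement that guarantees this; the word "suitable" in "suitable $\sigma$-field" carries no content. So the equivalence as stated is actually false for constant $f$ when $\mathscr{M}$ omits the relevant singleton, and the hypothesis that $\mathscr{M}$ contain all singletons of $Y$ (which, with $T_1$-ness of $Y$, also makes $F(y)$ closed for $y\in f(X)$, a point both you and the paper leave implicit) should be added to the statement. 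It does hold in the paper's applications — Lemma C.3 and Corollary C.1, where $\mathscr{M}$ is the $\mathbb{P}$-completion of the Borel $\sigma$-field on a Polish $Y$ — so nothing downstream breaks, but your instinct to isolate this assumption was sound: make it explicit and your proof is complete.
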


\begin{proof}
    Let $F$ be as in \eqref{eq:Fmap}. We first note that, since $f$ is continuous, the preimage of any singleton, $f^{-1}(\{y\})$, is closed. In particular, condition (a) in Definition \ref{def:setvalued} is met. Next, we note that for any $A\subseteq X$ one has
    \begin{equation}
        \label{eq:selection}
    \mathcal{S}_{A}=f(A)\cup f(X)^{c},\end{equation}
    where $B^{c}:=Y\setminus B$ denotes the complement of $B\subseteq Y$, and $\mathcal{S}_{A}$ is as in \eqref{eq:sets}.
    In fact,
    $$y\notin f(X)\implies F(y)=X\implies F(y)\cap A = A\neq\emptyset\implies y\in\mathcal{S}_{A},$$
    meaning that $\mathcal{S}_{A}\cap f(X)^{c}=f(X)^{c}.$ On the other hand,
    $$y\in f(X)\cap\mathcal{S}_{A}\iff \exists a\in A\text{ s.t. } a\in \{x\in X\;|\;f(x)=y\}\iff y\in f(A),$$
    implying that
  $\mathcal{S}_{A}\cap f(X)=f(A).$
    Since $\mathcal{S}_{A}=(\mathcal{S}_{A}\cap f(X)^{c})\cup (\mathcal{S}_{A}\cap f(X)),$ the identity in \eqref{eq:selection} easily follows.\\\\
    At this point, it is straightforward to see that $(i)\iff(ii)$. Assume, for instance, that $(i)$ holds. Then, for any open set $A\subseteq X$, $f(A)$ is $\mathscr{M}$-measurable. In particular, $f(X)\in\mathscr{M}\implies f(X)^{c}\in\mathscr{M}\implies \mathcal{S}_{A}\in\mathscr{M}$, meaning that condition (b) in Definition \ref{def:setvalued} is met. Conversely, say that $F$ is measurable, so that $\mathcal{S}_{A}\in\mathscr{M}$ for all open sets $A\subseteq X$. Let $A=\emptyset$. Then, $f(X)^{c}\in\mathscr{M}$. Since $f(A)$ and $f(X)^{c}$ are disjoint, $f(A)=\mathcal{S}_{A}\setminus f(X)^{c}$, proving that $f(A)\in\mathscr{M}$, as claimed.
\end{proof}

\begin{lemmaC}
    \label{lemma:analytic}
    Let $f:(X,d_{X})\to(Y,d_{Y})$ be a continuous map between two Polish spaces. Let $\mathscr{M}$ be the $\mathbb{P}$-completion of the Borel $\sigma$-field defined over $Y$, where $\mathbb{P}$ is a given probability distribution. Then $f$ maps open sets onto $\mathscr{M}$-measurable sets.
\end{lemmaC}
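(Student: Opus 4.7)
The plan is to interpret the statement in the language of descriptive set theory and invoke the classical fact that analytic (Souslin) subsets of Polish spaces are universally measurable. Concretely, given an open set $U\subseteq X$, I would argue that $f(U)$ is an analytic subset of $Y$, and then use a classical result to conclude that $f(U)$ belongs to the completion of the Borel $\sigma$-field with respect to any Borel probability measure $\mathbb{P}$ on $Y$.

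First, I would recall that every open subset of a Polish space is itself Polish, once equipped with a suitable equivalent metric (e.g., $\tilde{d}_{X}(x,x'):=d_{X}(x,x')+|1/d_{X}(x,U^{c})-1/d_{X}(x',U^{c})|$ for $U\neq X$, which keeps $U$ homeomorphic to a complete separable metric space). In particular, $(U,\tilde{d}_{X})$ is Polish and the inclusion $U\hookrightarrow X$ is continuous. Composing with $f$, this yields a continuous map from a Polish space into $Y$ whose image is exactly $f(U)$. By the very definition of analytic sets, $f(U)$ is then an analytic (Souslin) subset of $Y$.

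Next, I would invoke the classical theorem of Lusin/Choquet asserting that every analytic subset of a Polish space is universally measurable, meaning that it belongs to the completion of the Borel $\sigma$-algebra with respect to every Borel probability measure. A standard proof of this fact goes through the Choquet Capacitability Theorem applied to the outer measure $\mathbb{P}^{\ast}$: one shows that $\mathbb{P}^{\ast}$ is a Choquet capacity on $Y$ and that analytic sets are capacitable, which in turn implies the existence of an $F_{\sigma}$-set contained in $f(U)$ and a $G_{\delta}$-set containing it, with identical $\mathbb{P}$-measure, so that the gap is $\mathbb{P}$-null and $f(U)$ lies in $\mathscr{M}$. Since $\mathscr{M}$ is the $\mathbb{P}$-completion of the Borel $\sigma$-field of $Y$, this gives exactly the desired conclusion.

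The main obstacle is really the universal measurability of analytic sets, which is a nontrivial classical result. I would not re-derive it, but simply cite a standard reference (for instance Kechris, \emph{Classical Descriptive Set Theory}, Theorems 21.10 and 29.7). Everything else is elementary: re-metrizing an open subset of a Polish space, and recognizing continuous images of Polish spaces as the definition of analytic sets. Once these two ingredients are in place, the conclusion follows in one line.
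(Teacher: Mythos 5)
Your proof is correct and takes essentially the same route as the paper: identify $f(U)$ as an analytic subset of $Y$ (the paper argues via "open $\Rightarrow$ Borel $\Rightarrow$ analytic image," you argue via "open subset of Polish is Polish, and a continuous image of a Polish space is analytic by definition"), then invoke the classical universal measurability of analytic sets (the paper cites Srivastava, Thm.\ 4.3.1; you cite Kechris and sketch the Choquet-capacity argument behind it). The two formulations are interchangeable here, so there is no substantive difference.
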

\begin{proof}
    This is a standard result in the theory of Analytic sets, see, e.g., Theorem 4.3.1 in \cite{srivastava2008course}. In fact, any open set $A\subseteq X$ is Borel measurable in $X$. Thus, by very definition, $f(A)$ is an analytic set in $Y$. Since $\mathscr{M}$ contains all Borel sets of $Y$ and it is also $\mathbb{P}$-complete, it follows that $f(A)\in\mathscr{M}$. 
\end{proof}

\begin{corollaryC}
    \label{cor:selection0}
    Let $f:(X,d_{X})\to(Y,d_{Y})$ be a continuous map between two Polish spaces. Let $\mathscr{M}$ be the $\mathbb{P}$-completion of the Borel $\sigma$-field defined over $Y$, where $\mathbb{P}$ is a given probability distribution. Then, $f$ admits an $\mathscr{M}$-measurable right-inverse, that is, a measurable map $f^{-1}:Y\to X$ for which
    $$f(f^{-1}(y))=y\quad\quad\forall y\in f(X).$$
\end{corollaryC}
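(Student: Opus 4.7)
The plan is to combine the two preceding lemmas with a classical measurable selection theorem. First, since $f$ is continuous between Polish spaces, Lemma \ref{lemma:analytic} guarantees that $f$ maps open sets of $X$ onto $\mathscr{M}$-measurable sets of $Y$ (here $\mathscr{M}$ is the $\mathbb{P}$-completion of the Borel $\sigma$-field of $Y$, which contains all analytic sets). This is precisely condition (i) in Lemma \ref{lemma:measurability}.

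Next, I would invoke Lemma \ref{lemma:measurability} to deduce that the set-valued map $F:Y\to 2^{X}$, defined by $F(y):=\{x\in X\;|\;f(x)=y\}$ for $y\in f(X)$ and $F(y):=X$ otherwise, is a measurable set-valued map with closed values in the Polish space $X$. At this stage the problem has been translated into one of measurable selection: producing $f^{-1}$ amounts to selecting, for each $y\in Y$, a single point in $F(y)$ in a measurable fashion.

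To carry out the selection, I would apply the Kuratowski--Ryll-Nardzewski theorem, whose hypotheses (a measurable, closed-valued, nonempty set-valued map into a Polish space) are exactly what Lemma \ref{lemma:measurability} has just delivered. This yields a measurable map $f^{-1}:Y\to X$ such that $f^{-1}(y)\in F(y)$ for every $y\in Y$. By construction, if $y\in f(X)$ then $F(y)=\{x\,:\,f(x)=y\}$, so $f(f^{-1}(y))=y$; and for $y\notin f(X)$ there is nothing to check since the statement only demands the right-inverse identity on the image of $f$.

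The main (and essentially only) obstacle is the analyticity step baked into Lemma \ref{lemma:analytic}: without completeness of $\mathscr{M}$ with respect to $\mathbb{P}$, the image $f(A)$ of a Borel set need not be Borel, and the measurable set-valued map would fail to be well defined in the Borel $\sigma$-field alone. Everything else is a clean application of set-valued analysis, and no control on the \emph{geometry} of the selection is required, since the corollary asks only for measurability of $f^{-1}$ and not, e.g., continuity or boundedness.
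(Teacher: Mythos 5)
Your proposal is correct and follows essentially the same route as the paper's own proof: both invoke Lemma \ref{lemma:analytic} to get $\mathscr{M}$-measurability of images of open sets, pass that through Lemma \ref{lemma:measurability} to obtain a measurable closed-valued map $F$, and then apply the Kuratowski--Ryll-Nardzewski selection theorem to extract the measurable right-inverse. Your closing remark about where the completion of the $\sigma$-field is genuinely needed is a nice touch but does not change the substance of the argument.
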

\begin{proof}
    The nontrivial part of the statement lies in the measurability of $f^{-1}$, as the existence of a generic right-inverse is already guaranteed by the Axiom of Choice. Let $F:Y\to 2^{X}$ be as in \eqref{eq:Fmap}, so that, according to Lemmae \ref{lemma:measurability} and \ref{lemma:analytic}, $F$ is a measurable set-valued map. Then, a famous result by Kuratowski and Ryll-Nardzewski states that $F$ admits a measurable selection (see, e.g., Theorem 8.1.3 and Definition 8.1.1 in \cite{aubin2009set}). That is, there exists a measurable map $g:Y\to X$ such that $g(y)\in F(y)$ for all $y\in Y$. It is straightforward to see that such map retains all the desired properties: in fact, for all $y\in f(X)$ the condition $g(y)\in F(y)$ implies $f(g(y))=y.$
\end{proof}

\begin{corollaryC}
    \label{corollary:selection}
    Let $f:(X,d_{X})\to(Y,d_{Y})$ be a continuous map between two Polish spaces. Let $\mathscr{M}$ be the $\mathbb{P}$-completion of the Borel $\sigma$-field defined over $Y$, where $\mathbb{P}$ is a given probability distribution. Let $C\subseteq X$ be a closed subset. Then, there exists an $\mathscr{M}$-measurable map $f^{-1}:Y\to X$ such that 
    $$f(f^{-1}(y))=y\quad\forall y\in f(X),\quad\textnormal{and}\quad f^{-1}(f(C))\subseteq C.$$
\end{corollaryC}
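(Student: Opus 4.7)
The plan is to assemble $f^{-1}$ by gluing two measurable selections: one that takes care of the trace on $f(C)$ and another that handles the rest of $f(X)$. First, I would apply Corollary \ref{cor:selection0} directly to $f:X\to Y$, which provides a measurable right-inverse $g:Y\to X$ with $f(g(y))=y$ for every $y\in f(X)$.

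Next, I would exploit the fact that a closed subset of a Polish space is itself Polish: equipping $C\subseteq X$ with the restricted metric $d_X|_C$ yields a Polish space, and the restriction $f|_C:C\to Y$ remains a continuous map between Polish spaces. A second application of Corollary \ref{cor:selection0}, this time to $f|_C$, produces a measurable map $g_C:Y\to C$ satisfying $f(g_C(y))=y$ for every $y\in f(C)$; the crucial point is that the values of $g_C$ lie in $C$ by construction, and $g_C$ is also measurable as a map into the ambient $X$ since the Borel $\sigma$-field of $C$ is the trace of the Borel $\sigma$-field of $X$.

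Before gluing, I need to check that $f(C)\subseteq Y$ belongs to $\mathscr{M}$. Since $C$ is closed in $X$, it is in particular a Borel subset. Repeating the argument used in the proof of Lemma \ref{lemma:analytic}, the continuous image $f(C)$ is then an analytic subset of $Y$, and therefore lies in the $\mathbb{P}$-completion $\mathscr{M}$ of the Borel $\sigma$-field.

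Finally, I would define
$$f^{-1}(y):=\begin{cases} g_C(y) & \textnormal{if } y\in f(C),\\ g(y) & \textnormal{otherwise.}\end{cases}$$
Measurability of $f^{-1}$ follows from that of $g$, $g_C$, and the measurable set $f(C)$. The identity $f(f^{-1}(y))=y$ for $y\in f(X)$ holds branch by branch, and if $y\in f(C)$ then $f^{-1}(y)=g_C(y)\in C$, so $f^{-1}(f(C))\subseteq C$. The only mildly subtle point in this plan is verifying $f(C)\in\mathscr{M}$, which is precisely the reason the statement is framed in terms of the $\mathbb{P}$-completion of the Borel $\sigma$-field rather than Borel sets alone.
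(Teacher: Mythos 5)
Your proposal is correct and follows essentially the same route as the paper: apply Corollary~\ref{cor:selection0} once to $f$ on all of $X$ and once to $f|_C$ on the Polish subspace $C$, then glue the two selections along $f(C)$. You go one step further than the paper by explicitly verifying that $f(C)\in\mathscr{M}$ via the analytic-set argument of Lemma~\ref{lemma:analytic}; the paper leaves this implicit, but it is indeed needed for the glued map to be measurable, so your added check is a genuine improvement in rigor rather than a detour.
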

\begin{proof}
    In agreement with Corollary \ref{cor:selection0}, let $g_{0}:Y\to X$ be an $\mathscr{M}$-measurable right-inverse of $f$. Consider the metric subspace $(C,d)\subseteq(X,d)$. Since $C$ is closed, $(C,d)$ is a Polish space. Thus, we may exploit Corollary \ref{cor:selection0} once again to construct an $\mathscr{M}$-measurable map $g_{1}:Y\to C$ that operates as a right-inverse of $f_{|C}$, the restriction of $f$ to $C$. Define $g:Y\to X$ as
    $$g(y):=g_{1}(y)\cdot\mathbbm{1}_{f(C)}(y)+g_{0}(y)\cdot\mathbbm{1}_{Y\setminus f(C)}(y).$$
    Then $f^{-1}:=g$ fulfills all the requirements.
\end{proof}

\begin{lemmaC}
    \label{lemma:optselec}
    Let $(X,d_{X})$ be a Polish space and let $(C,d_{C})$ be a compact metric space. Let $J:X\times C\to \mathbb{R}$ be continuous. Then, there exists a Borel measurable map $f:X\to C$ such that
    $$J(x,f(x))=\min_{c\in C}J(x,c)$$
    for all $x\in X$.    
\end{lemmaC}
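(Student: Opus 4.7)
The plan is to apply the Kuratowski--Ryll-Nardzewski measurable selection theorem (cited in \cite{aubin2009set}) to the set-valued map that assigns to each $x\in X$ the set of minimizers of $J(x,\cdot)$ over $C$.

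First, I would establish that $m(x):=\min_{c\in C}J(x,c)$ is well-defined and continuous. Well-definedness follows from compactness of $C$ and continuity of $J(x,\cdot)$. For continuity, given $x_n\to x$, pick $c_n\in\argmin_{c\in C}J(x_n,c)$; by compactness, a subsequence $c_{n_k}\to c^{*}\in C$, and continuity of $J$ yields $m(x_{n_k})=J(x_{n_k},c_{n_k})\to J(x,c^{*})\ge m(x)$. Conversely, for any fixed $c^{*}_x$ attaining $m(x)$, one has $m(x_n)\le J(x_n,c^{*}_x)\to m(x)$, so $m(x_n)\to m(x)$.

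Next, I would define the set-valued map $F:X\to 2^{C}$ by $F(x):=\{c\in C:J(x,c)=m(x)\}$. It is nonempty (by the attainment argument above) and closed-valued (as the preimage of $\{0\}$ under the continuous map $c\mapsto J(x,c)-m(x)$), so condition (a) of Definition~\ref{def:setvalued} is met. The crucial step is verifying condition (b): for every open $A\subseteq C$, the set $\mathcal{S}_A=\{x:F(x)\cap A\neq\emptyset\}$ must be Borel. To this end, I would exploit the fact that the graph $\Gamma:=\{(x,c)\in X\times C:J(x,c)-m(x)=0\}$ is closed in $X\times C$, and that every open subset of the compact metric space $C$ is $\sigma$-compact: $A=\bigcup_n K_n$ with each $K_n\subseteq C$ compact. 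Then
\[
\mathcal{S}_A=\bigcup_n \pi_X\!\bigl(\Gamma\cap(X\times K_n)\bigr),
\]
and each projection $\pi_X(\Gamma\cap(X\times K_n))$ is closed in $X$, since the projection $X\times K_n\to X$ is a closed map when its fibers are compact. Hence $\mathcal{S}_A$ is $F_\sigma$, in particular Borel.

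Having shown that $F$ is a closed-valued measurable set-valued map into the Polish space $C$, the Kuratowski--Ryll-Nardzewski theorem (Theorem 8.1.3 in \cite{aubin2009set}) furnishes a Borel-measurable selection $f:X\to C$ with $f(x)\in F(x)$ for every $x\in X$, which is exactly the claimed optimizer. The main obstacle is the measurability of $F$: the argument combining closedness of the graph with the closed-projection property for compact fibers is the cleanest route, and it sidesteps the more delicate analytic-set machinery invoked in Corollary~\ref{cor:selection0}, where only a measurable right-inverse (rather than an optimal selection) was required.
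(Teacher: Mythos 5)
Your proposal is correct and follows essentially the same strategy as the paper's proof: establish that the argmin map $F$ is a closed-valued measurable set-valued map (by showing $\mathcal{S}_K$ is closed for compact $K$ and then using $\sigma$-compactness of open subsets of $C$), then invoke Kuratowski--Ryll-Nardzewski. The only cosmetic difference is that you route the closedness of $\mathcal{S}_K$ through the continuity of the value function $m$, closedness of the graph $\Gamma$, and the closed-projection property for compact fibers, whereas the paper verifies closedness of $\mathcal{S}_K$ directly with a convergent-subsequence argument; these are two packagings of the same compactness idea.
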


\begin{proof} 
First of all, we note that the statement is well-defined as all the minima are attained by compactness of $C$ and continuity of $J$. Let now $F:X\to 2^{C}$ be the following set-valued map
$$F:x\to\left\{c\in C\;\text{such that}\;J(x,c)=\min_{c'\in C}J(x,c')\right\},$$
that assigns a (nonempty) subset of $C$ to each $x\in X$. We aim at showing that $F$ is a measurable set-valued map. To this end, we start by noting that $F(x)$ is closed in $C$ for all $x\in X$. To see this, fix any $x\in X$ and let $j_{x}:=J(x,\cdot)$, so that $j_{x}:C\to\mathbb{R}$ is continuous. Then,
$$F(x)=j_{x}^{-1}\left(\left\{\min_{c'\in C}j_{x}(c')\right\}\right)$$
is closed as it is the pre-image of a singleton under a continuous transformation. 

Conversely, we now claim that, for any compact subset $K\subseteq C$, the set
$$\mathcal{S}_{K}:=\{x\in X\;:\;F(x)\cap K\neq\emptyset\}$$
is closed. Indeed, let $\{x_{n}\}_{n}\subseteq \mathcal{S}_{K}$ be a sequence converging to some $x\in X$. By definition of $\mathcal{S}_{K}$, for each $x_{n}$ there exists a $c_{n}\in K$ such that $c_{n}\in F(x_{n})$, i.e. for which $J(x_{n},c_{n})=\min_{c'} J(x_{n},c')$. Since $K$ is compact, up to passing to a subsequence, there exists some $c\in K$ such that $c_{n}\to c$. Let now $\tilde{c}\in C$ be a minimizer for $x$, i.e. a suitable element for which $J(x,\tilde{c})=\min_{c'\in C}J(x,c')$. By continuity, we have
\begin{equation*}    J(x,c)=\lim_{n\to+\infty}J(x_{n},c_{n})=\lim_{n\to+\infty}\min_{c' \in C}J(x_{n},c')\le\lim_{n\to+\infty}J(x_{n},\tilde{c})=J(x,\tilde{c}),
\end{equation*}
implying that $c$ is also a minimizer for $x$. As a consequence, we have $c\in K\cap F(x)$ and thus $x\in \mathcal{S}_{K}$. In particular, $\mathcal{S}_{K}$ is closed. It is now straightforward to prove that $\mathcal{S}_{A}$ is Borel measurable whenever $A\subseteq C$ is open. In fact, any open set $A\subseteq C$ can be written as the countable union of compact sets, $A=\cup_{n\in\mathbb{N}}K_{n}$, and clearly $\mathcal{S}_{A}=\cup_{n}\mathcal{S}_{K_{n}}.$

We have then proven that $F$ is a measurable set-valued map. In particular, we may now invoke the measurable selection theorem by Kuratowski–Ryll-Nardzewski  \cite{aubin2009set}, which ensures the existence of a measurable map $f:X\to C$ such that $f(x)\in F(x)$, i.e. $J(x,f(x))=\min_{c\in C}J(x,c)$, as wished.    
\end{proof}

\section{Architectures and training details}
\label{sec:architectures}

We report in this Section the technical details concerning the design of the autoencoder modules (Tables \ref{tab:design1}-\ref{tab:design2}) and the training of the DL-ROMs (Table \ref{tab:training}). As for the latter, Tables \ref{tab:phi1}-\ref{tab:phi2} complete the picture, with a description of the arcitectures employed for reduced map, $\phi$.

We mention that, both for the case of Darcy flow in a porous medium and Burger's equation, we exploited a combination of classical dense layers together with mesh-informed layers. The latter are a particular class of sparse architectures first introduced in \cite{franco2023mesh} as a way to handle mesh-based functional data. In short, they are obtained through sparsification of dense architectures, which is achieved by means of a mesh-dependent pruning strategy: for further details, we refer the reader to \cite{franco2023mesh}. As for our purposes, it is sufficient to know that mesh-informed layers are characterized by a \textit{support} hyperparameter: the smaller the support, the sparser the architecture; for large supports, the module collapses to a classical dense layer. 

\begin{table}[h!]
    \centering
    \begin{tabular}{lllll}
    \hline
        \bf Layer \# & \bf Type & \bf Input dim. & \bf Output dim. & \bf Activation\\
        \hline\hline
        1 & Dense & 2601 & 500 & $\rho$\\
        2 & Dense & 500 & $n$ & $\rho$\\\hdashline
        3 & Dense & $n$ & 500 & $\rho$\\
        4 & Dense & 500 & 2601 & -
        \\\hline\\
    \end{tabular}
    \caption{Autoencoder architecture for Darcy's law example,  §\ref{subsec:diff}. The encoder and decoder modules, here presented together, are divided by a dashed line. $n$ = latent dimension; $\rho$ = 0.1-leakyReLU.
    }
    \label{tab:design1}
\end{table}

\begin{table}[h!]
    \centering
    \begin{tabular}{lllll}
    \hline
        \bf Layer \# & \bf Type & \bf Input dim. & \bf Output dim. & \bf Activation\\
        \hline\hline
        1 & Dense & 500 & $n$ & $\rho$\\\hdashline
        2 & Dense & $n$ & 200 & $\rho$\\
        3 & Dense & 200 & 500 & $\tilde{\rho}$
        \\\hline\\
    \end{tabular}
    \caption{Autoencoder architecture for Burger's example,  §\ref{subsec:burger}. Entries read as in Table \ref{tab:design1}; $\tilde{\rho}(x):=\rho(0.5-\rho(0.5-x)).$
    }
    \label{tab:design2}
\end{table}

\begin{table}[h!]
    \centering
    \begin{tabular}{llllll}
    \hline
        \bf Layer \# & \bf Type & \bf Input dim. & \bf Output dim. & \bf Support & \bf Activation\\
        \hline\hline
        1 & Mesh-informed & 2601 & 676 & 0.125 & tanh\\
        2 & Mesh-informed & 676 & 169 & 0.25 & $\rho$\\
        3 & Dense & 169 & $n$ & - & $\rho$\\\hline\\
    \end{tabular}
    \caption{Architecture of the reduced map network, $\phi$, for Darcy's law example,  §\ref{subsec:diff}. Mesh-informed layers are constructed by relying on uniform structured grids (with as many dofs as declared in the input-output dimensions). $n$ = latent dimension; $\rho$ = 0.1-leakyReLU.
    }
    \label{tab:phi1}
\end{table}

\begin{table}[h!]
    \centering
    \begin{tabular}{llllll}
    \hline
        \bf Layer \# & \bf Type & \bf Input dim. & \bf Output dim. & \bf Support & \bf Activation\\
        \hline\hline
        1 & Mesh-informed & 500 & 250 & 0.25 & $\rho$\\
        2 & Mesh-informed & 250 & 125 & 0.5 & $\rho$\\
        3 & Dense & 125 & $n$ & - & $\rho$\\\hline\\
    \end{tabular}
    \caption{Architecture of the reduced map network, $\phi$, for Burger's example,  §\ref{subsec:burger}. Mesh-informed layers are constructed by relying on uniform grids (with as many dofs as declared in the input-output dimensions). Entries read as in Table \ref{tab:phi1}.
    }
    \label{tab:phi2}
\end{table}

\begin{table}[h!]
    \centering
    \begin{tabular}{llllll}
    \hline
         \bf Problem &   $\alpha_{1}$ & $\alpha_{2}$ & $\alpha_{3}$ & \bf Training time & \bf Epochs\\
         \hline\hline
         Darcy's law  §\ref{subsec:diff}& 1/5 & 1/5 & 1/16 & 
         7m 20.17s & 300 \\
         Burger's  §\ref{subsec:burger} & rel & 1 & 1/16 & 2m 22.40s & 300\\
         \hline\\
    \end{tabular}
    \caption{Training of the DL-ROMs: technical details. The $\alpha_{j}$'s define the loss function as in Eq. \eqref{eq:loss}; "rel" means that the term $\alpha_{j}\|\y_{i}-\hat{\y}_{i}\|^{2}$ is replaced with a relative error $\|\y_{i}-\hat{\y}_{i}\|/\|\y_{i}\|$.}
    \label{tab:training}
\end{table}

\end{appendices}


\bibliography{sn-bibliography}

\end{document}